\documentclass[runningheads]{llncs}

\usepackage{eccv}

\usepackage{eccvabbrv}

\usepackage{graphicx}
\usepackage{booktabs}

\usepackage[accsupp]{axessibility}  

\usepackage{amsmath}
\usepackage{amssymb}
\usepackage{mathtools}
\usepackage{amsfonts}
\usepackage{multirow,graphicx}
\usepackage{makecell}    
\usepackage{array}       
\usepackage{makecell}
\usepackage{subcaption}
\usepackage{adjustbox}
\usepackage{enumitem}

\usepackage{wrapfig}

\usepackage{amsfonts,bm}

\def\eqref#1{equation~\ref{#1}}

\def\1{\bm{1}}

\def\vtheta{{\bm{\theta}}}

\def\ve{{\bm{e}}}

\def\vs{{\bm{s}}}

\def\vx{{\bm{x}}}

\def\mE{{\bm{E}}}

\def\mH{{\bm{H}}}
\def\mI{{\bm{I}}}

\def\mM{{\bm{M}}}

\def\mP{{\bm{P}}}

\def\mW{{\bm{W}}}
\def\mX{{\bm{X}}}

\DeclareMathAlphabet{\mathsfit}{\encodingdefault}{\sfdefault}{m}{sl}
\SetMathAlphabet{\mathsfit}{bold}{\encodingdefault}{\sfdefault}{bx}{n}

\usepackage{hyperref}

\usepackage{orcidlink}

\makeatletter
\DeclareRobustCommand\onedot{\futurelet\@let@token\bmv@onedotaux}
\def\bmv@onedotaux{\ifx\@let@token.\else.\null\fi\xspace}
\def\eg{\emph{e.g}\onedot}

 \def\vs{\emph{vs}\onedot}

\makeatother

\begin{document}

\title{Structured Hyperedge Adaptation for Parameter-Efficient Fine-Tuning of Vision Transformers 
} 

\titlerunning{HyperAdapter: Structured Hyperedge Adaptation for ViTs
}

\author{Edwin Kwadwo Tenagyei\inst{1}\thanks{These authors contributed equally and are co-first authors.}\orcidlink{0009-0003-3045-4935} \and
Lei Wang\inst{1, 2}$^\star$\orcidlink{0000-0002-8600-7099} \and
Ugochukwu Ejike Akpudo\inst{1}\orcidlink{0000-0003-4221-5192} \and 
Jun Zhou\inst{1}\orcidlink{0000-0001-5822-8233} \and
Yongsheng Gao\inst{1}\thanks{Corresponding author: yongsheng.gao@griffith.edu.au (Yongsheng Gao)}\orcidlink{0000-0002-5382-5351}
}

\authorrunning{E.K.~Tenagyei et al.}

\institute{Griffith University, Nathan QLD 4111, Australia \and
Data61/CSIRO, Canberra ACT 2601, Australia
}

\maketitle

\begin{abstract}
  Parameter-efficient fine-tuning (PEFT) has become a practical solution for adapting large pretrained vision transformers (ViTs) to downstream tasks while updating only a small subset of parameters. However, existing adapter-based methods perform adaptation independently for each token, implicitly assuming that token refinements should be learned in isolation. This token-wise formulation overlooks the structured relationships among tokens that naturally arise in visual scenes, potentially leading to redundant updates and spatially inconsistent feature refinement.
  In this work, we revisit the design of parameter-efficient adapters and propose to perform adaptation in hyperedge space rather than token space. We introduce HyperAdapter, a hypergraph-based adapter architecture that enables structured, group-aware adaptation through soft token routing. HyperAdapter constructs a soft hypergraph over ViT tokens using prototype-based assignments, aggregates token features into latent hyperedge representations, applies lightweight bottleneck adaptation at the hyperedge level, and diffuses the resulting updates back to tokens via the hypergraph incidence structure. This design injects an explicit structural inductive bias into PEFT while preserving the modularity and efficiency of standard adapters.
  Extensive experiments across diverse visual benchmarks demonstrate that structured hyperedge adaptation consistently outperforms strong PEFT baselines under comparable parameter budgets, with particularly pronounced gains on tasks requiring structured reasoning. Our results suggest that the choice of adaptation space is a critical yet underexplored dimension in parameter-efficient transfer for ViTs.
  \keywords{Parameter-efficient fine-tuning \and Vision transformers \and Hypergraph learning \and Adapter networks \and Structured token adaptation}
\end{abstract}

\section{Introduction}

Large pretrained vision transformers (ViTs)~\cite{vit,swin,swinv2,mae,deit} have become the backbone of modern visual recognition systems, delivering remarkable performance across diverse tasks and domains. However, adapting these large-scale models to new datasets through full fine-tuning remains computationally expensive and memory intensive, particularly when multiple downstream tasks need to be supported. Parameter-efficient fine-tuning (PEFT) methods~\cite{vpt,adapter,lora,ssf} address this challenge by updating only a small subset of parameters while keeping the pretrained backbone frozen. Among them, adapter-based approaches~\cite{adapter,adaptformer,compacter,convpass} have emerged as a practical and modular solution, inserting lightweight trainable modules into transformer blocks to enable efficient transfer.

Despite their empirical success, existing adapter designs share a largely overlooked assumption: adaptation is performed independently for each token. In standard formulations, token representations are refined through low-rank or bottleneck transformations applied identically and independently across tokens. Although the frozen self-attention layers encode contextual interactions, the adaptation mechanism itself remains token-wise. This design implicitly assumes that feature refinement should occur in token space, without explicitly modeling the structured relationships that naturally arise among tokens in visual scenes. In practice, image tokens often correspond to coherent regions such as objects, parts, or semantic components. Ignoring such higher-order structure during adaptation may lead to redundant updates, spatially inconsistent feature refinement, and limited utilization of relational information.

In this work, we revisit the design principle of parameter-efficient adapters and argue that \emph{the choice of adaptation space} is a critical yet underexplored dimension in PEFT. Instead of refining tokens independently, we propose to perform adaptation in a structured interaction space that explicitly captures higher-order token relationships. To this end, we introduce HyperAdapter, a hypergraph-based adapter architecture that operates in hyperedge space rather than token space.
HyperAdapter constructs a soft hypergraph over ViT patch tokens using prototype-based routing, where each token is softly assigned to a small set of latent hyperedges according to representation similarity. These hyperedges serve as structured groups that aggregate information from multiple related tokens. Token features are first pooled into compact hyperedge representations, which are then refined through a lightweight bottleneck adapter at the hyperedge level. The adapted hyperedge features are subsequently diffused back to token representations via the hypergraph incidence structure. This hyperedge-level adaptation introduces an explicit structural inductive bias, encouraging coherent and group-aware feature updates while preserving the efficiency and modularity of standard adapters.

Importantly, HyperAdapter is a drop-in replacement for conventional adapters. It requires no modification to the pretrained backbone, introduces only a modest number of additional parameters, and incurs minimal computational overhead relative to the frozen self-attention layers. Moreover, we show that token-wise adapters arise as a special case of our formulation, demonstrating that HyperAdapter generalizes standard designs while retaining their favorable properties, including permutation equivariance and low-rank adaptation structure.
We evaluate HyperAdapter on 24 downstream tasks spanning diverse visual domains under parameter-efficient settings. Across multiple backbones and benchmarks, HyperAdapter consistently outperforms strong adapter-based baselines and other PEFT methods under comparable parameter budgets, with particularly notable gains on tasks requiring structured reasoning. These results indicate that modeling structured token interactions during adaptation provides a simple yet effective mechanism for enhancing parameter-efficient transfer in vision transformers.
Our main \textbf{contributions} are summarized as follows:
\renewcommand{\labelenumi}{\roman{enumi}.}
\begin{enumerate}[leftmargin=0.5cm]
    \item We revisit the design of parameter-efficient adapters and identify the adaptation space as a critical dimension, highlighting the limitations of token-wise refinement.
    \item We propose HyperAdapter, a hypergraph-based adapter architecture that performs structured adaptation in hyperedge space via prototype-based token grouping.
    \item We develop a group-level adaptation mechanism that aggregates token representations into hyperedges, applies lightweight bottleneck refinement at the hyperedge level, and diffuses structured updates back to tokens.
    \item We conduct extensive experiments across 24 downstream tasks and multiple transformer backbones, demonstrating consistent and parameter-efficient improvements over strong PEFT baselines.
\end{enumerate}

\section{Related Work}

\textbf{PEFT for ViTs.} The rapid scaling of ViTs~\cite{vit,swin,swinv2,deit,mae} has significantly increased the computational and memory cost of adaptation. Full fine-tuning updates all model parameters and typically achieves strong performance, but becomes impractical when deploying models across many downstream tasks. PEFT methods aim to reduce adaptation cost by updating only a small subset of parameters while keeping the pretrained backbone largely frozen.
Existing PEFT approaches for ViTs can be broadly grouped into several families. 
Prompt-based methods~\cite{vpt,e2vpt,self-supervised-vpt,lion,sa2vp,da-vpt} introduce learnable prompt tokens that interact with the self-attention mechanism to steer task adaptation without modifying backbone weights. 
Adapter-based methods~\cite{adapter,adaptformer,compacter,convpass,arc,householder,repadapter} insert lightweight bottleneck modules inside transformer blocks to refine intermediate representations. 
Selective tuning methods~\cite{bitfit,gps,gps-compact} update only specific existing parameters, such as biases or carefully chosen weight subsets. 
Reparameterization-based methods~\cite{lora,kharao,parameter,sine,fact,BOFT,goft,ssf,orthogonal_subspace} introduce low-rank or structured weight decompositions that can be merged into pretrained weights at inference time. 
Hybrid strategies~\cite{noah,vpeft} combine multiple PEFT paradigms to improve flexibility and expressiveness.
Despite their methodological differences, most existing PEFT approaches share a common design principle: \emph{adaptation is performed independently for each token}. In adapter-based and low-rank methods, the same transformation is applied token-wise to all patch embeddings. Prompt-based methods alter attention interactions but still refine token features individually after attention mixing. None of these approaches explicitly model structured group-level interactions during adaptation. 

Our work departs from this paradigm by revisiting the \emph{adaptation space} itself. Rather than refining token embeddings independently, we perform adaptation in a structured hyperedge space that aggregates and refines groups of tokens jointly. This design introduces an explicit relational inductive bias during parameter-efficient fine-tuning, while remaining fully compatible with standard adapter formulations. Importantly, conventional token-wise adapters arise as a special case of our formulation when each hyperedge contains a single token, showing that our method strictly generalizes existing adapter designs.

\textbf{Structured adaptation and token interaction modeling.} Several recent works have explored modeling structured relationships within ViTs. Self-attention inherently captures pairwise token interactions, and extensions such as dynamic routing or token clustering have been proposed to enhance relational reasoning. In PEFT settings, some approaches attempt to improve adaptation by modifying attention maps or introducing task-specific routing strategies. However, these methods typically alter attention behavior within the backbone rather than redefining where adaptation occurs.
More closely related are approaches that incorporate grouping or clustering mechanisms into transformer processing. For instance, graph-based vision transformers~\cite{vig,mobilevig,greedyvig} construct graphs over image patches to model spatial relationships, while token clustering methods aggregate patch tokens to reduce redundancy or improve efficiency. Nevertheless, these works focus on backbone architecture design or inference efficiency, not PEFT. They modify the transformer’s core computation rather than introducing a structured adaptation module on top of a frozen backbone.

In contrast, our method leaves the pretrained transformer unchanged and introduces structured modeling within the adapter module. The hypergraph construction in HyperAdapter serves as a task-adaptive grouping mechanism used solely for adaptation, without altering the underlying attention layers. This separation allows us to inject higher-order structure into PEFT while preserving the modularity and deployment advantages of adapter-based methods.

\textbf{Graph and hypergraph learning.} GNNs~\cite{inductive,collective,chemical} were originally developed for relational data and have since been extended to vision tasks~\cite{vig,mobilevig,greedyvig}. Vision Graphs (ViGs)~\cite{vig} model image patches as graph nodes connected through learned or predefined edges, enabling relational reasoning beyond grid-based convolutions. To reduce the cost of dynamic graph construction, later works such as MobileViG\cite{mobilevig} and GreedyViG\cite{greedyvig} adopt static or simplified graph structures.
Hypergraphs extend standard graphs by allowing hyperedges to connect more than two nodes, enabling higher-order relational modeling. Hypergraph-based methods have been explored for visual tasks including 3D understanding and video modeling~\cite{3d,video-hypergraph,video-multilevel}, and more recently integrated into vision GNNs~\cite{vignn,visionhgnn,hypergraph} to capture complex multi-way relationships.

However, existing graph and hypergraph approaches primarily target backbone design or relational feature learning from scratch. They typically replace or augment convolutional or transformer layers with graph-based computation. In contrast, our work operates in a fundamentally different regime. We do not redesign the backbone nor introduce heavy graph convolution layers. Instead, we use a soft hypergraph constructed over frozen token embeddings to perform structured adaptation within a bottleneck module. 
To our knowledge, this is the first work that formulates PEFT as hyperedge-level adaptation in a structured interaction space. By combining hypergraph modeling with PEFT, HyperAdapter bridges relational representation learning and efficient model adaptation in a unified and modular framework.

\begin{figure}[tbp]
  \centering
  \includegraphics[width=0.90\linewidth, keepaspectratio]{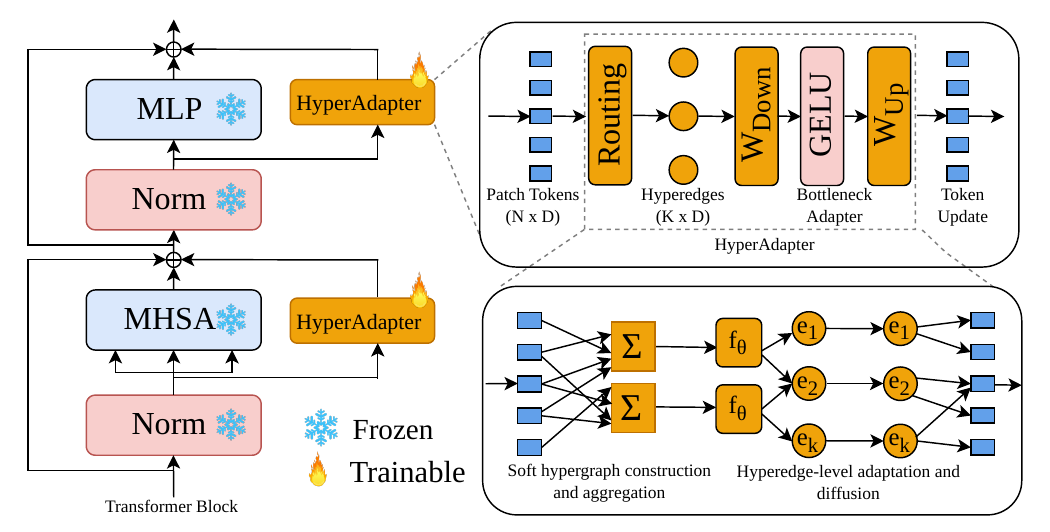} 
  \vspace{-0.3cm}
  \caption{HyperAdapter framework. We introduce HyperAdapter, a parameter-efficient adaptation module that operates in a structured interaction space. Given patch tokens from a frozen vision transformer, a routing mechanism softly groups tokens into hyperedges, capturing higher-order relationships among visually related regions. Each hyperedge aggregates information from multiple tokens and is refined using a lightweight bottleneck adapter, enabling group-level adaptation rather than independent token updates. The refined hyperedge features are then diffused back to tokens, producing structured and coherent feature updates. By shifting adaptation from individual tokens to token groups, HyperAdapter injects relational inductive bias while maintaining the efficiency and modularity of standard PEFT methods. 
  }
  \label{fig:pipeline}
  \vspace{-0.5cm}
\end{figure}

\section{Method}



We now introduce our method. We begin by revisiting the adaptation space. 

\subsection{Revisiting the Adaptation Space}

Let $f_{\vtheta}$ denote a pretrained ViT with frozen parameters $\vtheta$.
For an input image, the transformer produces token embeddings $\mX \!=\! [\vx_{\mathrm{cls}}, \vx_1, \ldots, \vx_N] \!\in\! \mathbb{R}^{(N+1)\times D}$,
where $\vx_i \!\in\! \mathbb{R}^{D}$ are patch tokens and $D$ is the hidden dimension.
In standard 
PEFT, each token is refined independently via a bottleneck transformation:
\begin{equation}
\Delta \vx_i = \mW_\text{up}\sigma(\mW_\text{down} \vx_i), \qquad r \ll D.
\end{equation}
This formulation implicitly assumes that adaptation occurs in \emph{token space}, where each token is updated in isolation.
We argue that this token-wise adaptation is inherently limiting. Visual tokens often correspond to coherent regions such as objects, object parts, or semantic components, naturally forming higher-order structures. Ignoring these relationships can lead to redundant updates, spatially inconsistent refinements, and underutilization of relational information encoded in the visual scene.
To overcome this limitation, we propose performing adaptation in a \emph{structured interaction space}, where groups of related tokens are refined jointly. By shifting the adaptation focus from individual tokens to token groups, this perspective enables parameter-efficient updates that respect the intrinsic structure of visual inputs, yielding more coherent and semantically consistent feature refinements. We next describe our hyperedge-space adaptation.

\subsection{Hyperedge-Space Adaptation}

We introduce HyperAdapter (Fig. \ref{fig:pipeline}), a parameter-efficient adaptation mechanism that operates in \emph{hyperedge space} rather than directly on individual tokens. Unlike conventional token-wise adapters, which refine each token independently, HyperAdapter first organizes tokens into structured groups and performs adaptation at the group level. This design introduces an explicit relational inductive bias, enabling updates that respect higher-order relationships among tokens while preserving the modularity and efficiency of standard adapters.

Formally, let $\mX_p = [\vx_1, \ldots, \vx_N] \in \mathbb{R}^{N \times D}$ denote the patch token embeddings produced by the frozen ViT. To capture structured interactions among tokens, we introduce $K$ learnable prototype vectors
\begin{equation}
\mE = [\ve_1, \ldots, \ve_K]^\top \in \mathbb{R}^{K \times D},
\end{equation}
which serve as latent hyperedges representing compact groups of tokens.\footnote{The learnable prototypes are Xavier-initialized and jointly optimized with adapter parameters.} Each token is softly assigned to these hyperedges based on representation similarity using temperature-scaled cosine routing:
\begin{equation}
\mM_{ik} = \frac{\exp\left(\langle \hat{\vx}_i, \hat{\ve}_k \rangle / \tau \right)}{\sum_{j=1}^{K} \exp\left(\langle \hat{\vx}_i, \hat{\ve}_j \rangle / \tau \right)},
\end{equation}
where $\mM \in \mathbb{R}^{N \times K}$ is the soft incidence matrix and $\tau>0$ controls the sharpness of assignments. This construction effectively models token-to-hyperedge relationships, allowing each token to contribute to multiple groups in a differentiable manner.
Once the hypergraph is defined, each hyperedge representation is computed as a normalized weighted average of the tokens assigned to it:
\begin{equation}
\mH = (\mM^\top \mX_p) \oslash (\mM^\top \mathbf{1}), \qquad \mH \in \mathbb{R}^{K \times D},
\end{equation}
where $\oslash$ denotes row-wise normalization. Each hyperedge thus aggregates information from multiple related tokens, forming a compact group-level representation that captures higher-order context.
Adaptation is performed in hyperedge space using a lightweight bottleneck module:
\begin{equation}
\Delta \mH = \mW_\text{up}\sigma(\mW_\text{down} \mH),
\end{equation}
where $\mW_\text{down} \!\in\! \mathbb{R}^{r \times D}$ and $\mW_\text{up} \!\in\! \mathbb{R}^{D \times r}$. Operating on hyperedges rather than individual tokens enables 
group-level feature refinement, allowing relational information to propagate across tokens within each group.
Finally, the hyperedge updates are diffused back to token space through the soft incidence matrix:
\begin{equation}
\Delta \mX = \mM \Delta \mH, \qquad \mX_p' = \mX_p + \alpha \Delta \mX,
\end{equation}
where $\alpha$ is a learnable scaling parameter. This diffusion ensures that each token receives structured updates informed by the hyperedges it belongs to. The classification token remains unchanged throughout this process.

HyperAdapter introduces structured, group-aware adaptation while maintaining the efficiency and modularity of standard PEFT methods. It generalizes token-wise adapters, which are recovered as a special case when each hyperedge contains a single token. By refining groups of tokens jointly, HyperAdapter injects an explicit relational inductive bias into parameter-efficient adaptation, leading to more coherent and semantically consistent feature updates.

\subsection{Unified View of HyperAdapter Generalization}

HyperAdapter provides a unified view of parameter-efficient adaptation by generalizing standard token-wise adapters. In the limiting case where each token forms an independent hyperedge with no cross-token aggregation, the method reduces exactly to a token-wise adapter.




\begin{proposition}[Token-wise adapter]
If the number of hyperedges equals the number of tokens ($K = N$) and the soft incidence matrix is the identity ($\mM = \mI_N$), HyperAdapter reduces to a standard token-wise adapter.
\end{proposition}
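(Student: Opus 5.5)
The plan is to substitute $K=N$ and $\mM=\mI_N$ directly into the three stages of HyperAdapter: aggregation, hyperedge-level bottleneck, and diffusion. I will then check that the composite map coincides row by row with the token-wise update $\Delta\vx_i=\mW_\text{up}\sigma(\mW_\text{down}\vx_i)$.

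\emph{Aggregation.} With $\mM=\mI_N$ we have $\mM^\top\mathbf{1}=\mathbf{1}$, so the row-wise normalization divides each row by $1$. This gives $\mH=\mI_N^\top\mX_p=\mX_p$, i.e.\ $\mH$ has rows $\vx_1,\dots,\vx_N$.

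\emph{Bottleneck and diffusion.} Next I apply the bottleneck. I read it row-wise, consistent with the token-wise formula: the $k$-th row of $\Delta\mH$ is $\mW_\text{up}\sigma(\mW_\text{down}\vh_k)$ with $\vh_k=\vx_k$. Since $\sigma$ acts elementwise, this row-wise reading is justified. Diffusion then gives $\Delta\mX=\mI_N\Delta\mH=\Delta\mH$. Hence $\Delta\vx_i=\mW_\text{up}\sigma(\mW_\text{down}\vx_i)$ for every $i$, and $\mX_p'=\mX_p+\alpha\Delta\mX$ is exactly a standard adapter with scaling $\alpha$. The classification token is untouched in both formulations, so the outputs agree on all $N+1$ tokens.

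\emph{Main obstacle.} The only real subtlety is that $\mM$ is defined by a softmax with finite $\tau$, which never equals $\mI_N$ exactly. I will treat the hypothesis $\mM=\mI_N$ as an assumption on the incidence matrix itself, as the statement does. As a remark, I will also note how it arises as a limit: take $\ve_k$ proportional to $\vx_k$ with distinct normalized tokens and let $\tau\to0^+$. Then $\mM\to\mI_N$, and by continuity of each stage the HyperAdapter output converges to the token-wise adapter output. The rest is routine matrix bookkeeping, in particular the convention that $\mW_\text{down}\mH$ means applying $\mW_\text{down}$ to each row, i.e.\ $\mH\mW_\text{down}^\top$.
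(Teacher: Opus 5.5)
Your proposal is correct and follows the paper's proof: with $\mM=\mI_N$ the aggregation gives $\mH=\mX_p$, diffusion gives $\Delta\mX=\Delta\mH$, and so each patch token is refined independently by $\mW_{\text{up}}\sigma(\mW_{\text{down}}\vx_i)$. Your added details are sound and fill in steps the paper skips: the normalizer $\mM^\top\mathbf{1}=\mathbf{1}$, the row-wise convention $\mH\mW_{\text{down}}^\top$, the scalar $\alpha$ (which can be absorbed into $\mW_{\text{up}}$), and the $\tau\to0^+$ limit. The one thing to drop is the claim that the outputs agree on all $N+1$ tokens, because that assumes the token-wise adapter also leaves the CLS token alone, whereas standard adapters typically update it; agreement on the $N$ patch tokens is all the proposition, and the paper's proof, actually assert.
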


\begin{proof}
When $\mM = \mI_N$, the hyperedge aggregation step yields $\mH = \mM^\top \mX_p = \mX_p$,
and the diffusion step gives $\Delta \mX = \mM \Delta \mH = \Delta \mH$.

Thus, each token is independently refined:
\[
\vx_i' = \vx_i + \Delta \vx_i = \vx_i + \mW_{\text{up}}\,\sigma(\mW_{\text{down}} \vx_i),
\]
which exactly recovers the standard token-wise adapter.
\end{proof}


We now show that HyperAdapter treats patch tokens symmetrically: reordering the tokens results in a correspondingly reordered output, leaving the computations invariant.

\begin{proposition}[Permutation equivariance]
HyperAdapter is permutation equivariant with respect to patch tokens.
\end{proposition}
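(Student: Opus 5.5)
Let $\mP \in \{0,1\}^{N\times N}$ be an arbitrary permutation matrix acting on the patch tokens, so the permuted input is $\mP\mX_p$. I will track each stage of HyperAdapter (routing, aggregation, bottleneck adaptation, diffusion, residual update) under this substitution. The target is the identity $\mathrm{HyperAdapter}(\mP\mX_p) = \mP\,\mathrm{HyperAdapter}(\mX_p)$. Throughout, the learnable parameters $\mE$, $\tau$, $\mW_\text{down}$, $\mW_\text{up}$, $\alpha$ are held fixed. The classification token is left untouched, so it plays no role.

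\textbf{Routing.} The incidence row $\mM_{i\cdot}$ depends only on $\vx_i$ and the shared prototypes $\mE$. The cosine similarities are computed per token, and the softmax is taken over the $K$ hyperedges, not over the tokens. Hence permuting the tokens permutes the rows of $\mM$ in the same way: the permuted input yields the incidence matrix $\mM' = \mP\mM$.

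\textbf{Aggregation.} This is the key step, where I expect the only real subtlety: the pooled quantities must be permutation \emph{invariant}, not merely equivariant. Using $\mP^\top\mP = \mI_N$ and $\mP\mathbf{1} = \mathbf{1}$, I get $\mM'^\top(\mP\mX_p) = \mM^\top\mP^\top\mP\mX_p = \mM^\top\mX_p$ and $\mM'^\top\mathbf{1} = \mM^\top\mP^\top\mathbf{1} = \mM^\top\mathbf{1}$. Therefore $\mH$ is unchanged. The bottleneck acts on hyperedges, which are not permuted, so $\Delta\mH$ is also unchanged. Here I should remark on the notation $\mW_\text{up}\sigma(\mW_\text{down}\mH)$: it is to be read row-wise, but either way it is a function of $\mH$ alone.

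\textbf{Diffusion and residual update.} I obtain $\Delta\mX' = \mM'\Delta\mH = \mP\mM\Delta\mH = \mP\Delta\mX$. The output is then $\mP\mX_p + \alpha\mP\Delta\mX = \mP(\mX_p + \alpha\Delta\mX) = \mP\mX_p'$, which is the desired equivariance.

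I would close with a remark on what the argument reveals. The hyperedge representations are permutation invariant, because the prototypes index the hyperedges independently of token order. Equivariance of the whole module then follows from the row-wise routing and from diffusion by $\mM$.
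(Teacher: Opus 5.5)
Your proof is correct and follows the same route as the paper's: row-wise routing gives $\mM \mapsto \mP\mM$, the pooled hyperedge features are invariant because $\mP^\top\mP = \mI$, so $\Delta\mH$ is unchanged, and diffusion then returns $\mP\Delta\mX$. Your version is slightly more complete than the paper's. The paper checks only the numerator $\mM^\top\mX_p$ and the update $\Delta\mX$, whereas you also verify that the normalizer $\mM^\top\mathbf{1}$ is invariant (this uses $\mP^\top\mathbf{1}=\mathbf{1}$, which holds because $\mP^\top$ is itself a permutation matrix), and you carry the equivariance through the residual step $\mX_p' = \mX_p + \alpha\Delta\mX$.
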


\begin{proof}
Let $\mP_\pi \in \mathbb{R}^{N \times N}$ denote a permutation matrix corresponding to a reordering $\pi$ of the $N$ tokens. Cosine-based routing preserves inner products under permutation, so
\[
\mM(\mP_\pi \mX_p) = \mP_\pi \mM(\mX_p).
\]

Hyperedge aggregation satisfies
\[
(\mP_\pi \mM)^\top (\mP_\pi \mX_p) = \mM^\top \mX_p,
\]
ensuring that hyperedge representations remain unchanged. Since the adapter operates independently on hyperedges, $\Delta \mH$ is unaffected. Diffusion back to token space then gives
\[
\Delta \mX(\mP_\pi \mX_p) = (\mP_\pi \mM) \Delta \mH = \mP_\pi (\mM \Delta \mH) = \mP_\pi \Delta \mX(\mX_p),
\]
establishing permutation equivariance.
\end{proof}
 
We next highlight that HyperAdapter updates lie in a low-dimensional subspace, reflecting the bottleneck structure of the adapter while preserving efficiency and structured group-level adaptation.

\begin{proposition}[Low-rank adaptation structure]
The token-level update induced by HyperAdapter lies in a low-dimensional subspace of rank at most $\min(K,r)$, where $r$ is the bottleneck dimension of the adapter.
\end{proposition}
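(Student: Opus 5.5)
The plan is to write the token-level update as an explicit product of matrices and then use the standard inequality $\operatorname{rank}(AB)\le\min(\operatorname{rank}A,\operatorname{rank}B)$. First I will fix the row convention implicit in the paper. Since $\mH\in\mathbb{R}^{K\times D}$, the expression $\mW_\text{up}\sigma(\mW_\text{down}\mH)$ is to be read row-wise: each hyperedge vector $\vh_k$ is mapped to $\mW_\text{up}\sigma(\mW_\text{down}\vh_k)$. In matrix form this gives
\[
\Delta\mH=\sigma\bigl(\mH\mW_\text{down}^\top\bigr)\mW_\text{up}^\top,
\]
and therefore
\[
\Delta\mX=\mM\,\sigma\bigl(\mH\mW_\text{down}^\top\bigr)\,\mW_\text{up}^\top .
\]
The three factors have sizes $N\times K$, $K\times r$ and $r\times D$. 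Writing $\mathbf{Z}=\sigma(\mH\mW_\text{down}^\top)$ for the middle factor, the update becomes $\Delta\mX=\mM\mathbf{Z}\mW_\text{up}^\top$.

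The key observation is that the nonlinearity $\sigma$ acts only inside the middle factor. So although $\mathbf{Z}$ depends nonlinearly on the input, it is still just some $K\times r$ matrix, and the product structure is preserved. This is the only point that needs care, since one might worry that $\sigma$ breaks the low-rank argument. It does not, because $\sigma$ is sandwiched between the two linear maps $\mM$ and $\mW_\text{up}^\top$.

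I will then bound the rank in two ways.
\begin{itemize}
\item Every column of $\Delta\mX$ lies in the column space of $\mM$, which has dimension at most $K$.
\item Every row of $\Delta\mX$ is a row of $\mathbf{Z}$ times $\mW_\text{up}^\top$. It therefore lies in $\operatorname{span}$ of the columns of $\mW_\text{up}$, which has dimension at most $r$.
\end{itemize}
Hence $\operatorname{rank}(\Delta\mX)\le\min(\operatorname{rank}\mM,\operatorname{rank}\mathbf{Z},\operatorname{rank}\mW_\text{up})\le\min(K,r)$.

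Finally, I will note the stronger structural fact behind the word ``subspace''. The $r$-dimensional row subspace $\operatorname{col}(\mW_\text{up})\subset\mathbb{R}^D$ is fixed by the learned parameters and does not depend on the input image. So every per-token update $\alpha\,\Delta\vx_i$ lies in this common low-dimensional subspace. The scaling by $\alpha$ does not change the rank. The bound $\min(K,r)$ then follows by combining it with the input-dependent but at most $K$-dimensional column-space constraint coming from $\mM$.
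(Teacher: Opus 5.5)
Your argument is correct and is essentially the paper's proof: write $\Delta\mX=\mM\,\Delta\mH$ and bound $\operatorname{rank}(\Delta\mX)\le\min(\operatorname{rank}\mM,\operatorname{rank}\Delta\mH)\le\min(K,r)$, with the factor $r$ coming from the bottleneck width. Your version is more careful than the paper's: you fix the row convention so the shapes type-check, and you make explicit that $\sigma$ sits inside the $K\times r$ factor $\mathbf{Z}$. One small correction: a row of $\Delta\mX$ is a linear combination of rows of $\mathbf{Z}$, not a single row, times $\mW_\text{up}^\top$ --- this does not affect the conclusion.
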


\begin{proof}
By construction, the token update is
\[
\Delta \mX = \mM \Delta \mH, \quad \Delta \mH = \mW_{\text{up}}\, \sigma(\mW_{\text{down}} \mH) \in \mathbb{R}^{K \times D}.
\]
Since $\mW_{\text{down}} \in \mathbb{R}^{r \times D}$, the rank of $\Delta \mH$ is at most $r$. Using the submultiplicativity of matrix rank:
\[
\mathrm{rank}(\Delta \mX) = \mathrm{rank}(\mM \Delta \mH) \le \min(\mathrm{rank}(\mM), \mathrm{rank}(\Delta \mH)) \le \min(K,r).
\]
This shows that HyperAdapter preserves the low-rank nature of parameter-efficient adaptation while enabling structured group-level refinement.
\end{proof} 

\textbf{Structured smoothing interpretation.} The token-level update can be interpreted as a structured smoothing process over hyperedges, where information is first aggregated, transformed through a low-rank adapter, and then diffused back to tokens. The token-level update can be written as
\begin{equation}
\Delta \mX = \mM \mW_{\text{up}}\, \sigma(\mW_{\text{down}} (\mM^\top \mX_p)).
\end{equation}
Viewed this way, HyperAdapter performs a sequence of operations:
Token \\$\xrightarrow{\text{Hyperedge projection}}$ Hyperedge features $\xrightarrow{\text{Low-Rank adapter}}$ Hyperedge updates \\$\xrightarrow{\text{Structured diffusion}}$ Token updates.

This perspective reveals HyperAdapter as a learnable structured smoothing operator in feature space: tokens assigned to the same hyperedges share updates, producing coherent, group-level feature refinements. Standard token-wise adapters correspond to the degenerate case where each hyperedge contains only a single token, reducing the smoothing operator to the identity mapping.

\textbf{Distinction from existing PEFT methods.} HyperAdapter departs from conventional PEFT in two key ways. First, unlike standard adapters or LoRA that update tokens independently, HyperAdapter performs \emph{group-level adaptation} in a learned structured interaction space, capturing higher-order relationships among tokens. Second, unlike graph-based transformers that modify the backbone attention mechanism, HyperAdapter leaves the pretrained ViT untouched and introduces a modular hyperedge adapter. This design simultaneously achieves structured adaptation, low-rank updates, and permutation equivariance, all while maintaining minimal computational and parameter overhead.

\begin{table}[tbp]
\caption{VTAB-1K performance comparison of PEFT methods using a ViT-B/16 backbone. HyperAdapter achieves the highest average accuracy (77.6\%), demonstrating consistent improvements across \textit{Natural}, \textit{Specialized}, and \textit{Structured} tasks. \# Param(M) denotes the number of trainable parameters.}
\centering
\footnotesize
\setlength{\tabcolsep}{3pt}
\renewcommand{\arraystretch}{1.1}
\resizebox{\textwidth}{!}{%
\begin{tabular}{lc|rrrrrrr|rrrr|rrrrrrrr|r}
\hline
 &  & \multicolumn{7}{c|}{\textbf{Natural}} 
 & \multicolumn{4}{c|}{\textbf{Specialized}} 
 & \multicolumn{8}{c|}{\textbf{Structured}} 
 &  \\
\cline{3-21}

& \rotatebox{90}{\# Param (M)}
& \rotatebox{90}{Cifar100}
& \rotatebox{90}{Caltech101}
& \rotatebox{90}{DTD}
& \rotatebox{90}{Flower102}
& \rotatebox{90}{Pets}
& \rotatebox{90}{SVHN}
& \rotatebox{90}{Sun397}
& \rotatebox{90}{Camelyon}
& \rotatebox{90}{EuroSAT}
& \rotatebox{90}{Resisc45}
& \rotatebox{90}{Retinopathy}
& \rotatebox{90}{Clevr-Count}
& \rotatebox{90}{Clevr-Dist}
& \rotatebox{90}{DMLab}
& \rotatebox{90}{KITTI-Dist}
& \rotatebox{90}{dSpr-Loc}
& \rotatebox{90}{dSpr-Ori}
& \rotatebox{90}{sNORB-Azim}
& \rotatebox{90}{sNORB-Ele}
& \rotatebox{90}{Average} \\
\hline

\multicolumn{22}{l}{\textit{Traditional Finetuning}} \\
\hline

Full fine-tuning
& 85.8
& 68.9 & 87.7 & 64.3 & 97.2 & 86.9 & 87.4 & 38.8
& 79.7 & 95.7 & 84.2 & 73.9
& 56.3 & 58.6 & 41.7 & 65.5 & 57.5 & 46.7 & 25.7 & 29.1
& 68.9 \\

Linear probing
& 0
& 64.4 & 85.0 & 63.2 & 97.0 & 86.3 & 36.6 & 51.0
& 78.5 & 87.5 & 68.5 & 74.0
& 34.3 & 30.6 & 33.2 & 55.4 & 12.5 & 20.0 & 9.6 & 19.2
& 57.6 \\

\hline
\multicolumn{22}{l}{\textit{PEFT methods}} \\
\hline
BitFit\cite{bitfit}
& 0.10
& 72.8 & 87.0 & 59.2 & 97.5 & 85.3 & 59.9 & 51.4
& 78.7 & 91.6 & 72.9 & 69.8
& 61.5 & 55.6 & 32.4 & 55.9 & 66.6 & 40.0 & 15.7 & 25.1
& 65.2 \\

VPT-Shallow \cite{vpt}
& 0.06
& 77.7 & 86.9 & 62.6 & 97.5 & 87.3 & 74.5 & 51.2
& 78.2 & 92.0 & 75.6 & 72.9
& 50.5 & 58.6 & 40.5 & 67.1 & 68.7 & 36.1 & 20.2 & 34.1
& 67.8 \\

VPT-Deep \cite{vpt}
& 0.53
& 78.8 & 90.8 & 65.8 & 98.0 & 88.3 & 78.1 & 49.6
& 81.8 & 96.1 & 83.4 & 68.4
& 68.5 & 60.0 & 46.5 & 72.8 & 73.6 & 47.9 & 32.9 & 37.8
& 72.0 \\

E$^2$VPT \cite{e2vpt}
& 0.25
& 78.6 & 89.4 & 67.8 & 98.2 & 88.5 & 85.3 & 52.3
& 82.5 & 96.8 & 84.8 & 73.6
& 71.7 & 61.2 & 47.9 & 75.8 & 80.8 & 48.1 & 31.7 & 41.9
& 73.9 \\

Adapter \cite{adapter}
& 0.16
& 69.2 & 90.1 & 68.0 & 98.8 & 89.9 & 82.8 & 54.3
& 84.0 & 94.9 & 81.9 & 75.5
& 80.9 & 65.3 & 48.6 & 78.3 & 74.8 & 48.5 & 29.9 & 41.6
& 73.9 \\

AdaptFormer \cite{adaptformer}
& 0.16
& 70.8 & 91.2 & 70.5 & 99.1 & 90.9 & 86.6 & 54.8
& 83.0 & 95.8 & 84.4 & 76.3
& 81.9 & 64.3 & 49.3 & 80.3 & 76.3 & 45.7 & 31.7 & 41.1
& 74.7 \\

Convpass \cite{convpass}
& 0.33
& 72.3 & 91.2 & 72.2 & 99.2 & 90.9 & 91.3 & 54.9
& 84.2 & 96.1 & 85.3 & 75.6
& 82.3 & 67.9 & 51.3 & 80.0 & 85.9 & 53.1 & 36.4 & 44.4
& 74.5 \\

ARC \cite{arc}
& 0.13
& 72.2 & 90.1 & 72.7 & 99.0 & 91.0 & 91.9 & 54.4
& 84.9 & 95.7 & 86.7 & 75.8
& 80.7 & 67.1 & 48.7 & 81.6 & 79.2 & 51.0 & 31.4 & 39.9
& 75.8 \\

LoRA \cite{lora}
& 0.29
& 67.1 & 91.4 & 69.4 & 98.8 & 90.4 & 85.3 & 54.0
& 84.9 & 95.3 & 84.4 & 73.6
& 82.9 & 69.2 & 49.8 & 78.5 & 75.7 & 47.1 & 31.0 & 44.0
& 74.5 \\

NOAH \cite{noah}
& 0.36
& 69.6 & 92.7 & 70.2 & 99.1 & 90.4 & 86.1 & 53.7
& 84.4 & 95.4 & 83.9 & 75.8
& 82.8 & 68.9 & 49.9 & 81.7 & 81.8 & 48.3 & 32.8 & 44.2
& 75.5 \\

FacT \cite{fact}
& 0.07
& 70.6 & 90.6 & 70.8 & 99.1 & 90.7 & 88.6 & 54.1
& 84.8 & 96.2 & 84.5 & 75.7
& 82.6 & 68.2 & 49.8 & 80.7 & 80.8 & 47.4 & 33.2 & 43.0
& 75.6 \\

SSF \cite{ssf}
& 0.24
& 69.0 & 92.6 & 75.1 & 99.4 & 91.8 & 90.2 & 52.9
& 87.4 & 95.9 & 87.4 & 75.5
& 75.9 & 62.3 & 53.3 & 80.6 & 77.3 & 54.9 & 29.5 & 37.9
& 75.7 \\

RepAdapter \cite{repadapter}
& 0.22
& 72.4 & 91.6 & 71.0 & 99.2 & 91.4 & 90.7 & 55.1
& 85.3 & 95.9 & 84.6 & 75.9
& 82.3 & 68.0 & 50.4 & 79.9 & 80.4 & 49.2 & 38.6 & 41.0
& 76.1 \\

Res-Tuning  \cite{repadapter}
& 0.55
& 75.2 & 92.7 & 71.9 & 99.3 & 91.9 & 86.7 & 58.5
& 86.7 & 95.6 & 85.0 & 74.6
& 80.2 & 63.6 & 50.6 & 80.2 & 85.4 & 55.7 & 31.9 & 42.0
& 76.3 \\

\textbf{HyperAdapter (Ours)}
& 0.44
& \textbf{74.1} & 93.3 & 72.8 & 99.3 & 91.7 & 88.3 & 56.7
& 87.5 & 96.3 & 86.0 & 76.5
& 84.0 & 64.4 & 55.2 & 83.9 & 88.5 & 54.6 & 36.3 & 43.7
& \textbf{77.6} \\

\hline
\end{tabular}
}
\label{tab:vtab_results}
\vspace{-0.5cm}
\end{table}

\section{Experiment}


\subsection{Experimental Setup}

\textbf{Datasets.} We evaluate HyperAdapter on two visual adaptation benchmarks. \textit{VTAB-1K} \cite{vtab} consists of 19 classification tasks grouped into three categories: \emph{Natural}, \emph{Specialized}, and \emph{Structured}. Natural tasks contain real-world photographs captured with standard cameras. Specialized tasks include images from domain-specific sensors such as remote sensing and medical imaging. Structured tasks primarily involve synthetically generated images that test reasoning about scene structure and exhibit significant domain shift from natural images. Following the VTAB-1K protocol, each task provides 800 training samples and 200 validation samples, while the test sets follow the sizes of the original datasets. 
To evaluate performance in low-data regimes, we conduct experiments on five Few-shot fine-grained visual classification (\textit{FGVC}) datasets: FGVC-Aircraft \cite{aircraft}, Oxford Pets, Food-101 \cite{food101}, Stanford Cars \cite{cars}, and Oxford Flowers102 \cite{flower}. We report results under 1-, 2-, 4-, 8-, and 16-shot settings.

\textbf{Implementation details.}
We build upon a pretrained ViT-B/16 \cite{vit} backbone initialized with ImageNet-21k \cite{imagenet} weights. Unless otherwise specified, the backbone parameters are frozen, and only the classification head and HyperAdapter modules are trained.
Each HyperAdapter operates in hyperedge space with bottleneck rank $r\!=\!8$ and $K\!=\!8$ hyperedges. The routing temperature $\tau$ controls the softness of token-to-hyperedge assignments and is selected 
based on validation performance. 
Unless otherwise stated, $K$ is fixed across datasets, while $\tau$ is tuned. We use AdamW with weight decay $1\!\times\!10^{-4}$. The learning rate is $1\!\times\!10^{-3}$ for VTAB-1K and $5\!\times\!10^{-3}$ for FGVC tasks.. We use a cosine learning rate schedule with 10 warmup epochs and train for 100 epochs. The batch size is 64 and the input resolution is $224 \times 224$. All experiments are conducted on a single GPU.
Following prior work \cite{vpt,convpass,fact}, hyperparameters are tuned on the validation split, and results are reported as the mean over three runs.

Below, we present key evaluations, with further results in the \textbf{Appendix}.

\begin{figure}[tbp]
  \centering
  \includegraphics[width=\linewidth]{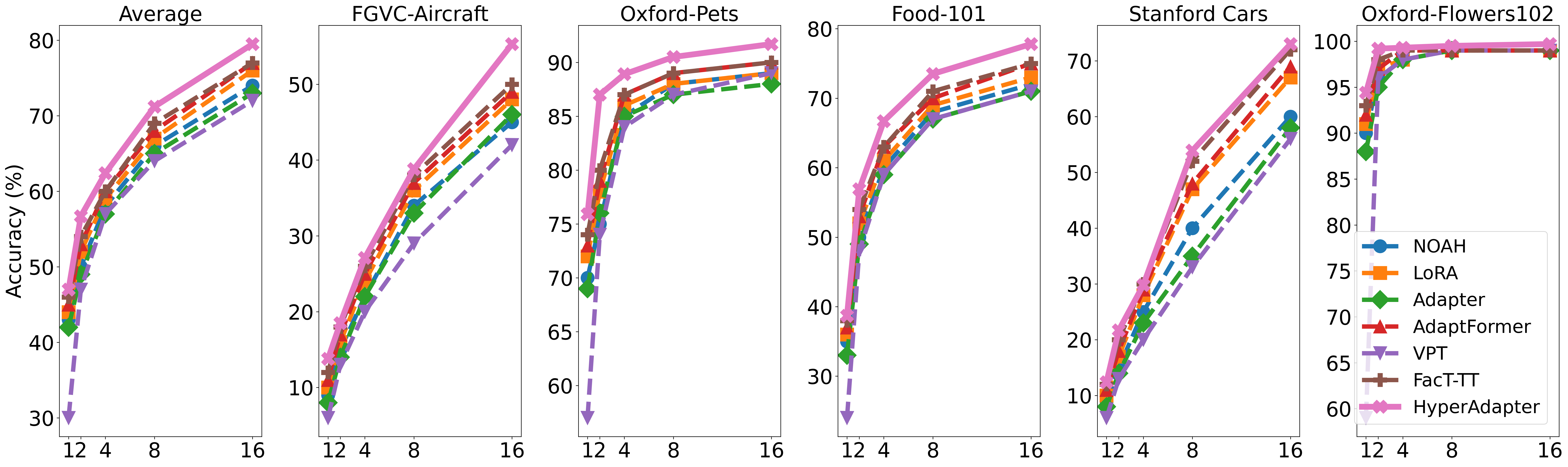} 
  \caption{Top-1 accuracy on few-shot FGVC benchmarks using ViT-B/16. HyperAdapter consistently outperforms prior PEFT methods, with the largest gains observed in the ultra-low-shot (1-4) regime.}
  \label{fig:fgvc}
  \vspace{-0.5cm}
\end{figure}

\subsection{Comparison to State-of-the-Art Methods}

\textbf{Results on VTAB-1K.} Table \ref{tab:vtab_results} summarizes performance across the \textit{Natural}, \textit{Specialized}, and \textit{Structured} task categories. Full fine-tuning achieves 68.9\% average accuracy, while linear probing lags at 57.6\%, highlighting the limitations of frozen-feature baselines in low-data regimes. Most PEFT methods substantially improve over full fine-tuning, with strong baselines such as Res-Tuning (76.3\%), RepAdapter (76.1\%), SSF (75.7\%), and LoRA (74.5\%) demonstrating the effectiveness of lightweight adaptation.
\textbf{HyperAdapter consistently outperforms all prior PEFT methods}, reaching 77.6\% average accuracy. The improvements are especially pronounced on \textit{Structured} tasks, \eg, KITTI-Dist, dSpr-Loc, and sNORB, reflecting HyperAdapter’s ability to capture higher-order spatial and relational dependencies among tokens. Performance gains are also maintained across \textit{Natural} and \textit{Specialized} categories, indicating robust generalization across diverse visual domains. 
These results illustrate that HyperAdapter’s structured, hyperedge-based adaptation provides a more expressive and efficient mechanism than traditional token-wise PEFT, enabling consistent gains across tasks with varying domain characteristics and data complexities.


\textbf{Few-shot fine-grained visual recognition.}  
Fig.~\ref{fig:fgvc} shows few-shot performance (1, 2, 4, 8, 16 shots) on five fine-grained benchmarks: \textit{FGVC-Aircraft}, \textit{Oxford-Pets}, \textit{Food-101}, \textit{Stanford Cars}, and \textit{Oxford-Flowers102}, along with the overall average. Performance increases consistently with more labeled samples, but the improvement is most critical in the ultra-low-shot regime (1-4 shots). HyperAdapter achieves the best or near-best results across nearly all datasets and shot settings, with particularly strong gains in the 1-4 shot regime, highlighting its ability to leverage limited supervision while preserving transferable representations from the pretrained backbone.

\begin{table}[tbp]
\centering
\begin{minipage}{0.49\textwidth}
\centering
\caption{VTAB-1k results with ViT-L/16 (ImageNet-21K). HyperAdapter achieves the highest average accuracy (77.7\%), with strong gains on \textit{Structured} tasks, showing effective hyperedge-based token grouping and parameter-efficient adaptation.}
\label{tab:vit_large}
\resizebox{\linewidth}{!}{
\begin{tabular}{lccccc}
\toprule
\textbf{Method} & \textbf{Natural} & \textbf{Specialized} & \textbf{Structured} & \textbf{Average} & \textbf{Params (M)} \\
\midrule
Full fine-tuning & 74.7 & 83.8 & 48.1 & 68.9 & 303.40 \\
Linear probing   & 70.9 & 69.1 & 25.8 & 55.3 & 0.05 \\
\midrule
Adapter          & 68.6 & 73.5 & 29.0 & 57.0 & 2.38 \\
VPT-Deep         & 82.5 & 83.9 & 54.1 & 73.5 & 0.49 \\
\midrule
ARC             & 82.3 & 85.6 & 57.3 & 75.1 & 0.74 \\
RepAdapter       & 84.0 & 86.3 & 60.1 & 76.8 & 0.79 \\
\midrule
\textbf{HyperAdapter (Ours)}   & \textbf{83.7} & \textbf{85.6} & \textbf{63.8} & \textbf{77.7} & 1.18 \\
\bottomrule
\end{tabular}
}
\end{minipage}
\hfill
\begin{minipage}{0.49\textwidth}
\centering
\caption{VTAB-1k results with Swin-Base (ImageNet-21K). HyperAdapter achieves top average accuracy (77.6\%), particularly improving \textit{Structured} tasks, showing hyperedge routing's effectiveness across transformer architectures.}
\label{tab:swin_base}
\resizebox{\linewidth}{!}{
\begin{tabular}{lccccc}
\toprule
\textbf{Method} & \textbf{Natural} & \textbf{Specialized} & \textbf{Structured} & \textbf{Average} & \textbf{Params (M)} \\
\midrule
Full fine-tuning & 79.1 & 86.2 & 59.7 & 75.0 & 86.90 \\
Linear probing   & 73.5 & 80.8 & 33.5 & 62.6 & 0.05\\
\midrule
VPT-Shallow         & 79.9 & 82.4 & 37.8 & 66.7 & 0.05 \\
VPT-Deep         & 76.8 & 84.5 & 53.4 & 71.6 & 0.22\\
\midrule
ARC             & 79.0 & 86.6 & 59.9 & 75.6 &0.16 \\
RepAdapter       & 82.8 & 87.2 & 61.2 & 77.0 & 0.39 \\
\midrule
\textbf{HyperAdapter (Ours)}   & \textbf{83.5} & \textbf{86.2} & \textbf{63.0} & \textbf{77.6} & 0.60 \\
\bottomrule
\end{tabular}
}
\end{minipage}
\vspace{-0.5cm}
\end{table}

\textbf{Results on larger ViTs.} Table~\ref{tab:vit_large} compares VTAB-1k performance using a ViT-L/16 pretrained on ImageNet-21K. HyperAdapter achieves the highest average accuracy of 77.7\%, outperforming prior PEFT methods such as ARC and RepAdapter. The gains are most pronounced on \textit{Structured} tasks (63.8\%), highlighting the effectiveness of hyperedge-based token grouping for capturing complex spatial and relational dependencies. With only 1.18M trainable parameters (under 0.5\% of the backbone), HyperAdapter improves performance across all task categories, demonstrating both expressiveness and parameter efficiency.

\textbf{Results on hierarchical Swin transformers.} Table~\ref{tab:swin_base} reports VTAB-1k performance using a Swin-Base backbone. HyperAdapter achieves the top average accuracy of 77.6\%, outperforming strong PEFT baselines including ARC and RepAdapter. Structured tasks again see the largest gains (63.0\%), confirming that hyperedge routing effectively models structural relationships in hierarchical architectures. Despite introducing only 0.60M trainable parameters, HyperAdapter consistently boosts performance across Natural, Specialized, and Structured tasks, illustrating its generality and architecture-agnostic adaptability.

\subsection{Ablation Study}

To analyze HyperAdapter’s key properties, we perform detailed ablation studies on VTAB-1k using a pretrained ViT-B/16.


\textbf{HyperAdapter \vs token-wise adapter.} Table~\ref{tab:baseline_ablation} compares HyperAdapter to a token-wise adapter baseline that retains the same adapter capacity but removes hyperedge routing, isolating the impact of structured adaptation. 
HyperAdapter consistently improves performance across all VTAB categories, with gains of +0.3 on Natural, +1.7 on Specialized, and +1.9 on Structured tasks, raising the overall average from 76.3\% to 77.6\% (+1.3\%). 

\begin{wraptable}{r}{0.6\columnwidth}
\vspace{-1.2cm}
\centering
\caption{Impact of hyperedge-level adaptation. HyperAdapter adds hypergraph routing to the adapter while keeping a similar parameter budget, yielding consistent gains across all VTAB-1k task categories. 
}
\label{tab:baseline_ablation}
\resizebox{\linewidth}{!}{
\begin{tabular}{lccccccc}
\toprule
Method & Params (M) &Natural& Specialized& Structure& Average (\%) & Gain \\
\midrule
Baseline & 0.29 &82.0 &84.9 &61.9 &76.3 & -- \\
HyperAdapter &0.44  & 82.3& 86.6&63.8 &77.6 & +1.3 \\
\bottomrule
\end{tabular}}
\vspace{-0.8cm}
\end{wraptable}

These improvements arise from hyperedge routing mechanism rather than increased parameters, as it enables tokens to be grouped and adapted jointly. By aggregating information across hyperedges, HyperAdapter captures higher-order relationships, producing more discriminative features. The largest gains on Specialized and Structured tasks highlight its effectiveness for fine-grained and spatially structured visual patterns.

\begin{wrapfigure}{r}{0.6\textwidth}
\vspace{-0.8cm}
\centering
\begin{subfigure}[t]{0.49\linewidth}
\centering
\includegraphics[width=\linewidth]{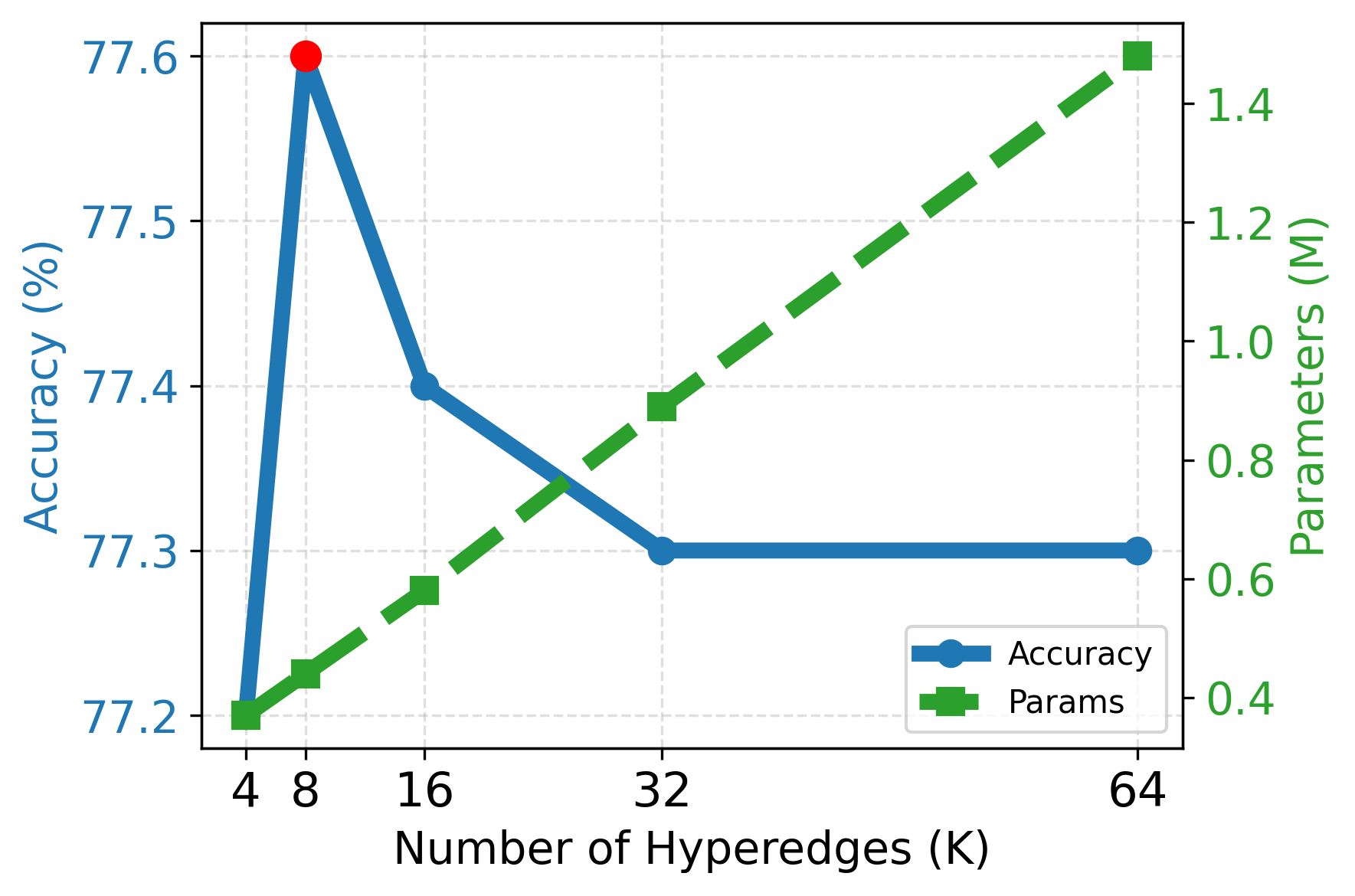}
\caption{\label{fig:K_sensitivity}}
\end{subfigure}
\hfill
\begin{subfigure}[t]{0.49\linewidth}
\centering
\includegraphics[width=\linewidth]{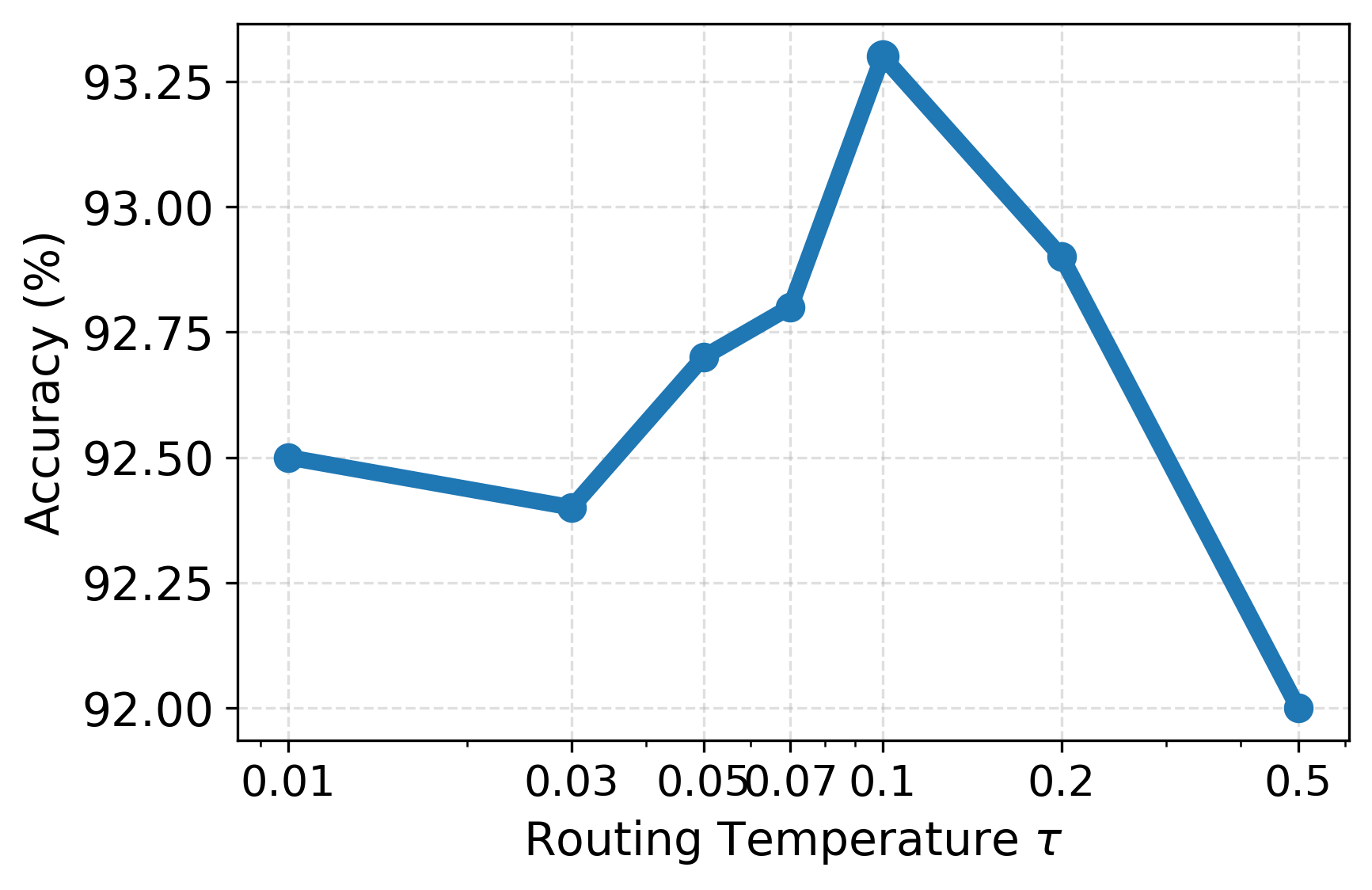}
\caption{\label{fig:tau_sensitivity}}
\end{subfigure}
\vspace{-0.2cm}
\caption{(a) Effect of the number of hyperedges $K$ on VTAB-1k, showing that a moderate $K=8$ balances model capacity and efficiency. (b) Sensitivity to routing temperature $\tau$ on Caltech101, where performance peaks at $\tau=0.10$, indicating optimal hyperedge assignment at moderate routing sharpness.}
\label{fig:routing_ablation}
\vspace{-0.8cm}
\end{wrapfigure}

\textbf{Impact of the number of hyperedges.}  
Fig.~\ref{fig:K_sensitivity} shows how varying the number of hyperedges $K$ affects performance and parameter count. Accuracy improves from 77.2\% at $K=4$ to a peak of 77.6\% at $K=8$, indicating that a moderate number of hyperedges effectively captures higher-order token relationships. Beyond $K=8$, performance slightly drops to 77.3\%, 
suggesting that too many hyperedges may fragment token groups and reduce structured aggregation benefits. Parameter count grows roughly linearly with $K$, from 0.39M at $K=4$ to 1.50M at $K=64$. Overall, $K=8$ offers the best trade-off, 
and is used as the default in all experiments.


\begin{wraptable}{r}{0.45\columnwidth}
\vspace{-1.2cm}
\centering
\caption{Ablation on adapter placement; parallel injection on Attention + MLP yields highest accuracy. 
}
\label{tab:placement_ablation}
\resizebox{\linewidth}{!}{\begin{tabular}{lccc}
\toprule
Placement & Average (\%) & Params (M) \\
\midrule
Pre (Attn+MLP) & 77.1 & 0.44 \\
Pre (Attn Only) & 76.3 & 0.22 \\
Pre (MLP Only) & 77.0 & 0.22 \\
\midrule
Post (Attn+MLP) & 77.2 & 0.44 \\
Post (Attn Only) & 76.7 & 0.22 \\
Post (MLP Only) & 77.0 & 0.22 \\
\midrule
Parallel (Attn+MLP) & \textbf{77.6} & 0.44 \\
Parallel (Attn Only) & 76.8 & 0.22 \\
Parallel (MLP Only) & 77.0 & 0.22 \\
\bottomrule
\end{tabular}}
\vspace{-0.8cm}
\end{wraptable}

\textbf{Sensitivity to routing temperature.} We analyze the effect of routing temperature $\tau$, which controls the sharpness of token-to-hyperedge assignments. Smaller $\tau$ yields confident, sharp assignments, while larger $\tau$ produces softer, more uniform routing. Fig.~\ref{fig:tau_sensitivity} shows results on Caltech101. Performance peaks at moderate temperatures ($\tau=0.10$). Too small $\tau$ limits information sharing across hyperedges, while too large $\tau$ reduces hyperedge specialization. These results indicate that HyperAdapter is stable across a broad range of $\tau$ values, with optimal performance at moderate routing sharpness.


\textbf{Adapter placement ablation.} Table~\ref{tab:placement_ablation} evaluates different adapter injection strategies (Pre, Post, Parallel) and module locations (Attention, MLP, or both). 
Parallel placement consistently achieves the best performance, reaching 77.6\% average accuracy, as it preserves the residual pathway and allows adapters to provide corrective updates without disrupting pretrained features. Attaching adapters to both Attention and MLP branches yields the highest gains, while single-branch configurations are slightly less effective, MLP-only performs competitively, whereas Attention-only shows lower accuracy. These results indicate that parallel placement with adapters on both branches is the most effective configuration, which we adopt in all experiments.


\begin{figure}[tbp]
\centering
\begin{subfigure}[t]{0.32\textwidth}
\centering
\includegraphics[width=\linewidth]{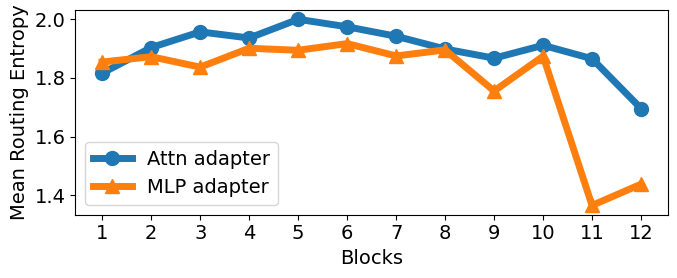}
\caption{CIFAR-100}
\end{subfigure}
\hfill
\begin{subfigure}[t]{0.32\textwidth}
\centering
\includegraphics[width=\linewidth]{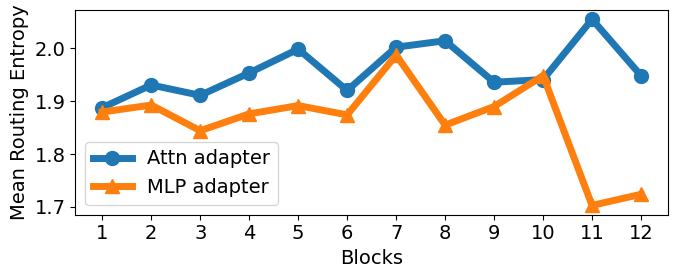}
\caption{EuroSAT}
\end{subfigure}
\hfill
\begin{subfigure}[t]{0.32\textwidth}
\centering
\includegraphics[width=\linewidth]{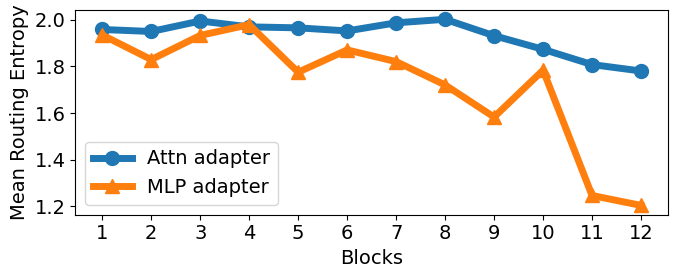}
\caption{KITTI}
\end{subfigure}
\vspace{-0.2cm}
\caption{Token-to-hyperedge routing entropy across transformer layers for CIFAR-100, EuroSAT, and KITTI. Higher entropy indicates more distributed assignments. Attention adapters maintain broader routing, while MLP adapters become increasingly specialized in deeper layers, reflecting progressive feature refinement.}
\label{fig:routing_entropy}
\vspace{-0.2cm}
\end{figure}

\subsection{Analysis and Discussion}

\begin{figure}[tbp]
\centering
\begin{subfigure}[t]{\textwidth}
\centering
\includegraphics[width=\linewidth]{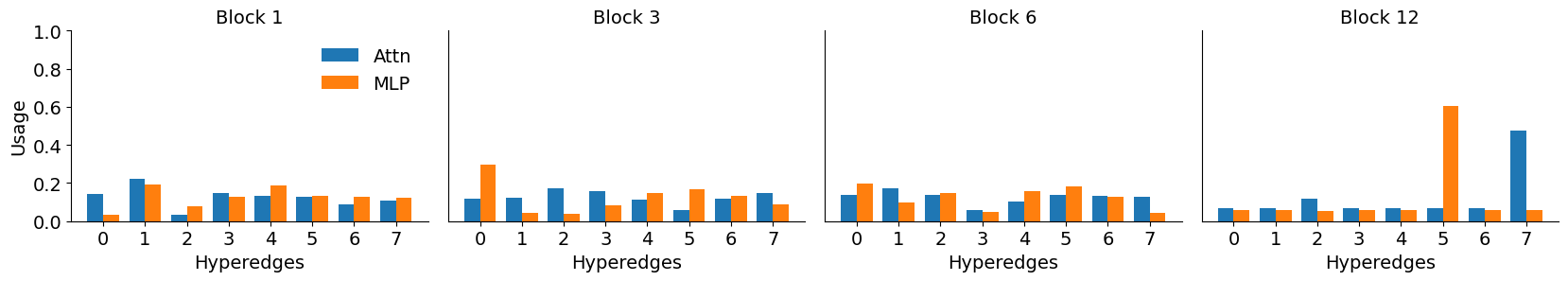}
\caption{CIFAR-100}
\end{subfigure}
\hfill
\begin{subfigure}[t]{\textwidth}
\centering
\includegraphics[width=\linewidth]{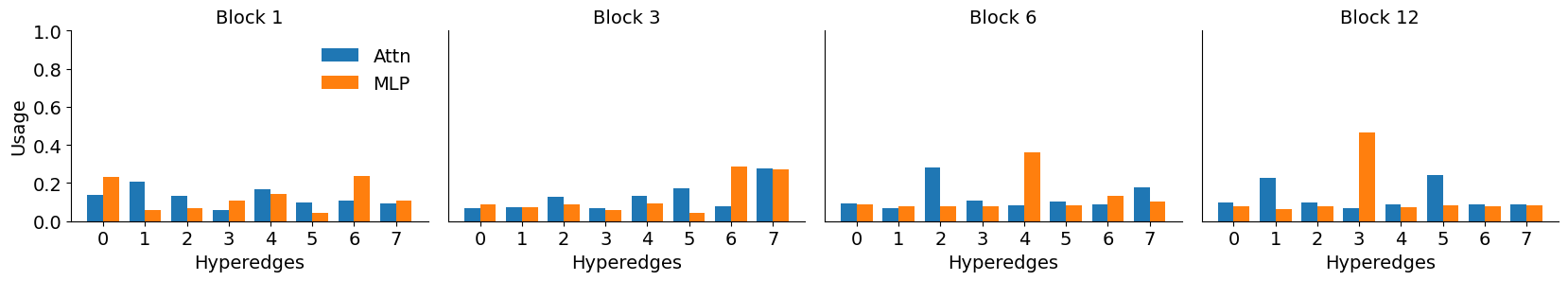}
\caption{EuroSAT}
\end{subfigure}
\hfill
\begin{subfigure}[t]{\textwidth}
\centering
\includegraphics[width=\linewidth]{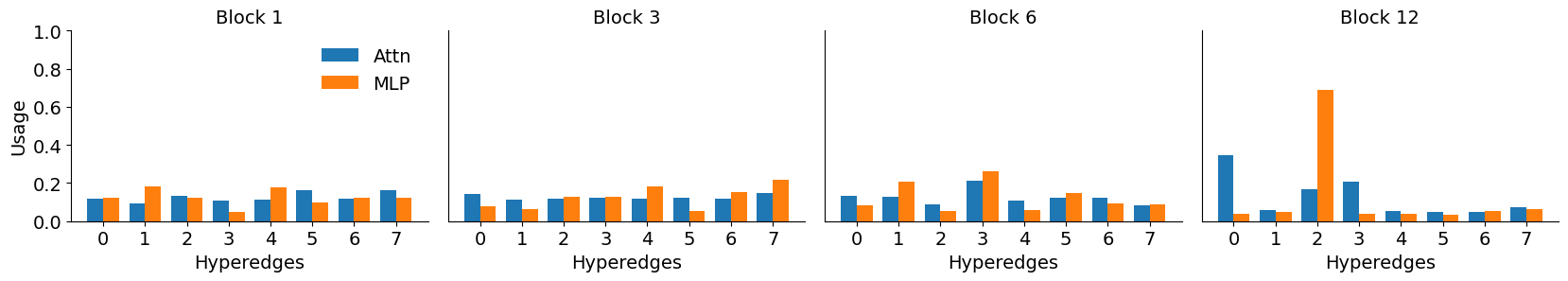}
\caption{KITTI}
\end{subfigure}
\vspace{-0.2cm}
\caption{Hyperedge usage distribution across transformer layers for CIFAR-100, EuroSAT, and KITTI. Early layers use hyperedges more evenly, while deeper layers increasingly concentrate on a subset of hyperedges, indicating progressive specialization.}
\label{fig:hyperedge_usage}
\vspace{-0.5cm}
\end{figure}

\textbf{Routing entropy across layers.} We analyze token-to-hyperedge routing entropy to understand HyperAdapter’s behavior (Fig.~\ref{fig:routing_entropy}). Higher entropy reflects more distributed assignments, while lower entropy indicates confident, specialized routing. Across CIFAR-100, EuroSAT, and KITTI, attention adapters maintain relatively high entropy, promoting flexible token interactions, whereas MLP adapters show decreasing entropy in deeper layers, indicating stronger hyperedge specialization. This trend reveals a progressive routing strategy: early layers explore diverse token-group interactions, and later layers focus on structured, discriminative features, enhancing feature adaptation and representation learning.


\begin{wrapfigure}{r}{0.5\textwidth}
\vspace{-0.8cm}
\centering
\begin{tabular}{cccc}
\tiny\textbf{Original} & \tiny\textbf{Baseline} & \tiny\textbf{AdaptFormer} & \tiny\textbf{Ours} \\

\includegraphics[width=0.24\linewidth, height=1.45cm]{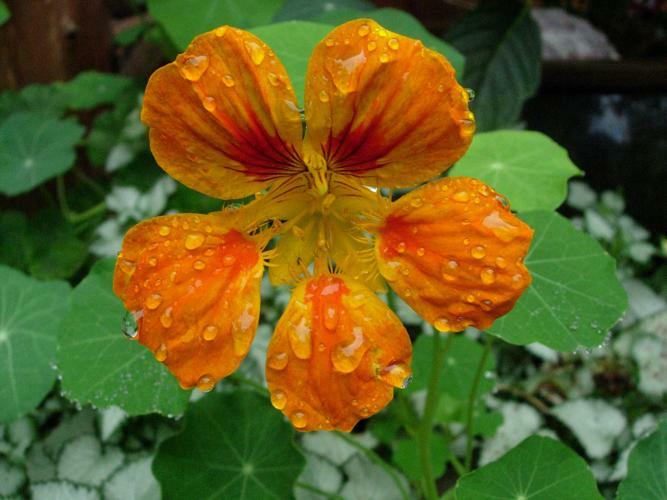} &
\includegraphics[width=0.24\linewidth]{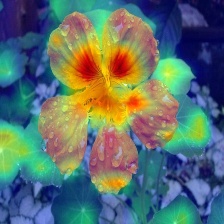} &
\includegraphics[width=0.24\linewidth]{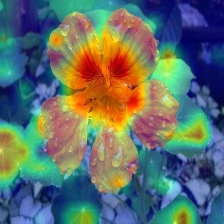} &
\includegraphics[width=0.24\linewidth]{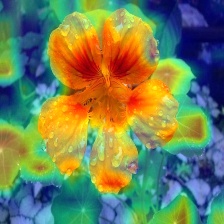} \\
\includegraphics[width=0.24\linewidth, height=1.45cm]{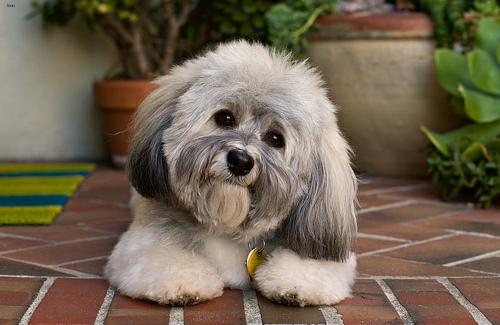} &
\includegraphics[width=0.24\linewidth]{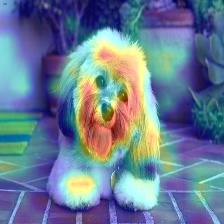} &
\includegraphics[width=0.24\linewidth]{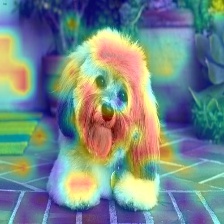} &
\includegraphics[width=0.24\linewidth]{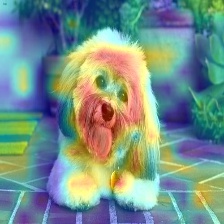} 
\end{tabular}
\vspace{-0.2cm}
\caption{DAAM\cite{daam} visualizations comparing spatial attribution across PEFT methods. Columns show the original image, token-wise baseline, AdaptFormer, and HyperAdapter. HyperAdapter produces more concentrated and semantically aligned activations, highlighting relevant object regions while reducing background noise, reflecting the benefits of hyperedge-based routing.}
\label{fig:daam_visualization}
\vspace{-0.8cm}
\end{wrapfigure}

\textbf{Hyperedge usage across layers.} We examine how HyperAdapter allocates capacity by tracking hyperedge usage frequency across layers (Fig.~\ref{fig:hyperedge_usage}). For selected blocks (1, 3, 6, 12), normalized usage shows that early layers distribute tokens relatively evenly across hyperedges, supporting exploratory feature interactions. In deeper layers, a smaller subset of hyperedges dominates, reflecting specialization to capture structured semantic patterns. Together with the routing entropy analysis, this indicates a progressive adaptation strategy: initial layers explore diverse token-hyperedge interactions, while later layers consolidate and specialize representations for discriminative feature learning.

\textbf{Qualitative visualization.} We analyze HyperAdapter’s effect on spatial attention using DAAM (Fig.~\ref{fig:daam_visualization}). Compared to the baseline and AdaptFormer, which produce diffuse activations extending into background areas, HyperAdapter generates more focused and coherent maps that align with object structures. For example, on flowers, attention concentrates on petals, while on pets, it emphasizes the animal body over surrounding regions. These results indicate that hyperedge routing enables structured token grouping, leading to more localized and semantically consistent feature aggregation.

Further analysis and discussion are provided in the \textbf{Appendix}.

\section{Conclusion}


We presented HyperAdapter, a hypergraph-based parameter-efficient adaptation method for Vision Transformers. By grouping tokens into learnable hyperedges, HyperAdapter captures structured inter-token relationships with minimal additional parameters. Extensive experiments demonstrate consistent improvements over existing PEFT methods, while ablations and visualizations validate the effectiveness of hyperedge routing. Future work will explore dynamic hyperedge structures and extensions to multimodal models.

\section*{Acknowledgments}

Edwin Kwadwo Tenagyei is supported by the Griffith University International Postgraduate Research Scholarship and the Griffith University Postgraduate Research Scholarship. 
Lei Wang conceived the research and led the development of the methodology, while Edwin Kwadwo Tenagyei implemented the method and conducted the experiments.

This work was supported in part by the Australian Research Council (ARC) under Industrial Transformation Research Hub Grant IH180100002. 
This work was also supported by the National Computational Merit Allocation Scheme (NCMAS 2026), with computational resources provided by NCI Australia, an NCRIS-enabled capability supported by the Australian Government.

\bibliographystyle{splncs04}
\bibliography{main}

\begin{thebibliography}{10}
\providecommand{\url}[1]{\texttt{#1}}
\providecommand{\urlprefix}{URL }
\providecommand{\doi}[1]{https://doi.org/#1}

\bibitem{kharao}
Albert, P., Zhang, F.Z., Saratchandran, H., van~den Hengel, A., Abbasnejad, E.:
  Towards higher effective rank in parameter-efficient fine-tuning using
  khatri-rao product. In: Proceedings of the IEEE/CVF International Conference
  on Computer Vision. pp. 1292--1302 (2025)

\bibitem{food101}
Bossard, L., Guillaumin, M., Gool, L.V.: Food-101 - mining discriminative
  components with random forests. In: European Conference on Computer Vision
  (2014)

\bibitem{adaptformer}
Chen, S., Ge, C., Tong, Z., Wang, J., Song, Y., Wang, J., Luo, P.: Adaptformer:
  Adapting vision transformers for scalable visual recognition. Advances in
  Neural Information Processing Systems  \textbf{35},  16664--16678 (2022)

\bibitem{gps-compact}
Chen, T., Chen, J., Zhang, B., Yu, Z., Chen, S., Ye, R., Li, X., Ye, Y.:
  Sensitivity-aware efficient fine-tuning via compact dynamic-rank adaptation.
  In: Proceedings of the Computer Vision and Pattern Recognition Conference.
  pp. 9655--9664 (2025)

\bibitem{householder}
Dong, W., Sun, Y., Yang, Y., Zhang, X., Lin, Z., Yan, Q., Zhang, H., Wang, P.,
  Yang, Y., Shen, H.: Efficient adaptation of pre-trained vision transformer
  via householder transformation. Advances in Neural Information Processing
  Systems  \textbf{37},  102056--102077 (2024)

\bibitem{arc}
Dong, W., Yan, D., Lin, Z., Wang, P.: Efficient adaptation of large vision
  transformer via adapter re-composing. Advances in Neural Information
  Processing Systems  \textbf{36},  52548--52567 (2023)

\bibitem{vit}
Dosovitskiy, A., Beyer, L., Kolesnikov, A., Weissenborn, D., Zhai, X.,
  Unterthiner, T., Dehghani, M., Minderer, M., Heigold, G., Gelly, S.,
  Uszkoreit, J., Houlsby, N.: An image is worth 16x16 words: Transformers for
  image recognition at scale. ArXiv  \textbf{abs/2010.11929} (2020)

\bibitem{feng2019hypergraph}
Feng, Y., You, H., Zhang, Z., Ji, R., Gao, Y.: Hypergraph neural networks. In:
  Proceedings of the AAAI conference on artificial intelligence. vol.~33, pp.
  3558--3565 (2019)

\bibitem{hypergraph}
Fixelle, J.: Hypergraph vision transformers: Images are more than nodes, more
  than edges. In: Proceedings of the IEEE/CVF Conference on Computer Vision and
  Pattern Recognition. pp. 9751--9761 (2025)

\bibitem{3d}
Gao, Y., Wang, M., Tao, D., Ji, R., Dai, Q.: 3-d object retrieval and
  recognition with hypergraph analysis. IEEE transactions on image processing
  \textbf{21}(9),  4290--4303 (2012)

\bibitem{inductive}
Hamilton, W., Ying, Z., Leskovec, J.: Inductive representation learning on
  large graphs. Advances in neural information processing systems  \textbf{30}
  (2017)

\bibitem{e2vpt}
Han, C., Wang, Q., Cui, Y., Cao, Z., Wang, W., Qi, S., Liu, D.: E\^{} 2vpt: An
  effective and efficient approach for visual prompt tuning. arXiv preprint
  arXiv:2307.13770  (2023)

\bibitem{vig}
Han, K., Wang, Y., Guo, J., Tang, Y., Wu, E.: Vision gnn: An image is worth
  graph of nodes. Advances in neural information processing systems
  \textbf{35},  8291--8303 (2022)

\bibitem{vignn}
Han, Y., Wang, P., Kundu, S., Ding, Y., Wang, Z.: Vision hgnn: An image is more
  than a graph of nodes. In: Proceedings of the IEEE/CVF International
  Conference on Computer Vision. pp. 19878--19888 (2023)

\bibitem{mae}
He, K., Chen, X., Xie, S., Li, Y., Doll{\'a}r, P., Girshick, R.: Masked
  autoencoders are scalable vision learners. In: Proceedings of the IEEE/CVF
  conference on computer vision and pattern recognition. pp. 16000--16009
  (2022)

\bibitem{parameter}
He, X., Li, C., Zhang, P., Yang, J., Wang, X.E.: Parameter-efficient model
  adaptation for vision transformers. In: Proceedings of the AAAI Conference on
  Artificial Intelligence. vol.~37, pp. 817--825 (2023)

\bibitem{adapter}
Houlsby, N., Giurgiu, A., Jastrzebski, S., Morrone, B., De~Laroussilhe, Q.,
  Gesmundo, A., Attariyan, M., Gelly, S.: Parameter-efficient transfer learning
  for nlp. In: International conference on machine learning. pp. 2790--2799.
  PMLR (2019)

\bibitem{lora}
Hu, E.J., Shen, Y., Wallis, P., Allen-Zhu, Z., Li, Y., Wang, S., Wang, L.,
  Chen, W., et~al.: Lora: Low-rank adaptation of large language models. Iclr
  \textbf{1}(2), ~3 (2022)

\bibitem{video-hypergraph}
Huang, Y., Liu, Q., Metaxas, D.: ] video object segmentation by hypergraph cut.
  In: 2009 IEEE conference on computer vision and pattern recognition. pp.
  1738--1745. IEEE (2009)

\bibitem{sine}
Ji, Y., Saratchandran, H., Gordon, C., Zhang, Z., Lucey, S.: Efficient learning
  with sine-activated low-rank matrices. arXiv preprint arXiv:2403.19243
  (2024)

\bibitem{vpt}
Jia, M., Tang, L., Chen, B.C., Cardie, C., Belongie, S., Hariharan, B., Lim,
  S.N.: Visual prompt tuning. In: European conference on computer vision. pp.
  709--727. Springer (2022)

\bibitem{convpass}
Jie, S., Deng, Z.H.: Convolutional bypasses are better vision transformer
  adapters. arXiv preprint arXiv:2207.07039  (2022)

\bibitem{fact}
Jie, S., Deng, Z.H.: Fact: Factor-tuning for lightweight adaptation on vision
  transformer. In: Proceedings of the AAAI conference on artificial
  intelligence. vol.~37, pp. 1060--1068 (2023)

\bibitem{compacter}
Karimi~Mahabadi, R., Henderson, J., Ruder, S.: Compacter: Efficient low-rank
  hypercomplex adapter layers. Advances in neural information processing
  systems  \textbf{34},  1022--1035 (2021)

\bibitem{cars}
Krause, J., Stark, M., Deng, J., Fei-Fei, L.: 3d object representations for
  fine-grained categorization. 2013 IEEE International Conference on Computer
  Vision Workshops pp. 554--561 (2013)

\bibitem{ssf}
Lian, D., Zhou, D., Feng, J., Wang, X.: Scaling \& shifting your features: A
  new baseline for efficient model tuning. Advances in Neural Information
  Processing Systems  \textbf{35},  109--123 (2022)

\bibitem{daam}
Liao, Y., Gao, Y., Zhang, W.: Dynamic accumulated attention map for
  interpreting evolution of decision-making in vision transformer. Pattern
  Recognit.  \textbf{165},  111607 (2025)

\bibitem{BOFT}
Liu, W., Qiu, Z., Feng, Y., Xiu, Y., Xue, Y., Yu, L., Feng, H., Liu, Z., Heo,
  J., Peng, S., et~al.: Parameter-efficient orthogonal finetuning via butterfly
  factorization. arXiv preprint arXiv:2311.06243  (2023)

\bibitem{swinv2}
Liu, Z., Hu, H., Lin, Y., Yao, Z., Xie, Z., Wei, Y., Ning, J., Cao, Y., Zhang,
  Z., Dong, L., et~al.: Swin transformer v2: Scaling up capacity and
  resolution. In: Proceedings of the IEEE/CVF conference on computer vision and
  pattern recognition. pp. 12009--12019 (2022)

\bibitem{swin}
Liu, Z., Lin, Y., Cao, Y., Hu, H., Wei, Y., Zhang, Z., Lin, S., Guo, B.: Swin
  transformer: Hierarchical vision transformer using shifted windows. 2021
  IEEE/CVF International Conference on Computer Vision (ICCV) pp. 9992--10002
  (2021)

\bibitem{repadapter}
Luo, G., Huang, M., Zhou, Y., Sun, X., Jiang, G., Wang, Z., Ji, R.: Towards
  efficient visual adaption via structural re-parameterization. arXiv preprint
  arXiv:2302.08106  (2023)

\bibitem{video-multilevel}
Lv, X., Wang, L., Zhang, Q., Zheng, N., Hua, G.: Video object co-segmentation
  from noisy videos by a multi-level hypergraph model. In: 2018 25th IEEE
  International Conference on Image Processing (ICIP). pp. 2207--2211. IEEE
  (2018)

\bibitem{goft}
Ma, X., Chu, X., Yang, Z., Lin, Y., Gao, X., Zhao, J.: Parameter efficient
  quasi-orthogonal fine-tuning via givens rotation. arXiv preprint
  arXiv:2404.04316  (2024)

\bibitem{aircraft}
Maji, S., Rahtu, E., Kannala, J., Blaschko, M.B., Vedaldi, A.: Fine-grained
  visual classification of aircraft. ArXiv  \textbf{abs/1306.5151} (2013)

\bibitem{mobilevig}
Munir, M., Avery, W., Marculescu, R.: Mobilevig: Graph-based sparse attention
  for mobile vision applications. In: Proceedings of the IEEE/CVF Conference on
  Computer Vision and Pattern Recognition. pp. 2211--2219 (2023)

\bibitem{greedyvig}
Munir, M., Avery, W., Rahman, M.M., Marculescu, R.: Greedyvig: Dynamic axial
  graph construction for efficient vision gnns. In: Proceedings of the IEEE/CVF
  Conference on Computer Vision and Pattern Recognition. pp. 6118--6127 (2024)

\bibitem{flower}
Nilsback, M.E., Zisserman, A.: A visual vocabulary for flower classification.
  2006 IEEE Computer Society Conference on Computer Vision and Pattern
  Recognition (CVPR'06)  \textbf{2},  1447--1454 (2006)

\bibitem{cats}
Parkhi, O.M., Vedaldi, A., Zisserman, A., Jawahar, C.: Cats and dogs. In: 2012
  IEEE conference on computer vision and pattern recognition. pp. 3498--3505.
  IEEE (2012)

\bibitem{sa2vp}
Pei, W., Xia, T., Chen, F., Li, J., Tian, J., Lu, G.: Sa$^2$vp: Spatially
  aligned-and-adapted visual prompt. In: Proceedings of the AAAI conference on
  artificial intelligence. vol.~38, pp. 4450--4458 (2024)

\bibitem{da-vpt}
Ren, L., Chen, C., Wang, L., Hua, K.: Da-vpt: Semantic-guided visual prompt
  tuning for vision transformers. In: Proceedings of the Computer Vision and
  Pattern Recognition Conference. pp. 4353--4363 (2025)

\bibitem{imagenet}
Russakovsky, O., Deng, J., Su, H., Krause, J., Satheesh, S., Ma, S., Huang, Z.,
  Karpathy, A., Khosla, A., Bernstein, M.S., Berg, A.C., Fei-Fei, L.: Imagenet
  large scale visual recognition challenge. International Journal of Computer
  Vision  \textbf{115},  211 -- 252 (2014)

\bibitem{collective}
Sen, P., Namata, G., Bilgic, M., Getoor, L., Galligher, B., Eliassi-Rad, T.:
  Collective classification in network data. AI magazine  \textbf{29}(3),
  93--93 (2008)

\bibitem{visionhgnn}
Srinivas, S.S., Sarkar, R.K., Gangasani, S., Runkana, V.: Vision hgnn: An
  electron-micrograph is worth hypergraph of hypernodes. arXiv preprint
  arXiv:2408.11351  (2024)

\bibitem{deit}
Touvron, H., Cord, M., Douze, M., Massa, F., Sablayrolles, A., J{\'e}gou, H.:
  Training data-efficient image transformers \& distillation through attention.
  In: International conference on machine learning. pp. 10347--10357. PMLR
  (2021)

\bibitem{chemical}
Wale, N., Watson, I.A., Karypis, G.: Comparison of descriptor spaces for
  chemical compound retrieval and classification. Knowledge and Information
  Systems  \textbf{14}(3),  347--375 (2008)

\bibitem{lion}
Wang, H., Chang, J., Zhai, Y., Luo, X., Sun, J., Lin, Z., Tian, Q.: Lion:
  Implicit vision prompt tuning. In: Proceedings of the AAAI conference on
  artificial intelligence. vol.~38, pp. 5372--5380 (2024)

\bibitem{orthogonal_subspace}
Wu, F., Hu, J., Min, G., Wang, S.: Efficient orthogonal fine-tuning with
  principal subspace adaptation. arXiv preprint arXiv:2505.11235  (2025)

\bibitem{self-supervised-vpt}
Yoo, S., Kim, E., Jung, D., Lee, J., Yoon, S.: Improving visual prompt tuning
  for self-supervised vision transformers. In: International Conference on
  Machine Learning. pp. 40075--40092. PMLR (2023)

\bibitem{vpeft}
Yu, B.X., Chang, J., Liu, L., Tian, Q., Chen, C.W.: Towards a unified view on
  visual parameter-efficient transfer learning. arXiv preprint arXiv:2210.00788
   (2022)

\bibitem{bitfit}
Zaken, E.B., Goldberg, Y., Ravfogel, S.: Bitfit: Simple parameter-efficient
  fine-tuning for transformer-based masked language-models. In: Proceedings of
  the 60th Annual Meeting of the Association for Computational Linguistics
  (Volume 2: Short Papers). pp.~1--9 (2022)

\bibitem{vtab}
Zhai, X., Puigcerver, J., Kolesnikov, A., Ruyssen, P., Riquelme, C., Lucic, M.,
  Djolonga, J., Pinto, A.S., Neumann, M., Dosovitskiy, A., et~al.: A
  large-scale study of representation learning with the visual task adaptation
  benchmark. arXiv preprint arXiv:1910.04867  (2019)

\bibitem{noah}
Zhang, Y., Zhou, K., Liu, Z.: Neural prompt search. IEEE Transactions on
  Pattern Analysis and Machine Intelligence  \textbf{47}(7),  5268--5280 (2024)

\bibitem{gps}
Zhang, Z., Zhang, Q., Gao, Z., Zhang, R., Shutova, E., Zhou, S., Zhang, S.:
  Gradient-based parameter selection for efficient fine-tuning. In: Proceedings
  of the IEEE/CVF Conference on Computer Vision and Pattern Recognition. pp.
  28566--28577 (2024)

\end{thebibliography}

\newpage
\appendix

\section{Dataset Statistics}
\begin{table}[tbp]
\centering
\caption{VTAB-1k datasets \cite{vtab} categorized into Natural, Specialized, and Structured groups. Training set sizes are 800 or 1,000 depending on availability.}
\small
\begin{tabular}{llcccc}
\toprule
\textbf{Category} & \textbf{Dataset} & \textbf{\# Classes} & \textbf{Train} & \textbf{Val} & \textbf{Test} \\
\midrule
\multirow{7}{*}{Natural} 
 & CIFAR100  & 100 & & & 10,000 \\
 & Caltech101  & 102 & & & 6,084 \\
 & DTD  & 47 & & & 1,880 \\
 & Oxford-Flowers102  & 102 & 800/1,000 & 200 & 6,149 \\
 & Oxford-Pets  & 37 & & & 3,669 \\
 & SVHN  & 10 & & & 26,032 \\
 & Sun397  & 397 & & & 21,750 \\
\midrule
\multirow{4}{*}{Specialized}
 & Patch Camelyon  & 2 & & & 32,768 \\
 & EuroSAT  & 10 & & & 5,400 \\
 & Resisc45  & 45 & 800/1,000 & 200 & 6,300 \\
 & Retinopathy  & 5 & & & 42,670 \\
\midrule
\multirow{8}{*}{Structured}
 & Clevr/count  & 8 & & & 15,000 \\
 & Clevr/distance  & 6 & & & 15,000 \\
 & DMLab  & 6 & & & 22,735 \\
 & KITTI-Dist  & 4 & 800/1,000 & 200 & 711 \\
 & dSprites/location  & 16 & & & 73,728 \\
 & dSprites/orientation  & 16 & & & 73,728 \\
 & SmallNORB/azimuth  & 18 & & & 12,150 \\
 & SmallNORB/elevation  & 18 & & & 12,150 \\
\bottomrule
\end{tabular}
\label{tab:vtab1k}
\end{table}
We provide detailed information about the datasets used in this paper, including the number of classes and the sizes of the training, validation and test sets in Table \ref{tab:vtab1k} and Table \ref{tab:fewshot}. 

The VTAB-1K datasets consists of three categories: Natural, Specialized and Structured tasks. The Natural category includes datasets such as CIFAR-100, Caltech101, DTD, Flowers102, Pets, SVHN, and Sun397. The Specialized category includes datasets such as Patch Camelyon, EuroSAT, Resisc45, and Diabetic-Retinopathy, and the Structured category includes Clevr/count, Clevr/distance, DMLab, KITTI/distance, dSprites/location, dSprites/orientation, SmallNORB/azimuth, and SmallNORB/elevation. 

\begin{table}[tbp]
\centering
\caption{Few-shot datasets used for evaluation. Training size varies (e.g., 1/2/4/8/16 per class), with fixed validation and test sets.}
\small
\begin{tabular}{lcccc}
\toprule
\textbf{Dataset} & \textbf{\# Classes} & \textbf{Train} & \textbf{Val} & \textbf{Test} \\
\midrule
Food-101 & 101 &  &20,200 & 30,300 \\
Stanford Cars & 196 &  &1,635 & 8,041 \\
Oxford-Flowers102 & 102 & 1/2/4/8/16 per class & 1,633 & 2,463 \\
FGVC-Aircraft & 100 &  &3,333 & 3,333 \\
Oxford-Pets & 37 &  &736 & 3,669 \\
\bottomrule
\end{tabular}
\label{tab:fewshot}
\end{table}

For the the few-shot fine-grained visual recognition, the datasets consists of FGVC-Aircraft\cite{aircraft}, Food-101\cite{food101}, Oxford-Flowers102 \cite{flower}, Oxford-Pets\cite{cats} and Stanford Cars\cite{cars}.

\begin{table}[tbp]
\centering
\caption{Experiment configurations for VTAB-1K and few-shot fine-grained visual recognition experiments.}
\small
\resizebox{\textwidth}{!}{%
\begin{tabular}{lccccccc}
\toprule
 \textbf{Dataset} & \textbf{optimizer} & \textbf{batch-size} & \textbf{learning-rate} & \textbf{weight-decay} & \textbf{epochs} & \textbf{lr-decay} & \textbf{ warm-up} \\
\midrule
VTAB-1K & AdamW & 64 & 1e-3 & 1e-4 & 100 & cosine & 10 \\
Few-shot & AdamW & 64 & 5e-3 & 1e-4 & 100 & cosine & 10 \\
\bottomrule
\end{tabular}
}
\label{tab:experiment-configuration}
\end{table}

\section{Experimental Setup}
In our experiments, we choose ViT-B/16\cite{vit} trained on ImageNet-21K as our backbone. For VTAB-1K, we resize the images to $224 \times 224$. Different from VTAB-1K, we use random augmentations during training, for validation/test samples, we resize them to 256$\times$256, crop them to 224$\times$224 at the center, and then normalize them with ImageNet’s mean and standard deviation.

For Swin Transformer, HyperAdapter is inserted in parallel to the attention and MLP modules in each transformer block across all stages, analogous to the ViT setting. We use a fixed number of hyperedges $K$ across stages, since hyperedges represent latent semantic groups rather than spatial partitions. The routing operates directly on token features, enabling adaptive grouping despite varying token resolutions.

Table \ref{tab:experiment-configuration} shows our experiment configurations.

\section{Complexity Analysis}

Let $N$ denote the number of patch tokens, $D$ the hidden dimension, $K$ the number of hyperedges, and $r$ the adapter bottleneck dimension.

\textbf{Parameter complexity.} HyperAdapter introduces trainable hyperedge prototypes and lightweight bottleneck weights, resulting in $O(KD + Dr)$, where $K \ll N$ and $r \ll D$. This ensures that the additional parameter overhead is minimal compared to the frozen ViT backbone.

\textbf{Computational complexity.} The main operations include token-to-hyperedge routing, hyperedge aggregation, and diffusion back to token space, each costing $O(NKD)$, while the hyperedge-level bottleneck adaptation requires $O(KDr)$.
Since self-attention in a standard ViT has complexity $O(N^2D)$, the extra computations introduced by HyperAdapter are negligible in practice, preserving the efficiency of parameter-efficient fine-tuning.

\begin{table}[tbp]
\centering
\caption{Efficiency comparison of different PEFT methods. HyperAdapter introduces a small computational overhead compared to baseline, AdaptFormer, and LoRA while remaining efficient relative to the frozen backbone.}
\label{tab:efficiency}
\begin{tabular}{lcccc}
\toprule
{Method} & {Train time} & {Inference time} & {Memory} & {FLOPs} \\
& (ms/batch) & (ms/batch) & (GB) & \\
\midrule
Baseline      & 224 & 121 & 3.0 & 17.6 \\
AdaptFormer   & 212 & 117 & 2.8 & 17.6 \\
LoRA          & 218 & 118 & 2.9 & 17.6 \\
HyperAdapter  & 239 & 129 & 3.2 & 17.8 \\
\bottomrule
\end{tabular}
\end{table}

\subsection{Training time and Inference Time}


We provide quantitative comparisons of training/inference latency, peak memory, and FLOPs measured on a single NVIDIA A4000 GPU (batch size 64).

HyperAdapter introduces only modest overhead compared with token-wise PEFT methods (Table \ref{tab:efficiency}): training latency increases from 212-218 to 239 ms/batch, inference latency from 117-121 to 129 ms/batch, and memory from 2.8-3.0 to 3.2 GB, while FLOPs remain nearly unchanged (17.8G vs. 17.6G). 
The overhead mainly comes from token-hyperedge routing and aggregation. Since these operations are performed over a small latent hyperedge space ($K=8$) within a low-rank bottleneck adapter, the additional complexity is ${O}(NKD + KDr)$, which is negligible compared to the backbone self-attention cost ${O}(N^2D)$. 

Thus, HyperAdapter preserves a favorable efficiency-performance trade-off while enabling structured group-level adaptation.



\textbf{Hyperedge number $K$.} Hyperedges in HyperAdapter represent \emph{latent semantic groups} rather than fixed spatial regions, so $K$ does not scale directly with image resolution or token count. Tokens are dynamically grouped through soft similarity-based routing, and moderate values (e.g., $K=8$) generalize well across datasets and architectures. Even for higher-resolution inputs, HyperAdapter compresses token features into a fixed number of hyperedges, keeping the interaction space compact and computationally efficient.

\section{More Ablations}
\subsection{Effect of Bottleneck Dimension}

\begin{table}[tbp]
\centering
\caption{Effect of bottleneck dimension $r$. Increasing $r$ increases adapter capacity but also the number of trainable parameters.}
\label{tab:bottleneck_ablation}
\begin{tabular}{lcc}
\toprule
$r$ & Average (\%) & Params (M) \\
\midrule
4  & 77.2 & 0.29 \\
\textbf{8}  & \textbf{77.6} & 0.44 \\
16 & 77.2 & 0.74 \\
32 & 77.2 & 1.32 \\
64 & 77.0 & 2.51 \\
\bottomrule
\end{tabular}
\end{table}

We analyze the impact of the adapter bottleneck dimension $r$, which controls the capacity of the hyperedge adapter.

As shown in Table~\ref{tab:bottleneck_ablation}, performance improves when increasing $r$ from 4 to 8, but quickly saturates thereafter. In particular, larger bottleneck dimensions ($r \ge 16$) provide no additional benefit despite introducing substantially more parameters. This suggests that the performance gains of HyperAdapter primarily arise from structured hyperedge routing rather than increased adapter capacity. Consequently, we adopt $r=8$ for all experiments as it offers the best trade-off between accuracy and parameter efficiency.

\subsection{Effect of CLS Token Aggregation}
We analyze different strategies for propagating hyperedge updates to the CLS token. 

As shown in Table~\ref{tab:cls_ablation}, aggregating hyperedge representations (MeanH) achieves the best performance.
Removing CLS updates (Zero) slightly degrades performance, while aggregating patch updates (MeanPatch) performs similarly but remains inferior to hyperedge aggregation.
This suggests that global representations formed in hyperedge space provide a more informative signal for the CLS token, highlighting the importance of structured token grouping in HyperAdapter.

\begin{table}[tbp]
\centering
\caption{Effect of CLS token aggregation strategy. We compare different ways of propagating hyperedge updates to the CLS token.}
\label{tab:cls_ablation}
\begin{tabular}{lc}
\toprule
CLS Aggregation & Average (\%) \\
\midrule
Zero (no CLS update) & 77.1 \\
MeanPatch & 77.0 \\
\textbf{MeanH (Ours)} & \textbf{77.6} \\
\bottomrule
\end{tabular}
\end{table}

\begin{figure}[tbp]
\centering
\setlength{\tabcolsep}{1.2pt}
\renewcommand{\arraystretch}{0.92}
\newcommand{\imgH}{35mm}
\newcommand{\rowlabw}{2.4cm}
\newcommand{\imgcellw}{0.28\textwidth}
\newcommand{\img}[1]{\includegraphics[height=\imgH,width=\imgcellw,keepaspectratio]{#1}}
\newcommand{\rowlab}[1]{\centering\scriptsize\bfseries #1}

\begin{adjustbox}{max width=\linewidth}
\begin{tabular}{@{} >{\centering\arraybackslash}m{\rowlabw} *{3}{>{\centering\arraybackslash}m{\imgcellw}} @{}}

& \scriptsize Block~1 & \scriptsize Block~6 & \scriptsize Block~12 \\[2pt]

\rowlab{CIFAR100(Attn)} &
\img{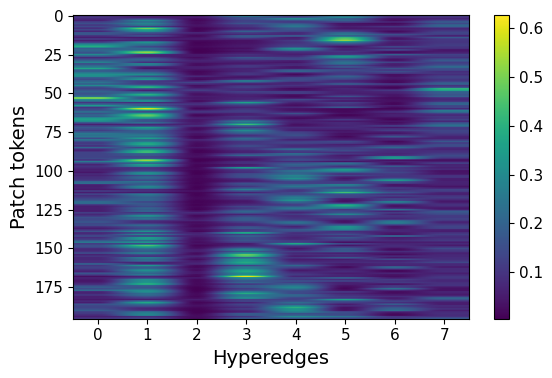} &
\img{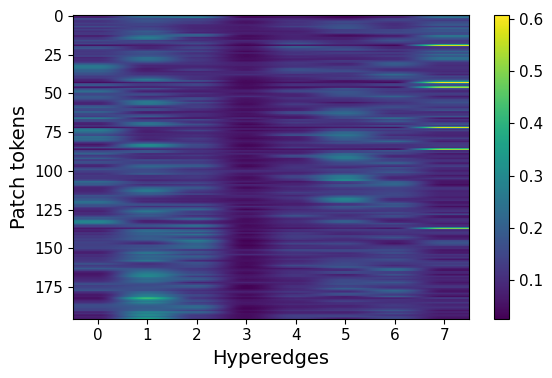} &
\img{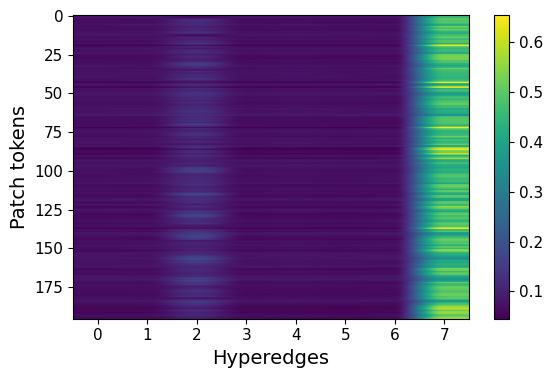} \\[-1pt]

\rowlab{CIFAR100(MLP)} &
\img{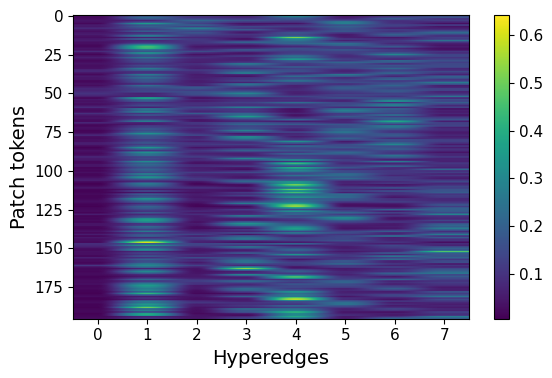} &
\img{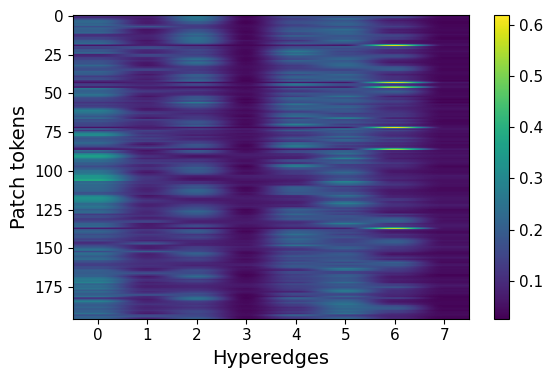} &
\img{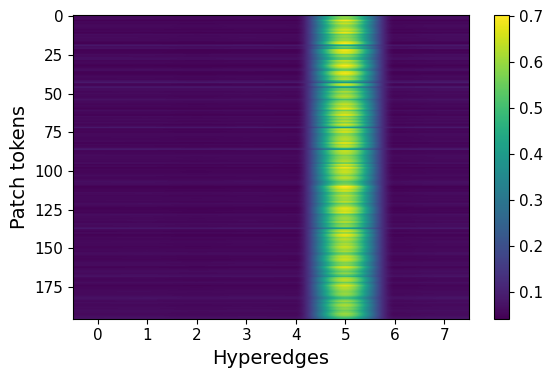} \\[-1pt]

\rowlab{EuroSAT(Attn)} &
\img{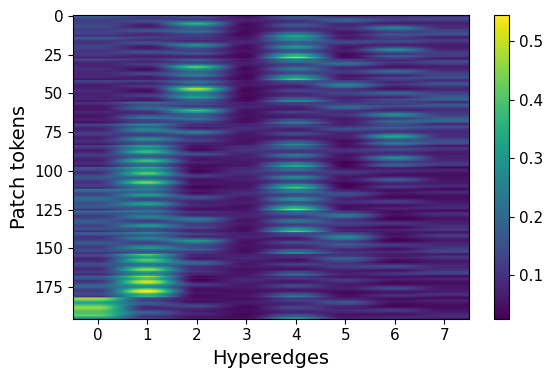} &
\img{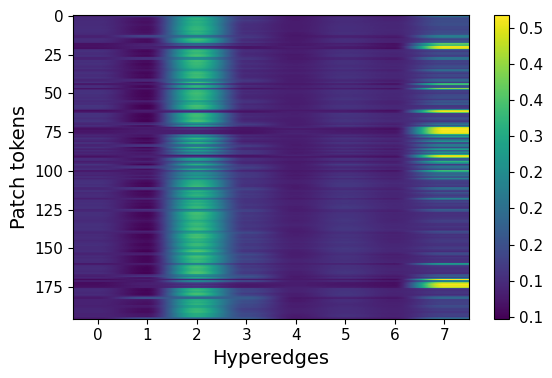} &
\img{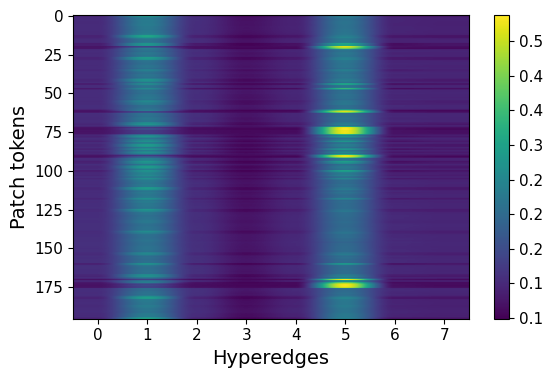} \\[-1pt]

\rowlab{EuroSAT(MLP)} &
\img{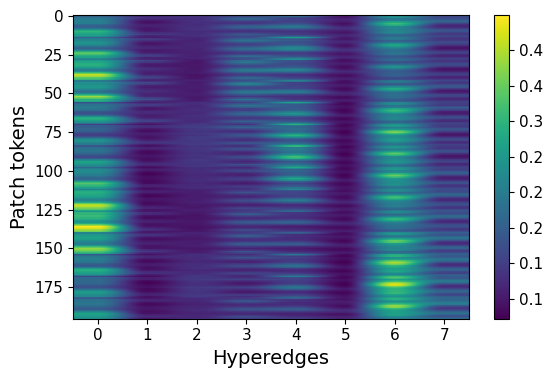} &
\img{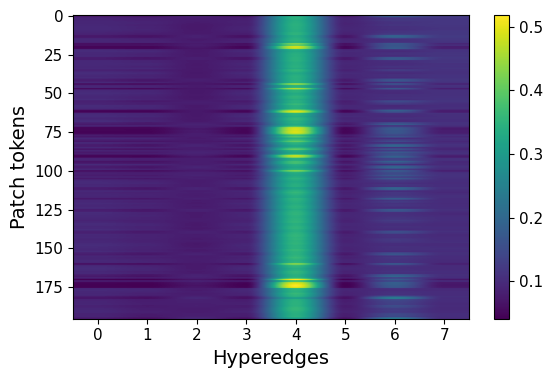} &
\img{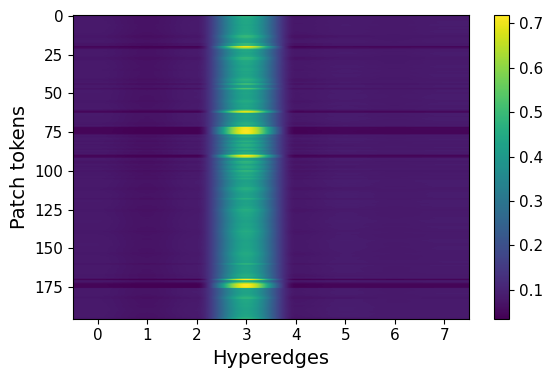} \\[-1pt]

\rowlab{KITTI(Attn)} &
\img{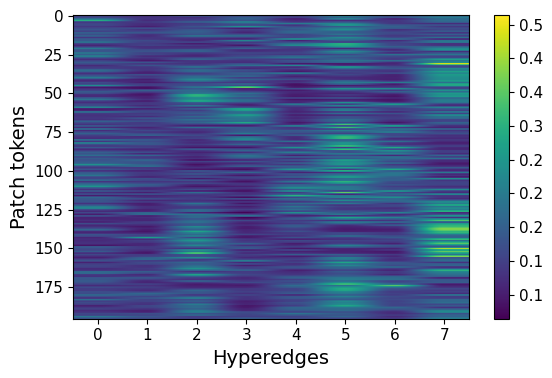} &
\img{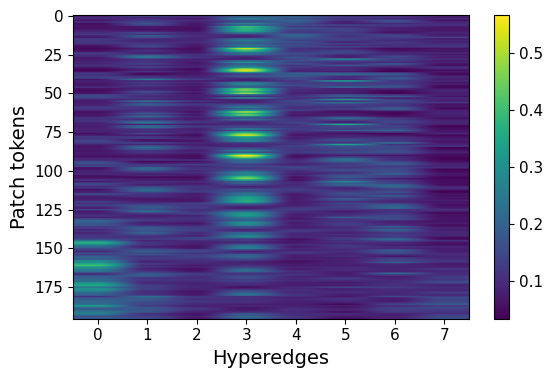} &
\img{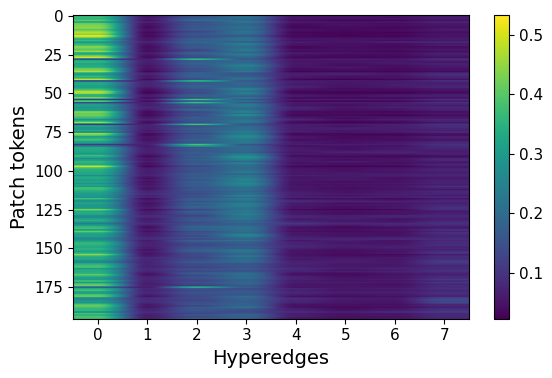} \\[-1pt]

\rowlab{KITTI(MLP)} &
\img{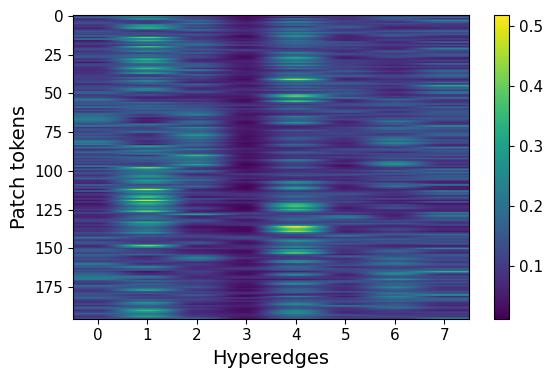} &
\img{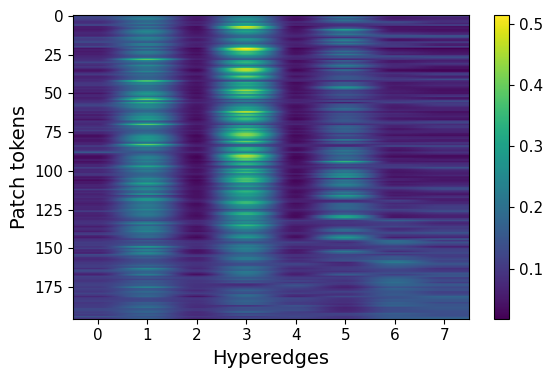} &
\img{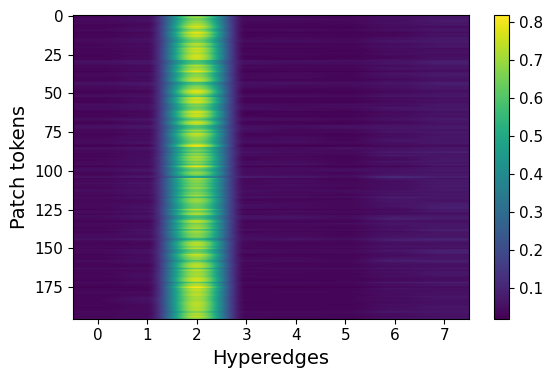} \\

\end{tabular}
\end{adjustbox}
\caption{ Token-to-hyperedge membership heatmaps across representative transformer layers. Each heatmap visualizes the routing matrix $\mM$, where rows correspond to patch tokens and columns correspond to hyperedges. We show routing patterns from Blocks 1, 6, and 12 for both attention and MLP adapters across CIFAR100, EuroSAT, and KITTI. Early layers exhibit diffuse and distributed routing assignments, while deeper layers show increasingly structured and concentrated patterns, indicating progressive specialization of hyperedges. This behavior is consistent across datasets and modules, suggesting that HyperAdapter learns hierarchical token groupings throughout the network. For clarity, we recommend zooming in.}
\label{fig:heatmaps}
\end{figure}

\section{More Analysis}

\subsection{Additional Discussions}

\textbf{Relation to HGNNs.} While HyperAdapter is conceptually related to hypergraph learning, it differs fundamentally from HGNNs \cite{feng2019hypergraph} in both objective and formulation. HGNNs are end-to-end backbone models for relational representation learning, whereas HyperAdapter is a lightweight PEFT module for frozen transformers that introduces structured adaptation without modifying the backbone.
HyperAdapter differs from HGNNs in three aspects: (i) \textbf{objective}: revisiting \emph{where adaptation occurs} in PEFT by shifting from token-wise to group-level adaptation; (ii) \textbf{computation}: replacing iterative graph propagation with lightweight \emph{routing $\rightarrow$ hyperedge adaptation $\rightarrow$ diffusion} inside bottleneck adapters; and (iii) \textbf{design}: preserving the low-rank and parameter-efficient properties of PEFT while remaining permutation equivariant (Prop. 2) and reducing to token-wise adapters as a special case (Prop. 1).
Importantly, hyperedges in HyperAdapter serve only as a transient interaction space for efficient adaptation rather than as the primary feature extraction mechanism. To the best of our knowledge, this is the first work formulating PEFT as \emph{hyperedge-level structured adaptation} for ViTs.

\begin{wraptable}{r}{0.5\linewidth}
\vspace{-1.2cm}
\centering
\caption{ADE20K semantic segmentation results. HyperAdapter achieves the highest mIoU among PEFT methods.}
\label{table:ade20k}
\resizebox{\linewidth}{!}{
\begin{tabular}{l|c|c|c}
\toprule
\textbf{Method} & \textbf{mIoU-SS} & \textbf{mIoU-MS} & \textbf{Params.(M)} \\
\midrule
Full fine-tuning  & 48.31 & 50.07 & 318.31 \\
Linear probing    & 35.12 & 37.46 & 13.18 \\
\midrule
VPT                & 42.11 & 44.06 & 13.43 \\
RepAdapter         & 44.44 & 46.17 & 13.82 \\
HyperAdapter       & \textbf{45.20} & \textbf{46.95} & 14.72 \\
\bottomrule
\end{tabular}}
\vspace{-0.8cm}
\end{wraptable}

\textbf{Generalization.}
While VTAB and few-shot FGVC are standard PEFT benchmarks for controlled comparison, HyperAdapter is designed as a general and architecture-agnostic PEFT module. Our main paper (Tables 1--3) demonstrates consistent improvements across diverse backbones, including ViT-B/16, ViT-L/16, and hierarchical Swin-Base transformers.
To further evaluate generalization beyond image classification, we additionally tested HyperAdapter on ADE20K semantic segmentation using a ViT-L backbone (Table \ref{table:ade20k}).
HyperAdapter outperforms strong PEFT baselines on dense prediction, showing that structured hyperedge adaptation generalizes beyond image classification to spatially dense vision tasks.

\begin{wraptable}{r}{0.45\linewidth}
\vspace{-1.2cm}
\centering
\caption{Performance under matched parameter budgets. HyperAdapter achieves the highest accuracy. 
}
\label{table:eff-acc}
\resizebox{\linewidth}{!}{
\begin{tabular}{l|c|c|c}
\toprule
Method & Bottleneck $r$ & Params (M) & Avg. Acc. \\
\midrule
Baseline      & 12 & 0.44 & 76.7 \\
AdaptFormer   & 24 & 0.44 & 76.6 \\
HyperAdapter     & 8  & 0.44 & 77.6 \\
\bottomrule
\end{tabular}}
\vspace{-0.8cm}
\end{wraptable}

\textbf{Params vs. gains.} We enlarged token-wise adapters to match HyperAdapter’s parameter budget (Table \ref{table:eff-acc}). 
HyperAdapter still achieves higher accuracy, showing that the gains come from structured hyperedge adaptation rather than increased capacity alone.

\textbf{Structural inductive bias.}
HyperAdapter does not explicitly impose spatial priors; instead, the structural bias emerges through representation similarity. In pretrained ViTs, token embeddings encode rich spatial and semantic information, allowing similarity-based routing to group semantically related regions into shared hyperedges.
Importantly, the routing is dynamic and data-dependent rather than manually constrained by spatial neighborhoods. As shown in Fig. 6 of the main paper, hyperedges align with meaningful object regions despite the absence of explicit spatial supervision.

\begin{table}[tbp]
\centering
\caption{Performance comparison with recent PEFT methods on VTAB-1K using a ViT-L backbone. HyperAdapter consistently outperforms prior approaches and achieves the highest average accuracy, particularly excelling on Structured tasks.}
\label{tab:vitl}
\begin{tabular}{l|c|c|c|c}
\toprule
Method & Natural & Specialized & Structured & Average (\%) \\
\midrule
VFPT (NeurIPS 2024)  & 81.4 & 84.9 & 60.2 & 75.5 \\
ViaPT (ACMMM 2024) & 82.6 & 85.2 & 61.3 & 76.4 \\
HyperAdapter & 82.3 & 86.6 & 63.8 & 77.6 \\
\bottomrule
\end{tabular}
\end{table}

\textbf{Comparison with recent baselines.}
We compared HyperAdapter with recent PEFT methods, including VFPT and ViaPT, under the same VTAB-1K protocol and ViT-L backbone setting (Table \ref{tab:vitl}).
HyperAdapter achieves the best overall performance, with strong gains on Structured tasks, further supporting the effectiveness of structured hyperedge adaptation. 

\subsection{Membership Heatmaps Across Layers}
Fig. \ref{fig:heatmaps} presents token-to-hyperedge
membership heatmaps across all transformer layers for CIFAR100, EuroSAT, and KITTI. Each heatmap corresponds to the routing matrix $\mM$, where rows represent patch tokens and columns correspond to hyperedges. 

Several consistent patterns can be observed. In early transformer layers (Block 1), routing assignments are relatively diffuse, with tokens distributed across multiple hyperedges. This suggests that low-level representations remain shared and broadly distributed. As the network depth increases, routing patterns become progressively more structured and concentrated, with tokens exhibiting stronger preferences for specific hyperedges. This indicates that hyperedges become increasingly specialized in capturing semantic token groups. This trend is visible across both attention and MLP adapters,
demonstrating that hyperedge-level adaptation operates consistently across transformer submodules. 
Moreover, the same hierarchical behavior appears across different datasets, suggesting that the routing mechanism learns dataset-agnostic grouping structures. 

These visualizations provide further evidence that HyperAdapter progressively organizes tokens into semantically coherent groups as features propagate through the network.

\begin{figure}[tbp]
\centering
\setlength{\tabcolsep}{1.2pt}
\renewcommand{\arraystretch}{0.9}
\newcommand{\imgH}{35mm}
\newcommand{\rowlabw}{2.4cm}
\newcommand{\imgcellw}{0.25\textwidth}
\newcommand{\img}[1]{\includegraphics[height=\imgH,width=\imgcellw,keepaspectratio]{#1}}
\newcommand{\rowlab}[1]{\centering\scriptsize\bfseries #1}
\begin{adjustbox}{max width=\linewidth}
\begin{tabular}{@{} >{\centering\arraybackslash}m{\rowlabw} *{3}{>{\centering\arraybackslash}m{\imgcellw}} @{}}
& \scriptsize Block~1 & \scriptsize Block~6 & \scriptsize Block~12 \\[2pt]
\rowlab{Flowers102(Attn)} &
\img{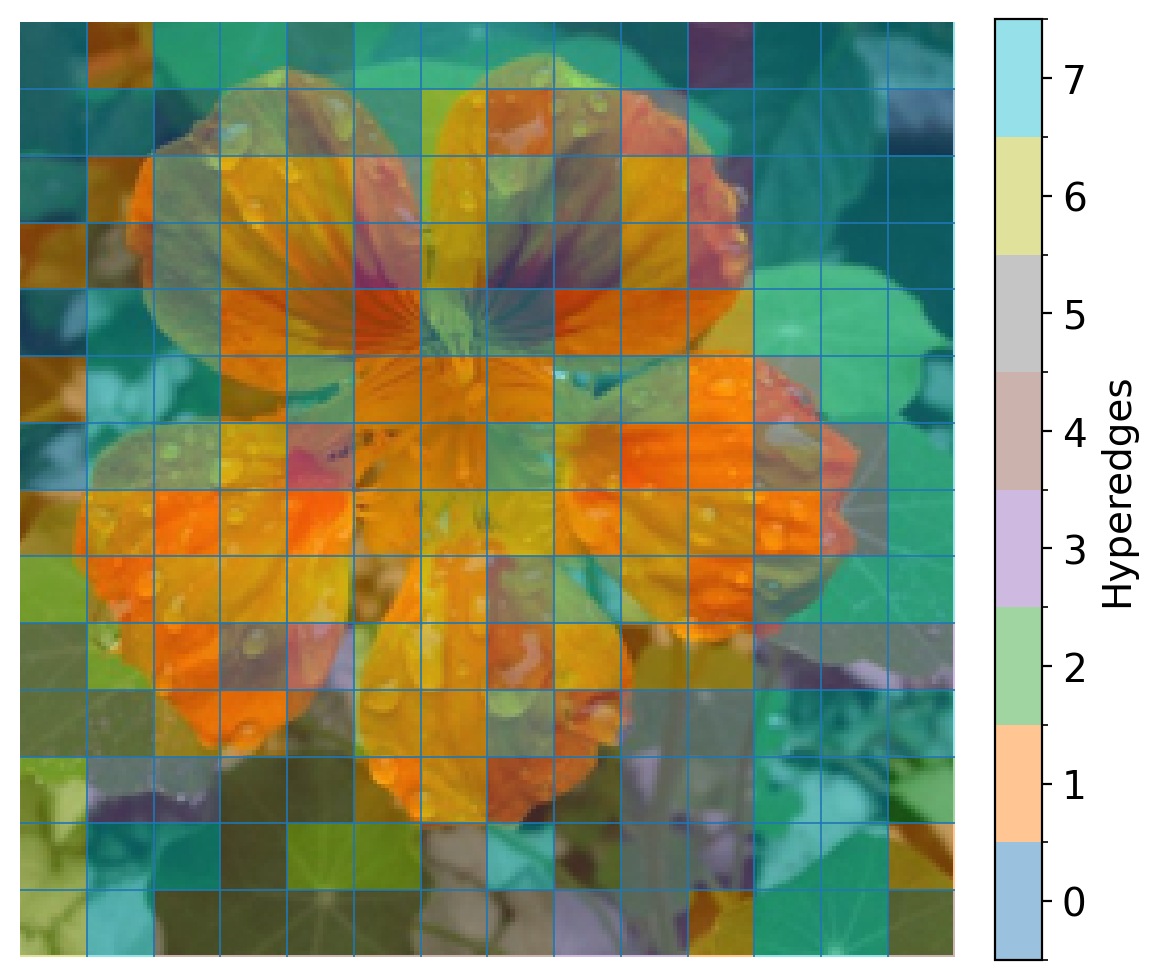} &
\img{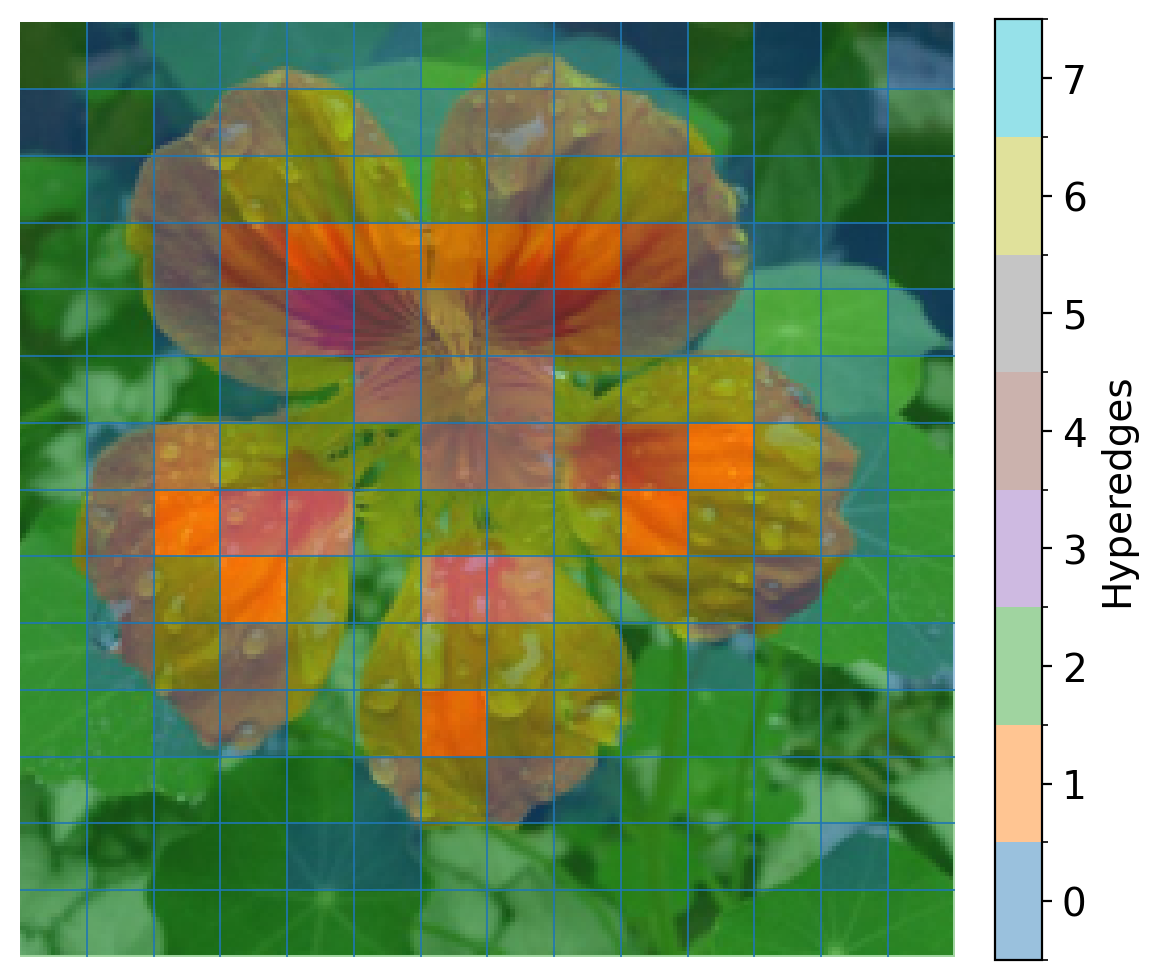} &
\img{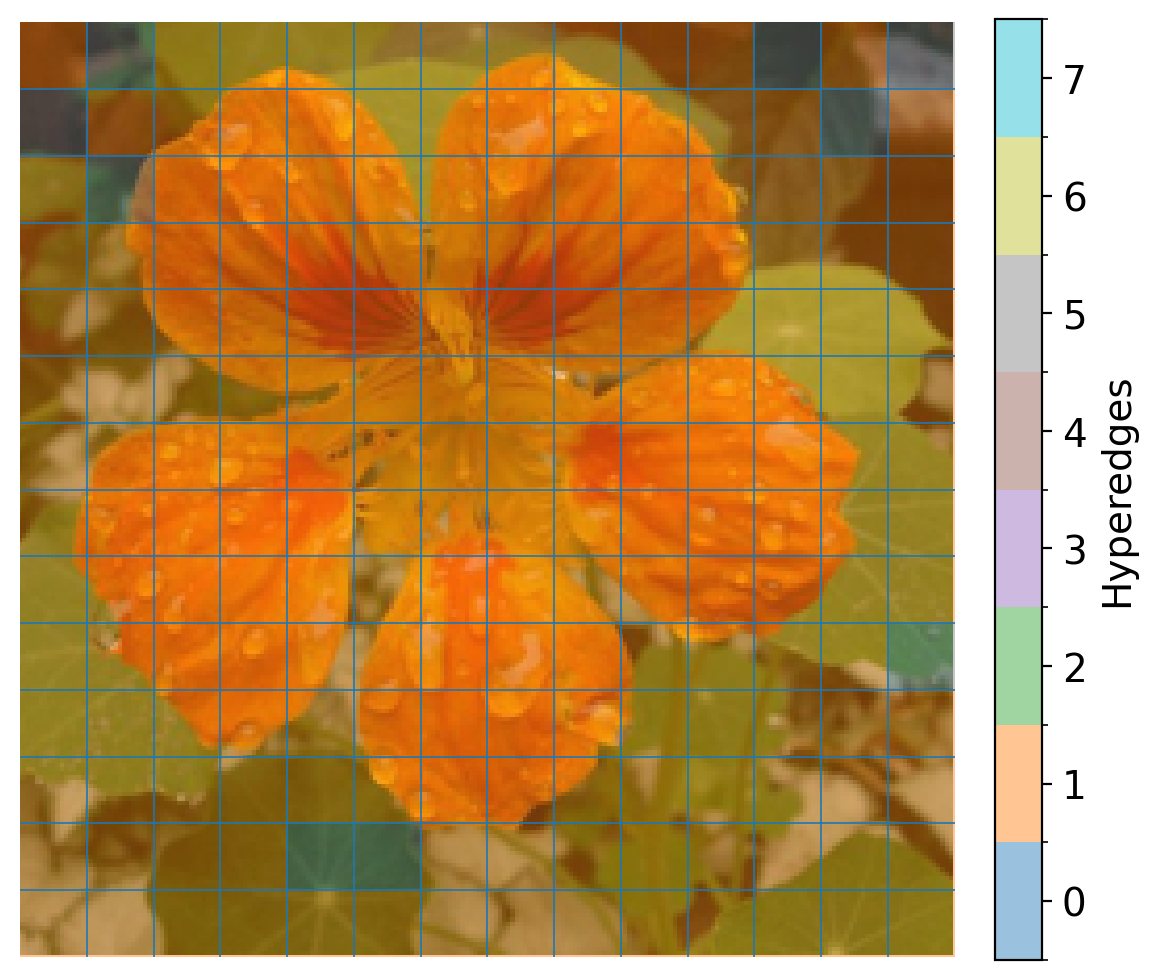} \\[-1pt]

\rowlab{Flowers102(MLP)} &
\img{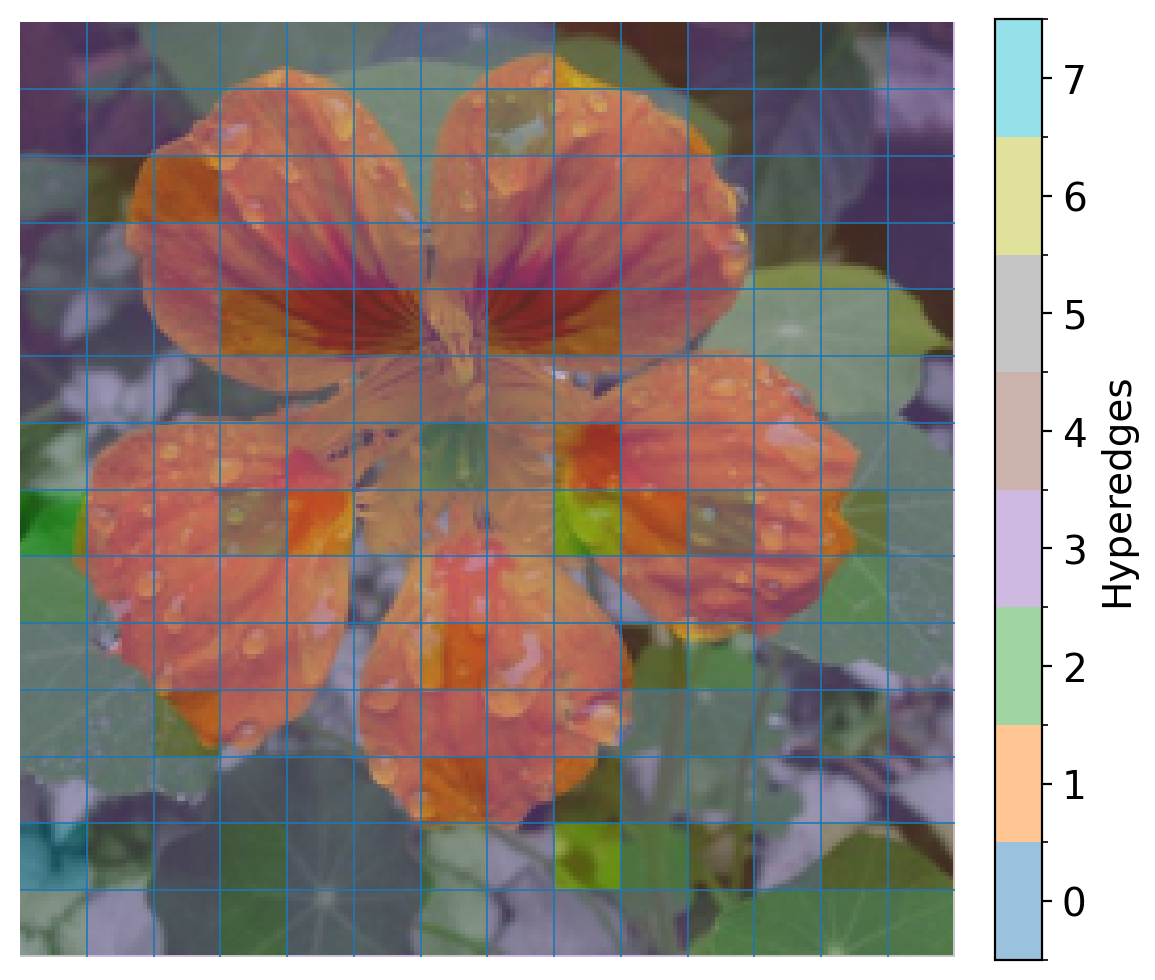} &
\img{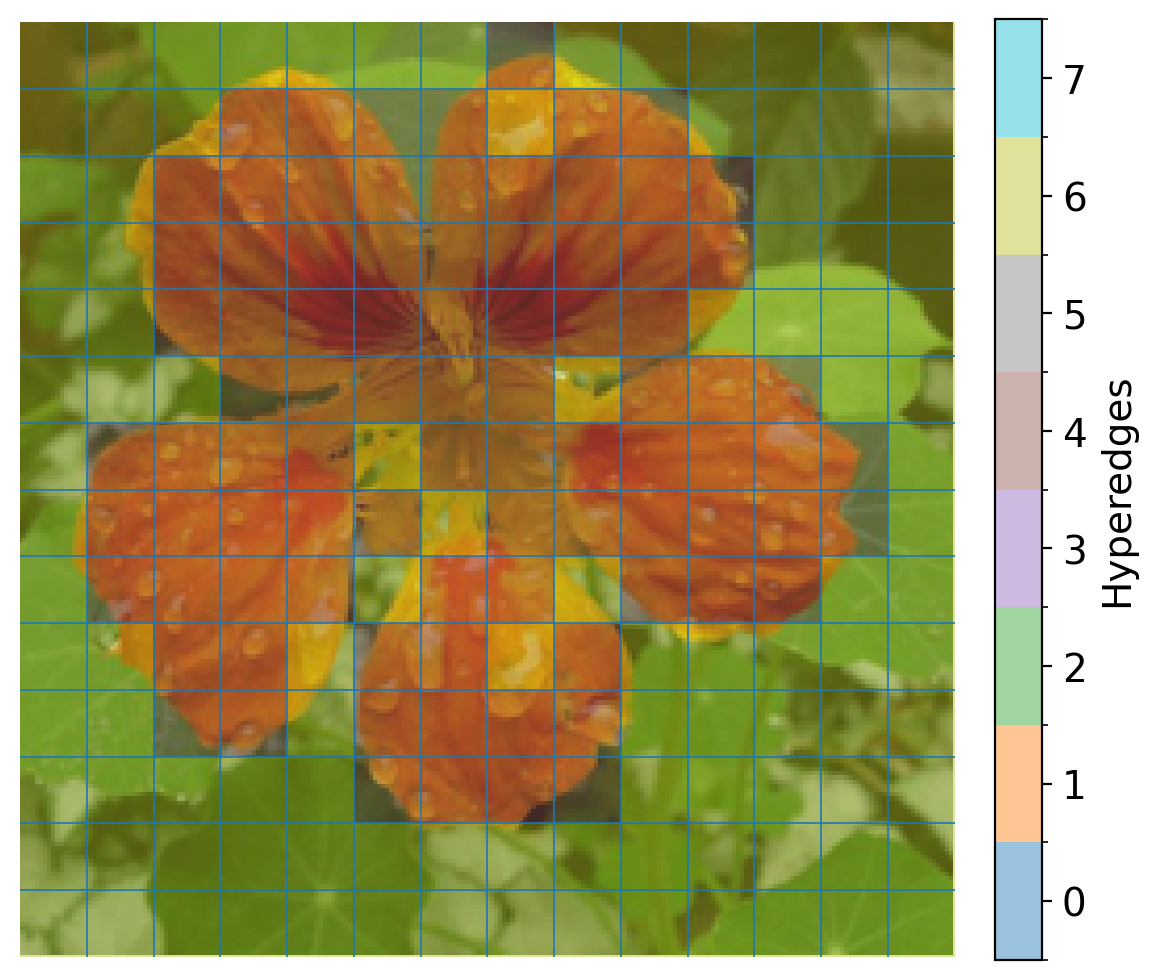} &
\img{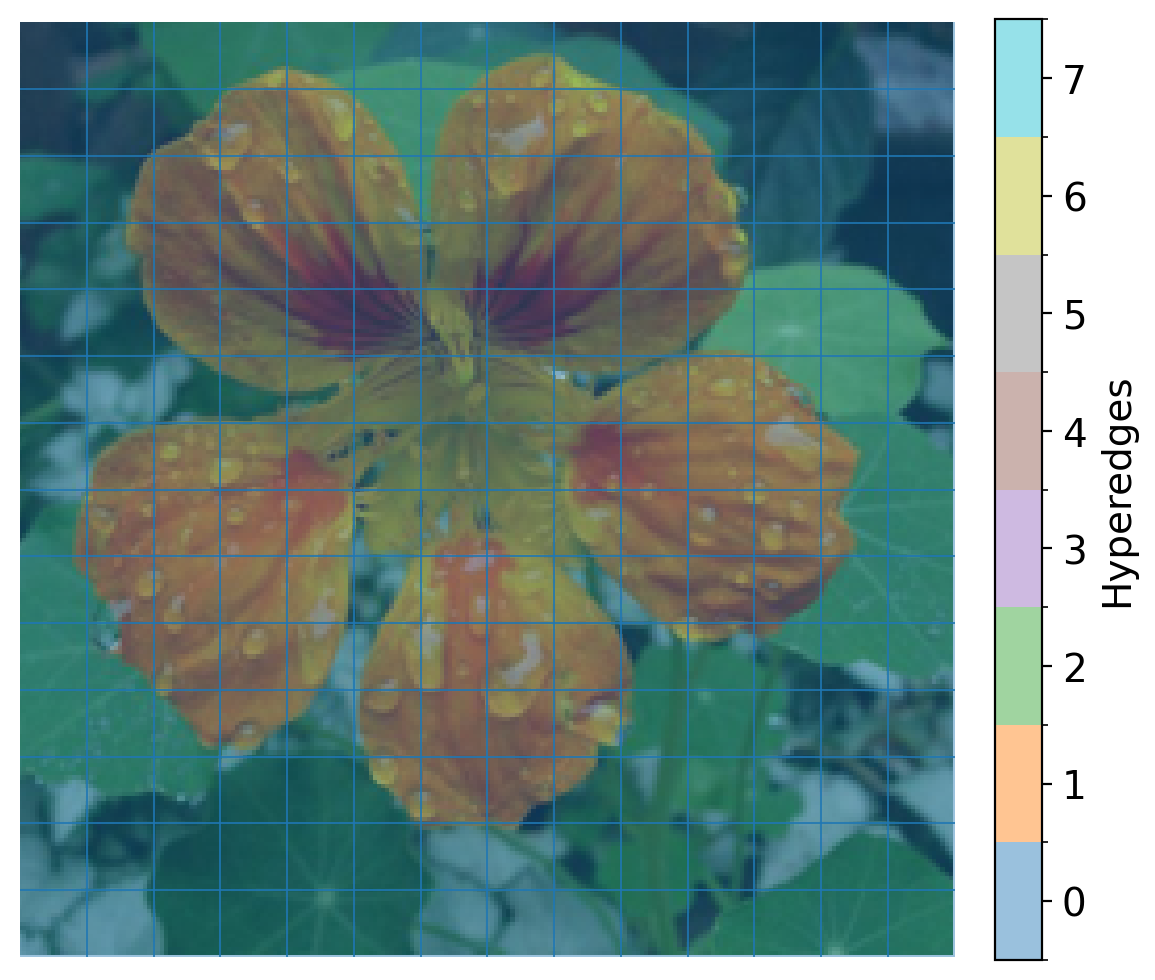} \\[-1pt]

\rowlab{Pets(Attn)} &
\img{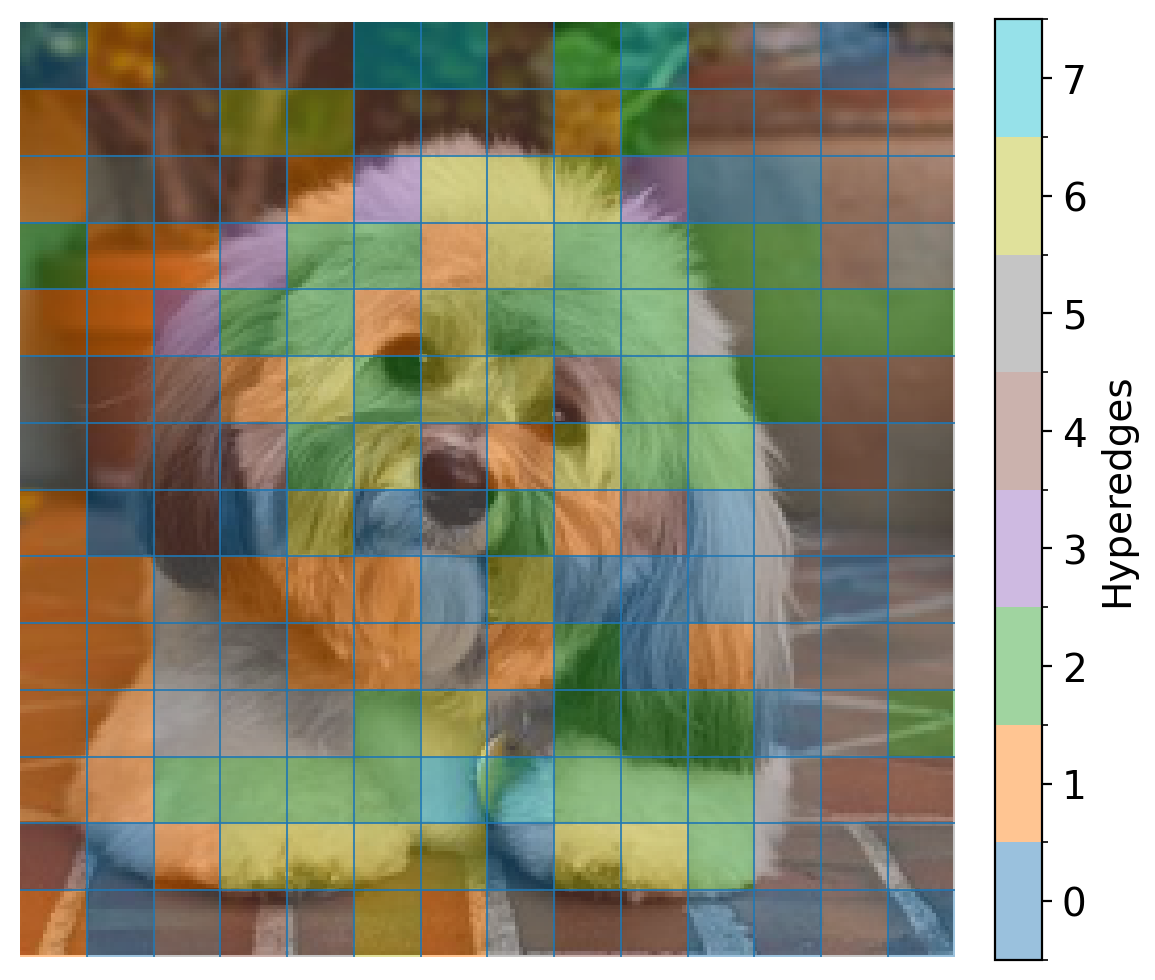} &
\img{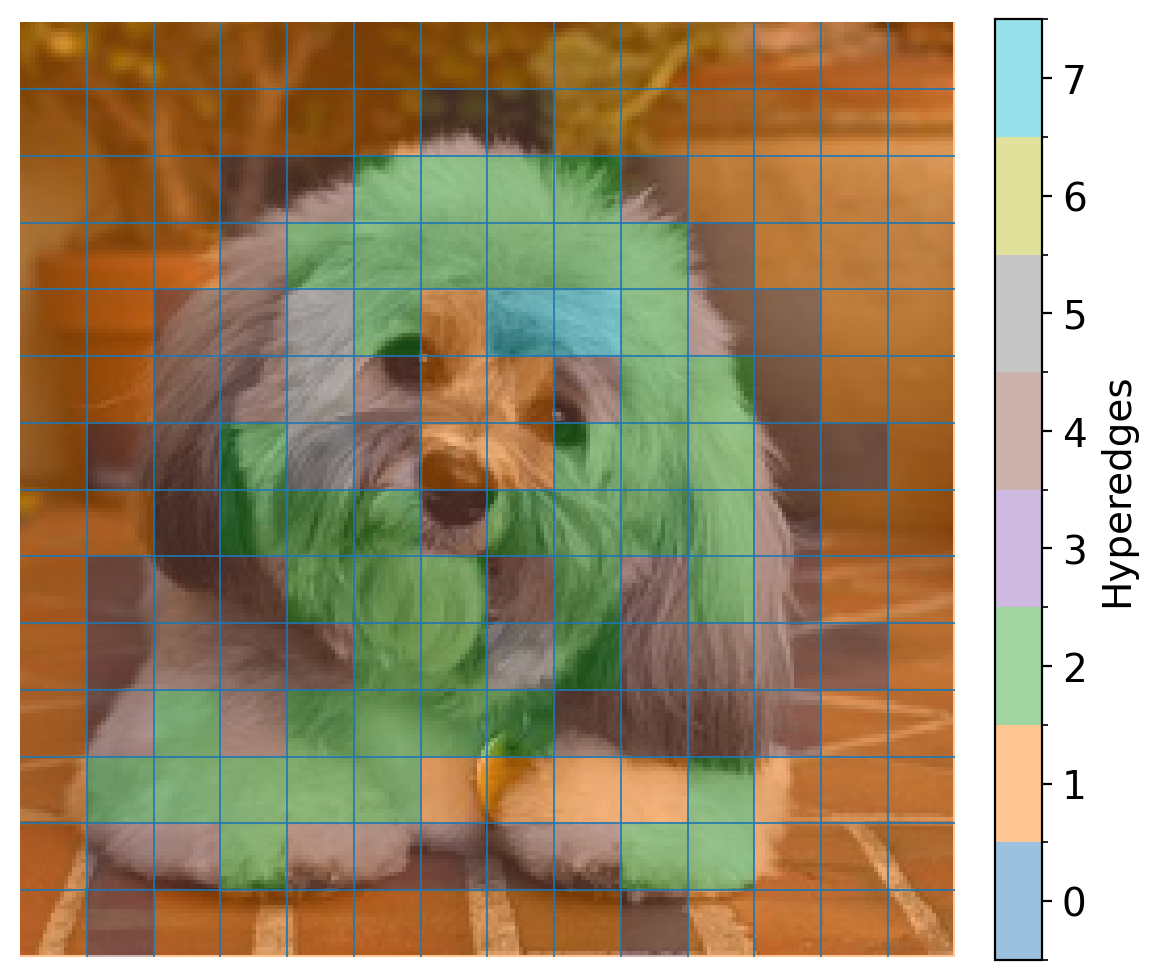} &
\img{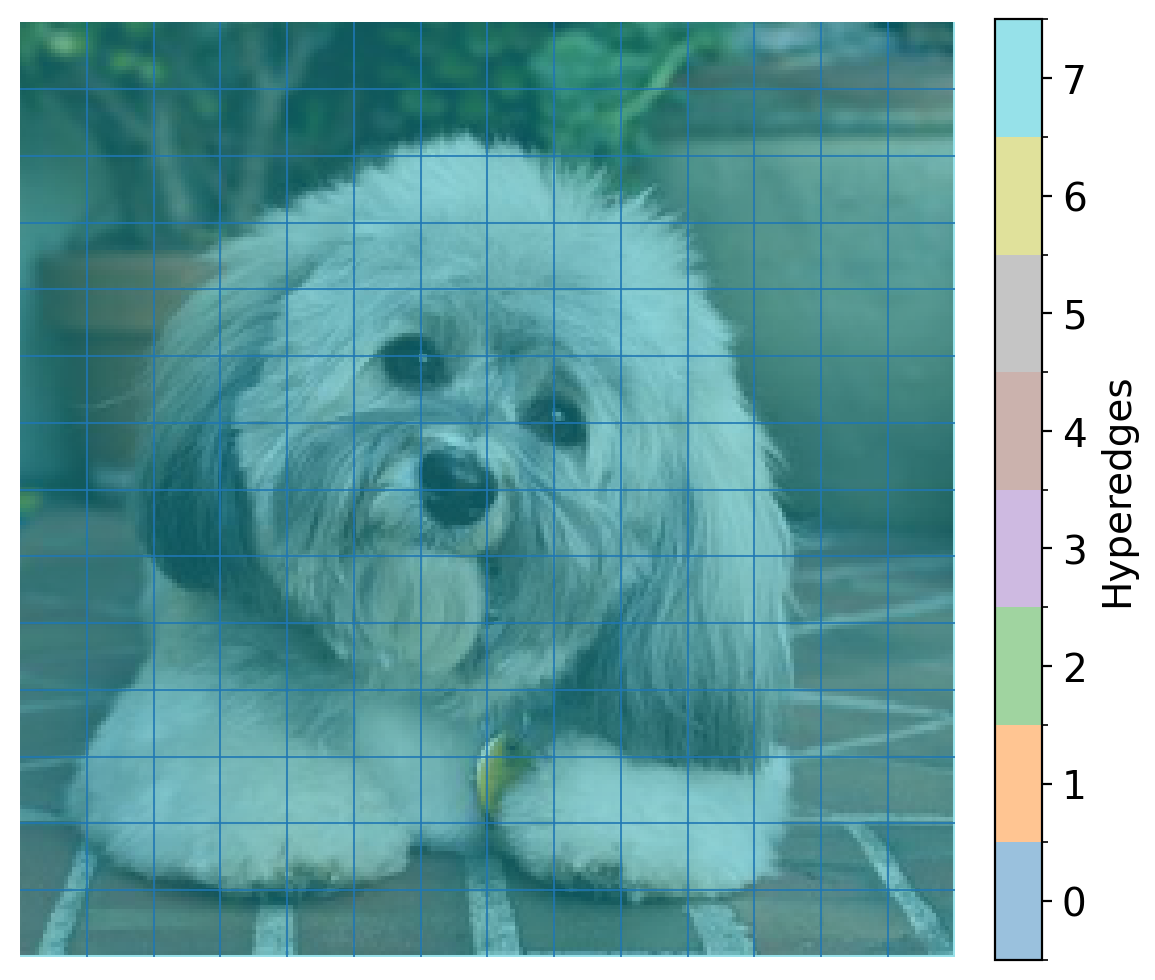} \\[-1pt]

\rowlab{Pets(MLP)} &
\img{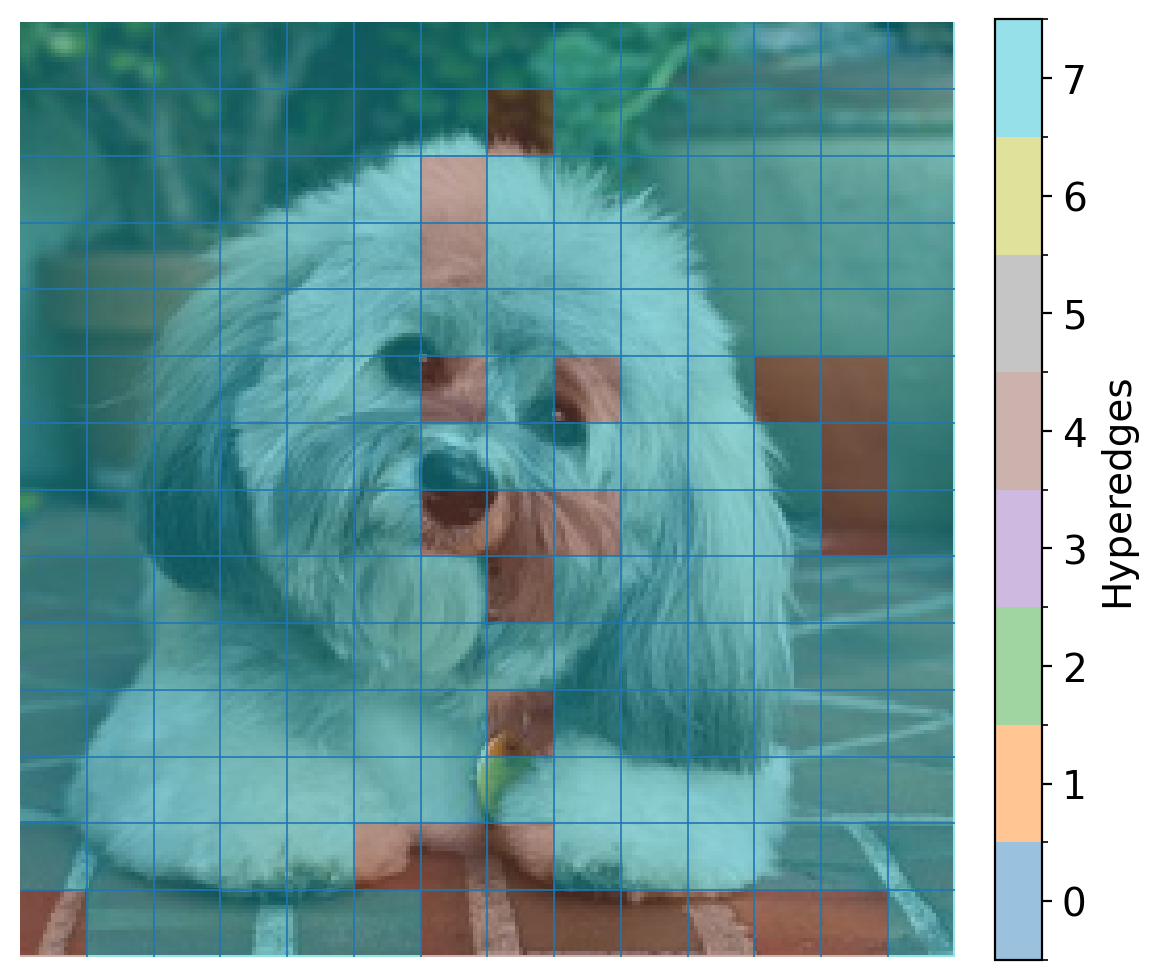} &
\img{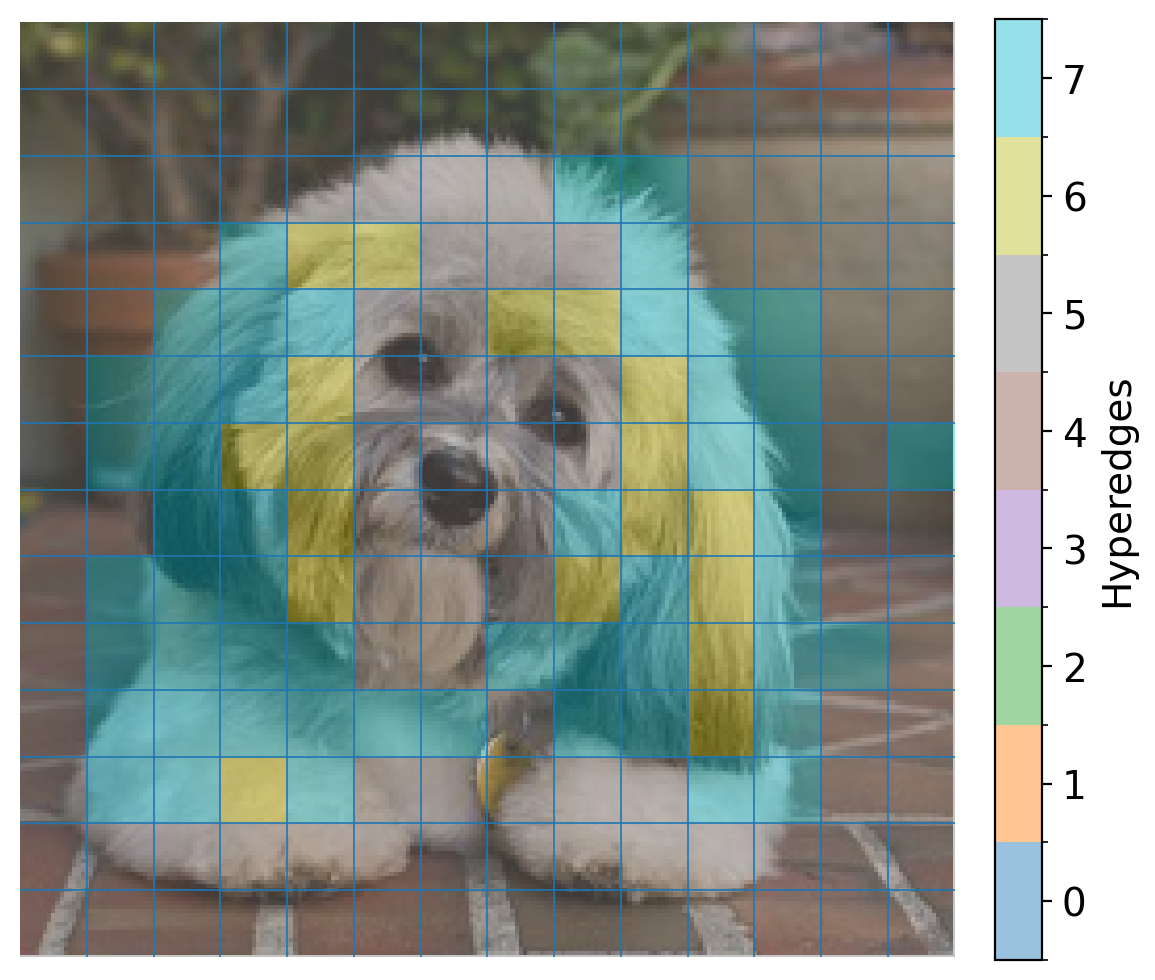} &
\img{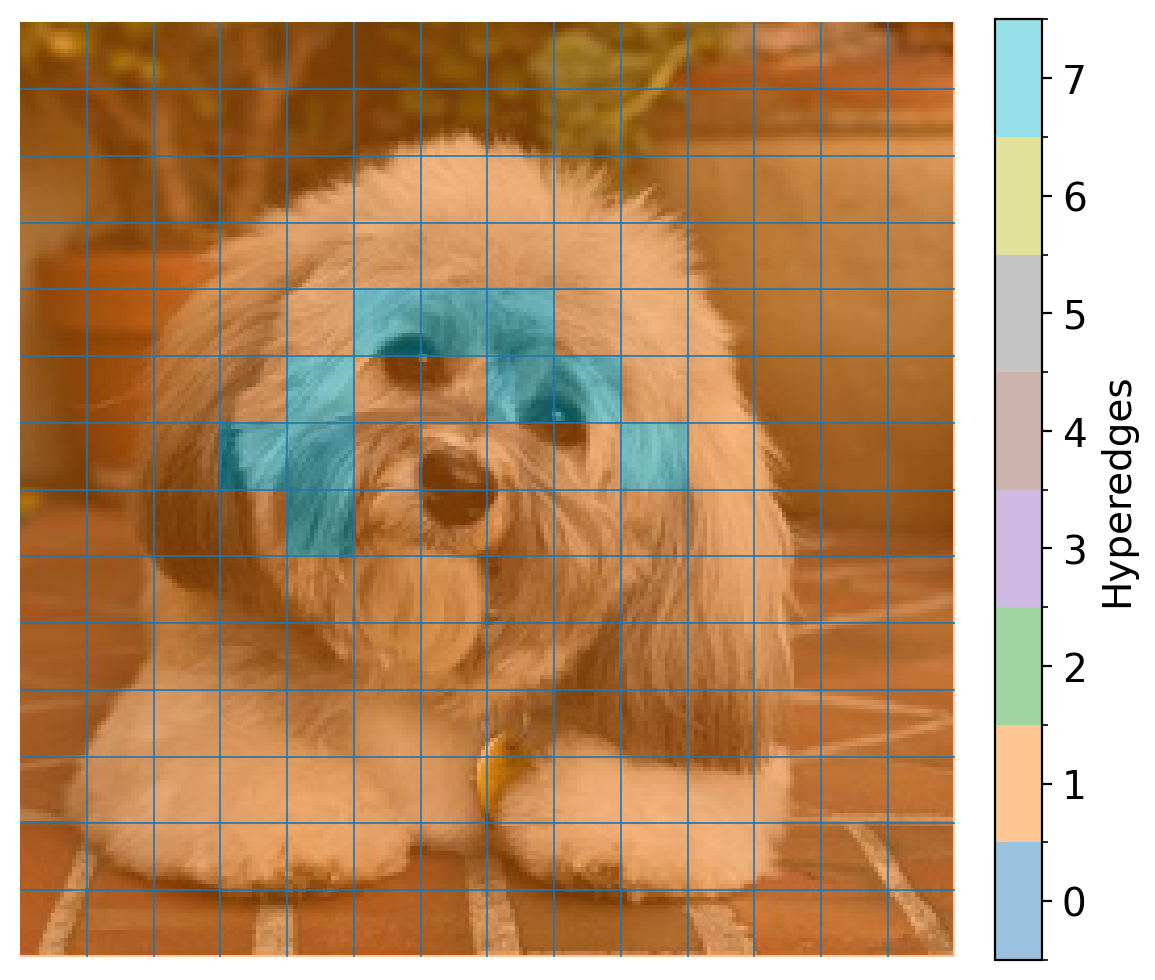} \\[-1pt]

\rowlab{KITTI(Attn)} &
\img{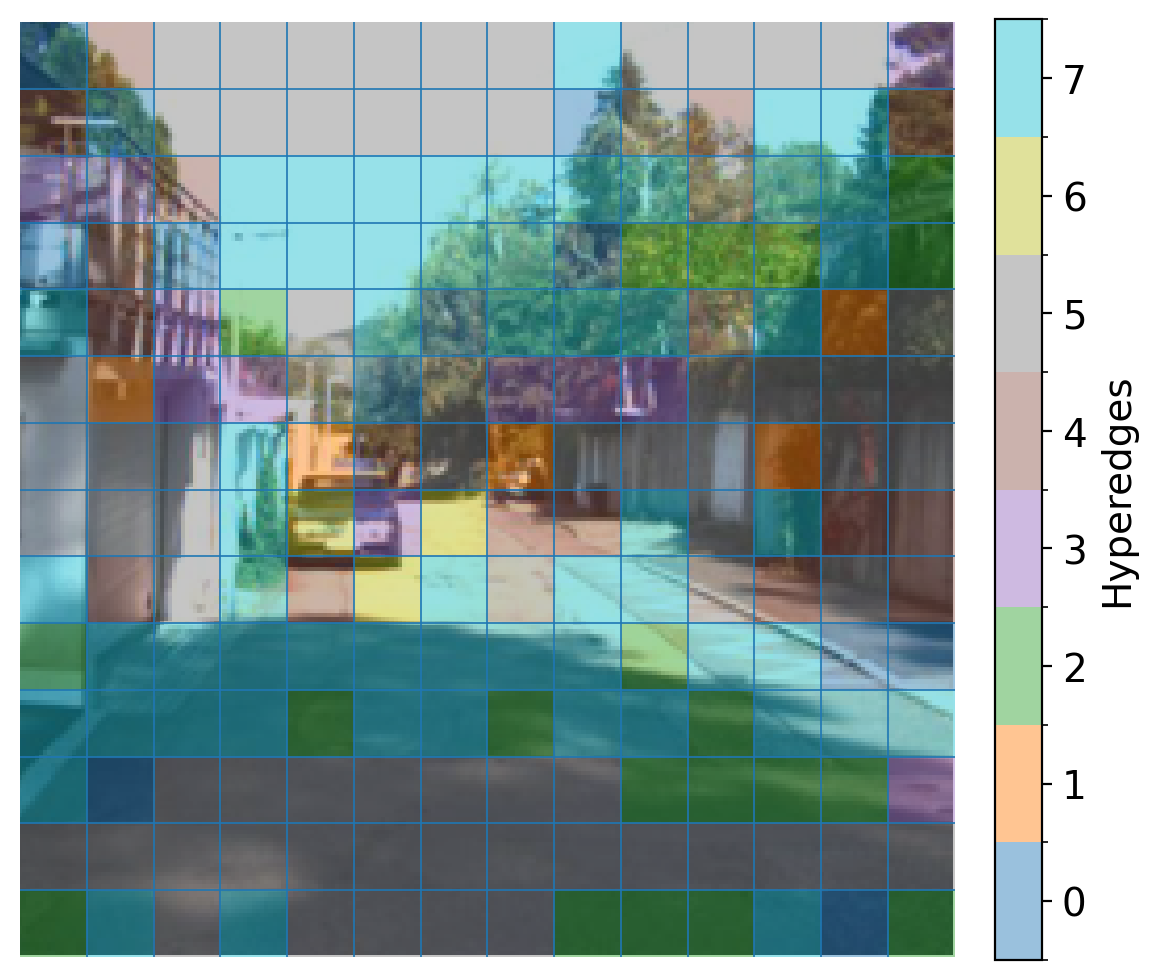} &
\img{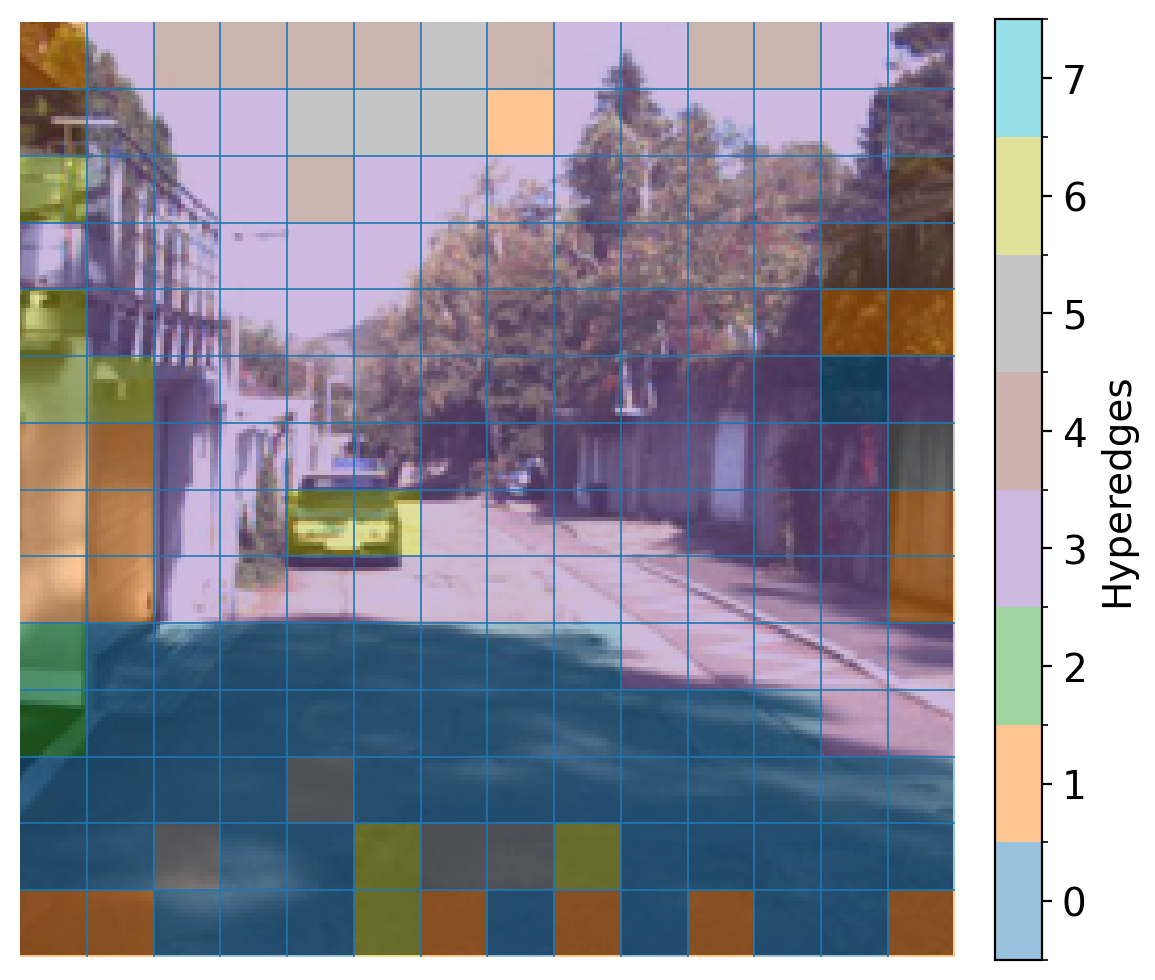} &
\img{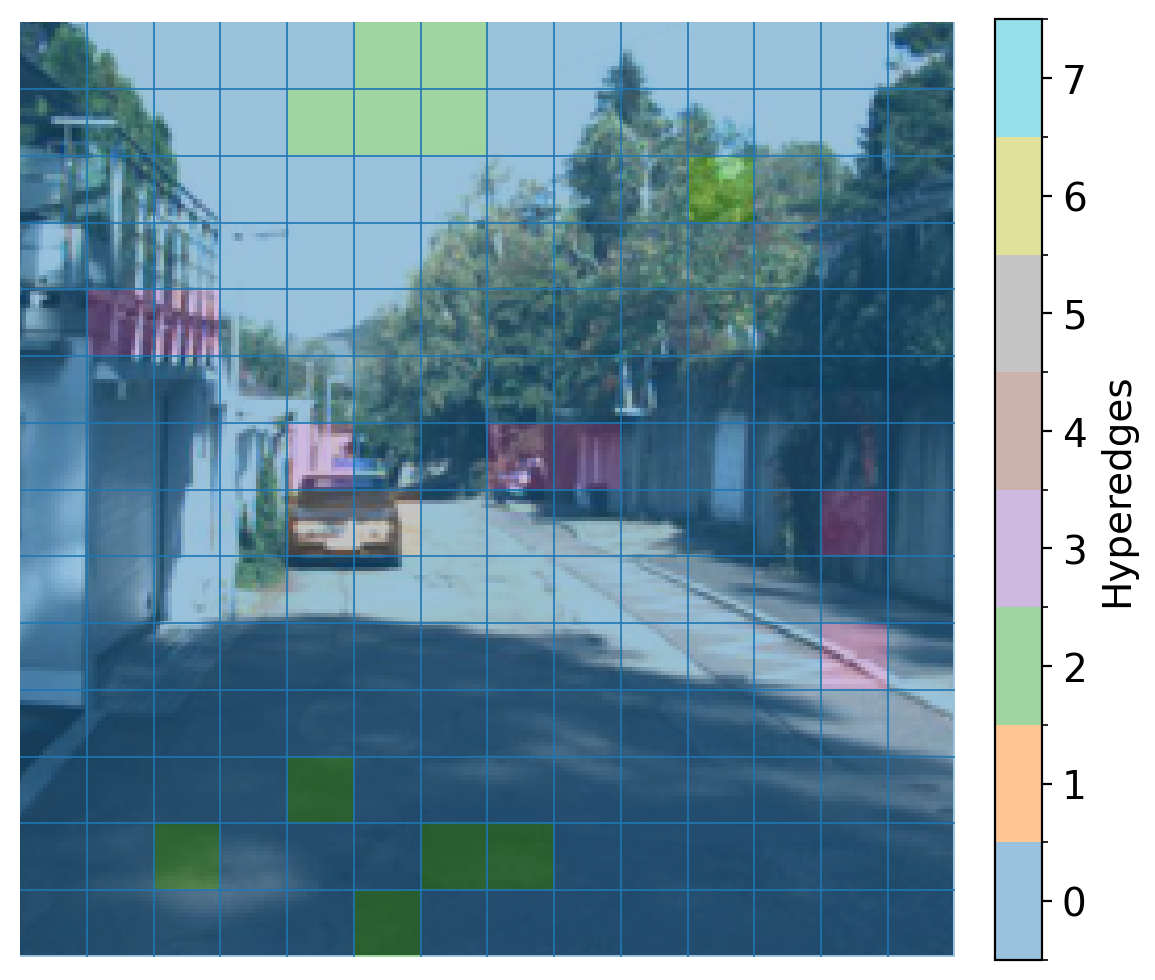} \\[-1pt]

\rowlab{KITTI(MLP)} &
\img{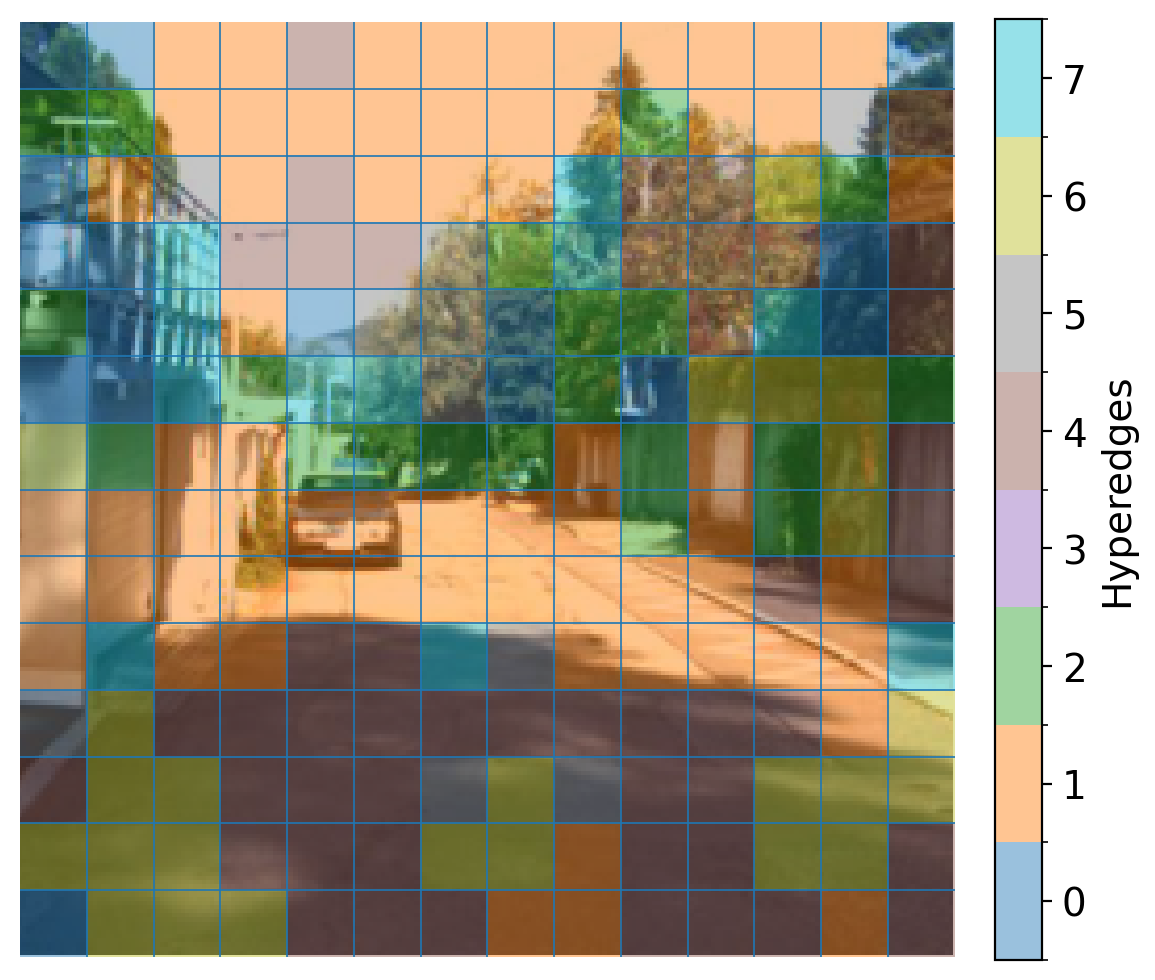} &
\img{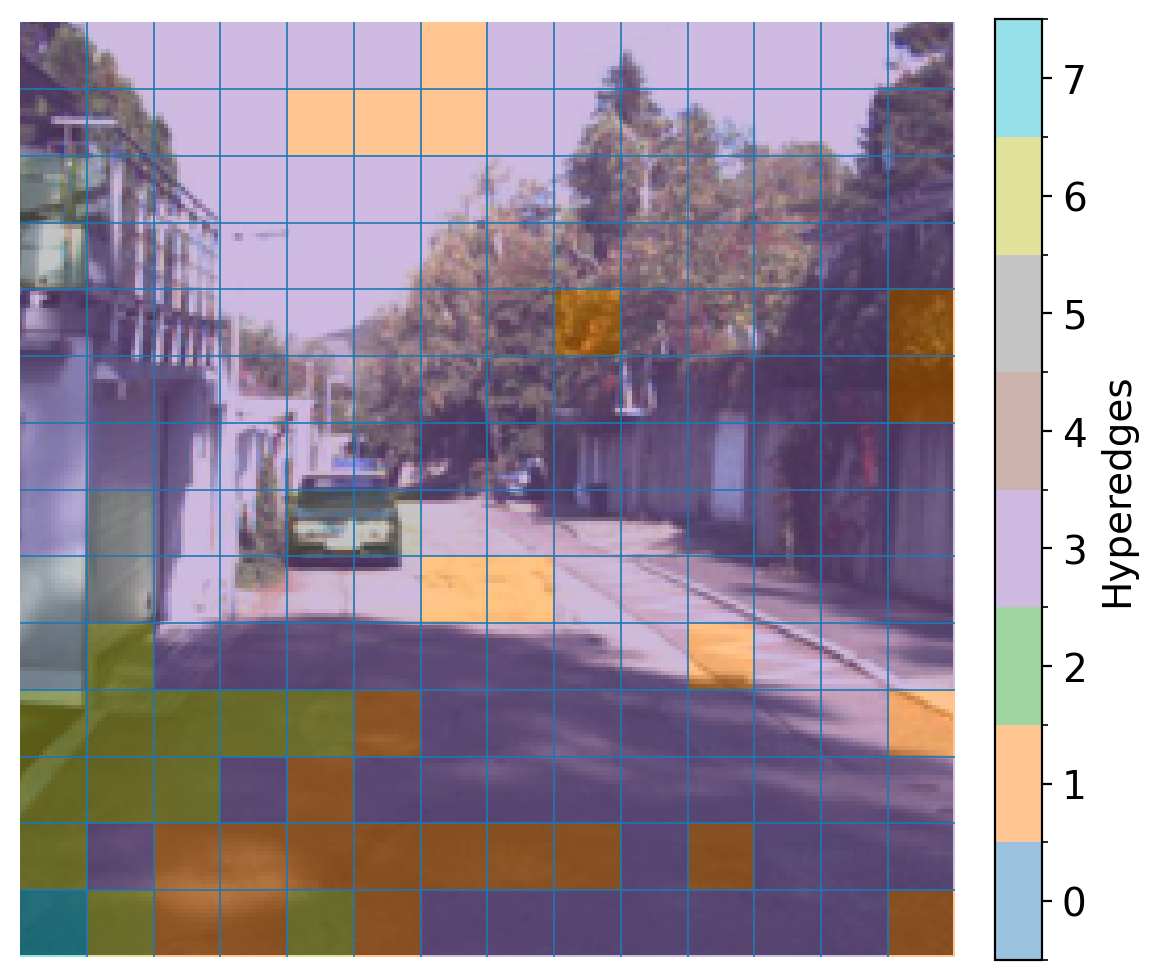} &
\img{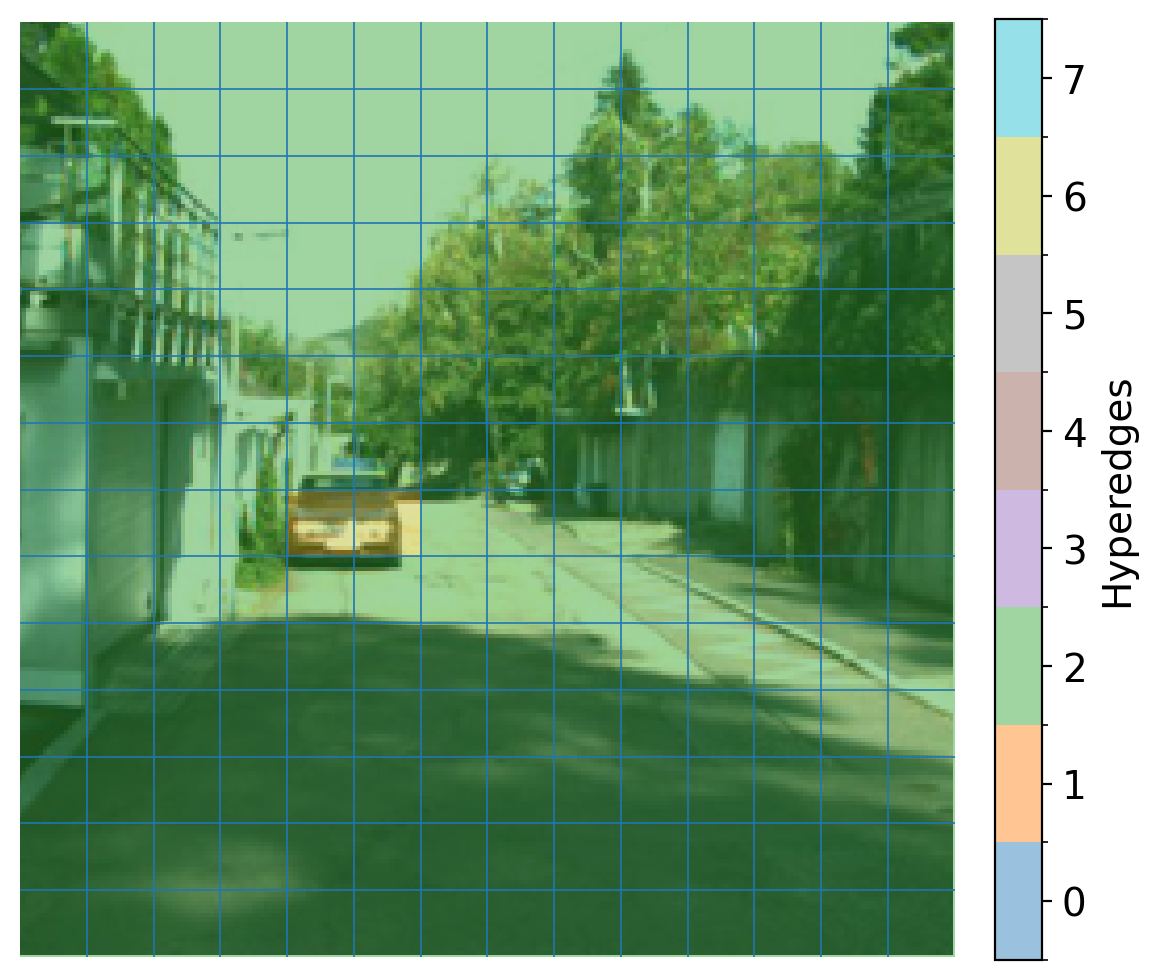} \\
\end{tabular}
\end{adjustbox}
\caption{Patch-grid routing visualization.
Each patch is colored according to the hyperedge receiving the highest routing probability.
We show representative routing patterns from Blocks 1, 6, and 12 across datasets and modules.
Early layers exhibit fragmented assignments, while deeper layers form more coherent spatial
groups, indicating that HyperAdapter organizes tokens into semantically meaningful hyperedge clusters. For clarity, we recommend zooming in.}
\label{fig:patch-grid}
\end{figure}

\subsection{Patch-Grid Routing Visualizations}
Fig.~\ref{fig:patch-grid} visualizes the spatial routing behavior of HyperAdapter by coloring each patch according to the hyperedge receiving the highest routing
probability. We show representative routing patterns across Blocks 1, 6, and 12 for multiple datasets and both attention and MLP adapters. 

Several consistent patterns emerge from these visualizations. In early transformer layers (e.g., Block 1), routing assignments appear fragmented, with patches distributed across multiple hyperedges. This behavior suggests
that low-level visual features remain broadly shared and are not yet organized into distinct semantic groups. As depth increases (Block 6), routing patterns become progressively more structured, with neighboring patches frequently assigned to the same hyperedge. This indicates that the model begins to capture spatially coherent regions corresponding to object parts
or contextual background structures.
In deeper layers (Block 12), the routing becomes more stable and semantically meaningful. Larger contiguous regions of the image are often assigned to the same hyperedge, suggesting that hyperedges specialize in aggregating semantically related tokens. Importantly, this behavior is observed across both attention and MLP adapters and remains consistent across datasets, indicating that the hyperedge routing mechanism learns dataset-agnostic grouping structures.

These visualizations provide qualitative evidence that HyperAdapter performs structured token grouping rather than independent token-wise adaptation, progressively organizing patch tokens into coherent hyperedge clusters as features propagate through the transformer.

\begin{figure}[tbp]
\vspace{-0.8cm}
\centering
\begin{tabular}{cccc}
\textbf{Original} & \textbf{Baseline} & \textbf{AdaptFormer} &\textbf{Ours} \\[2mm]
\includegraphics[width=0.24\linewidth,height=3.0cm]{img/daam/flowers/flower.jpg} &
\includegraphics[width=0.24\linewidth,height=3.0cm]{img/daam/flowers/baseline/baseline_DAAM_Block12.jpg} &
\includegraphics[width=0.24\linewidth,height=3.0cm]{img/daam/flowers/adaptformer/adaptformer_DAAM_Block12.jpg} &
\includegraphics[width=0.24\linewidth,height=3.0cm]{img/daam/flowers/hyperadapter/hyperadapter_DAAM_Block12.jpg} \\
\includegraphics[width=0.24\linewidth,height=3.0cm]{img/daam/pet/pet.jpg} &
\includegraphics[width=0.24\linewidth,height=3.0cm]{img/daam/pet/baseline/baseline_DAAM_Block12.jpg} &
\includegraphics[width=0.24\linewidth,height=3.0cm]{img/daam/pet/adaptformer/adaptformer_DAAM_Block12.jpg} &
\includegraphics[width=0.24\linewidth,height=3.0cm]{img/daam/pet/hyperadapter/hyperadapter_DAAM_Block12.jpg} \\
\includegraphics[width=0.24\linewidth,height=3.0cm]{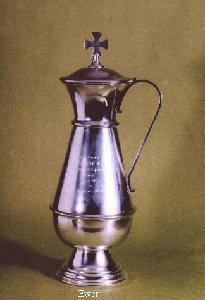} &
\includegraphics[width=0.24\linewidth,height=3.0cm]{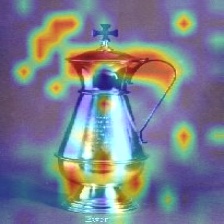} &
\includegraphics[width=0.24\linewidth,height=3.0cm]{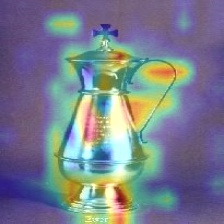} &
\includegraphics[width=0.24\linewidth,height=3.0cm]{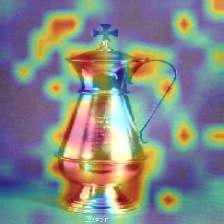} \\
\includegraphics[width=0.24\linewidth,height=3.0cm]{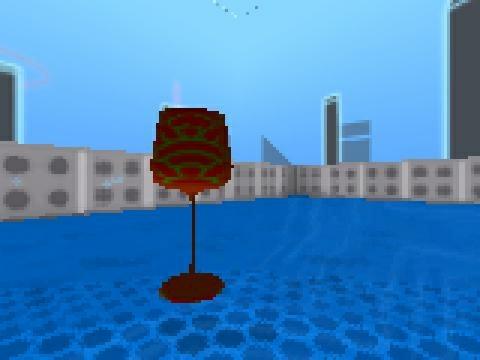} &
\includegraphics[width=0.24\linewidth,height=3.0cm]{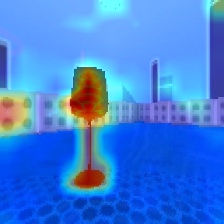} &
\includegraphics[width=0.24\linewidth,height=3.0cm]{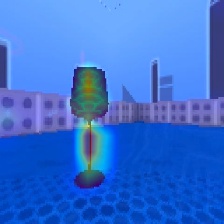} &
\includegraphics[width=0.24\linewidth,height=3.0cm]{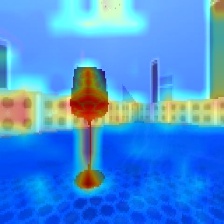} \\
\includegraphics[width=0.24\linewidth,height=3.0cm]{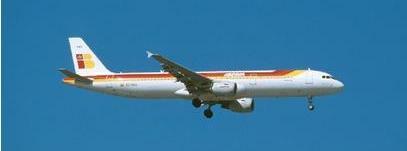} &
\includegraphics[width=0.24\linewidth,height=3.0cm]{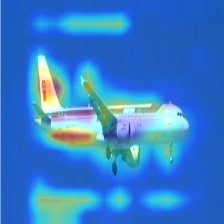} &
\includegraphics[width=0.24\linewidth,height=3.0cm]{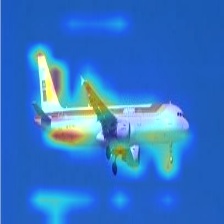} &
\includegraphics[width=0.24\linewidth,height=3.0cm]{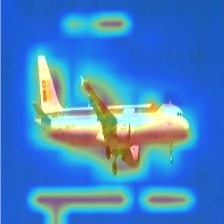} \\
\end{tabular}
\vspace{-0.2cm}
\caption{DAAM\cite{daam} visualizations comparing spatial attribution across PEFT methods. Columns show the original image, token-wise baseline, AdaptFormer, and HyperAdapter. HyperAdapter produces more concentrated and semantically aligned activations, highlighting relevant object regions while reducing background noise, reflecting the benefits of hyperedge-based routing.}
\label{fig:visualization_daam}
\end{figure}

\section{More Visualizations}
\subsection{Comparisons with Baseline and AdaptFormer}
Fig.~\ref{fig:visualization_daam} presents DAAM activation maps comparing the baseline, AdaptFormer, and our proposed HyperAdapter. Each visualization highlights the spatial regions that contribute most strongly to the model’s prediction. 

Several qualitative patterns emerge across examples. First, the baseline often produces relatively diffuse activations that spread across both object and background regions. This indicates that token updates are largely independent and may capture less structured spatial relationships. AdaptFormer improves localization in some cases, but the resulting attribution maps remain partially scattered, with activations occasionally extending into irrelevant background areas. In contrast, HyperAdapter consistently produces more concentrated and semantically aligned activations. The highlighted regions closely correspond to the primary object structures, such as the flower petals, the dog’s face, the teapot body, the street object, and the aircraft. This improved spatial focus suggests that hyperedge-based routing encourages tokens belonging to related semantic regions to be grouped and updated collectively. As a result, the model is able to emphasize coherent object parts while suppressing background noise. Across diverse image categories, the visualizations indicate that HyperAdapter leads to more structured and interpretable feature representations compared to conventional token-wise adaptation.

\begin{figure}[tbp]
\centering
\setlength{\tabcolsep}{1.2pt}        
\renewcommand{\arraystretch}{0.92}   
\newcommand{\imgH}{35mm}                
\newcommand{\rowlabw}{2.4cm}            
\newcommand{\imgcellw}{0.080\textwidth} 
\newcommand{\img}[1]{\includegraphics[height=\imgH,width=\imgcellw,keepaspectratio]{#1}}
\newcommand{\rowlab}[1]{\centering\scriptsize\bfseries #1}
\begin{adjustbox}{max width=\linewidth}
\begin{tabular}{@{} >{\centering\arraybackslash}m{\rowlabw} *{12}{>{\centering\arraybackslash}m{\imgcellw}} @{}}
& \scriptsize Block~1 & \scriptsize Block~2 & \scriptsize Block~3 &
\scriptsize Block~4 & \scriptsize Block~5 & \scriptsize Block~6 &
\scriptsize Block~7 & \scriptsize Block~8 & \scriptsize Block~9 &
\scriptsize Block~10 & \scriptsize Block~11 & \scriptsize Block~12 \\[2pt]
\rowlab{Baseline} &
\img{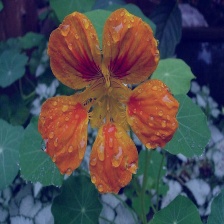} &
\img{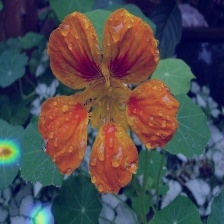} &
\img{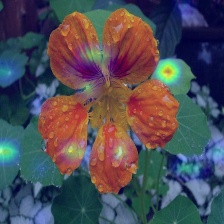} &
\img{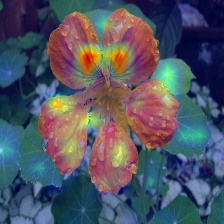} &
\img{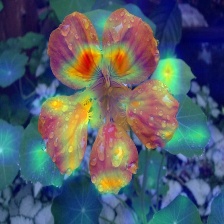} &
\img{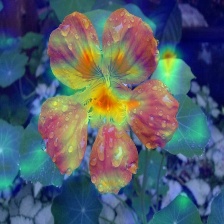} &
\img{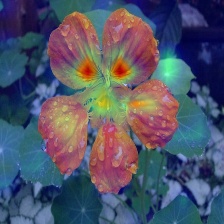} &
\img{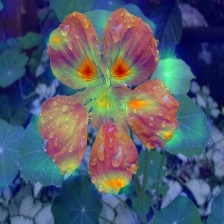} &
\img{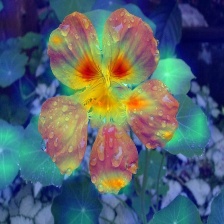} &
\img{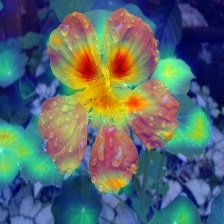} &
\img{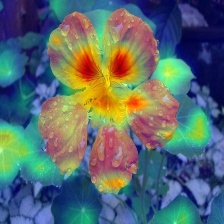} &
\img{img/daam/flowers/baseline/baseline_DAAM_Block12.jpg} \\[-1.2pt]

\rowlab{AdaptFormer} &
\img{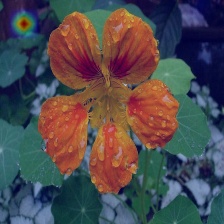} &
\img{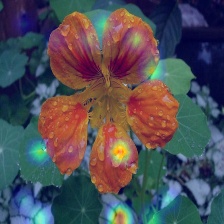} &
\img{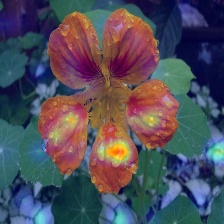} &
\img{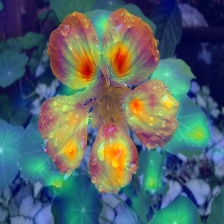} &
\img{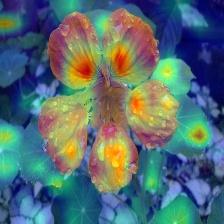} &
\img{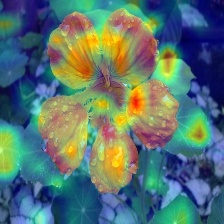} &
\img{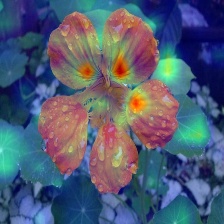} &
\img{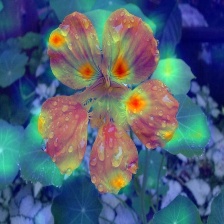} &
\img{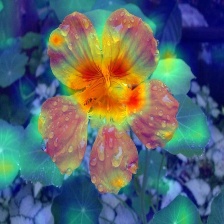} &
\img{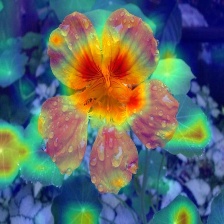} &
\img{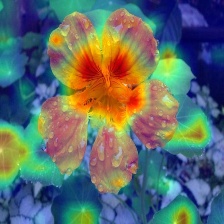} &
\img{img/daam/flowers/adaptformer/adaptformer_DAAM_Block12.jpg} \\[-0.6pt]

\rowlab{Ours} &
\img{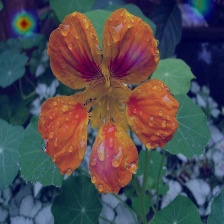} &
\img{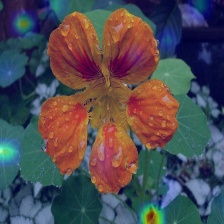} &
\img{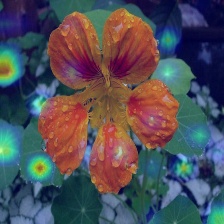} &
\img{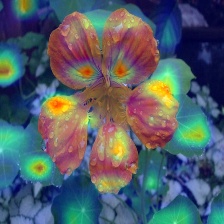} &
\img{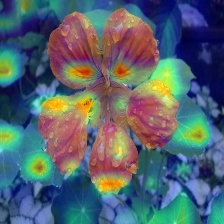} &
\img{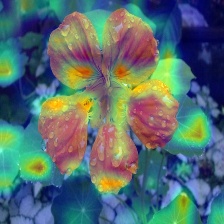} &
\img{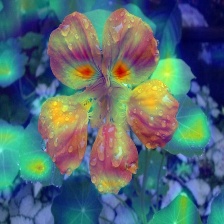} &
\img{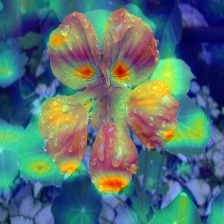} &
\img{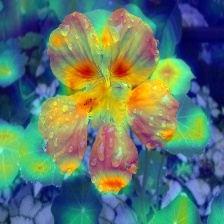} &
\img{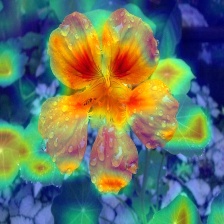} &
\img{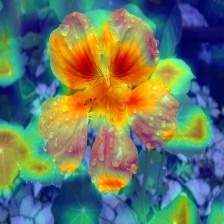} &
\img{img/daam/flowers/hyperadapter/hyperadapter_DAAM_Block12.jpg} \\[-0.6pt]

\rowlab{Baseline} &
\img{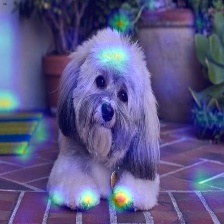} &
\img{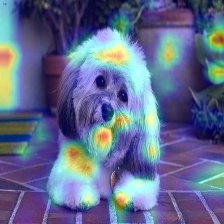} &
\img{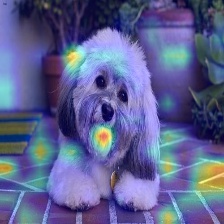} &
\img{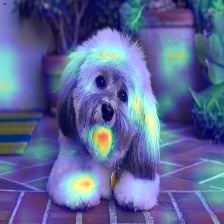} &
\img{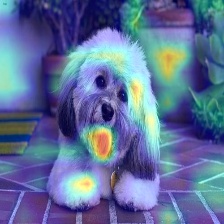} &
\img{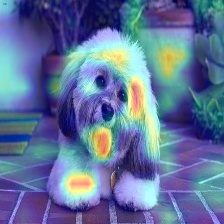} &
\img{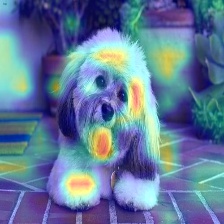} &
\img{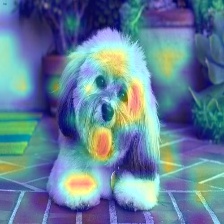} &
\img{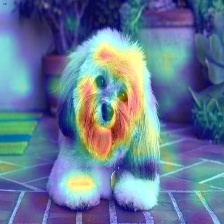} &
\img{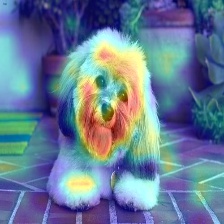} &
\img{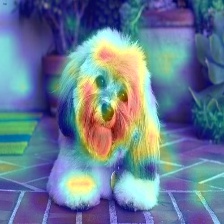} &
\img{img/daam/pet/baseline/baseline_DAAM_Block12.jpg} \\[-0.6pt]

\rowlab{AdaptFormer} &
\img{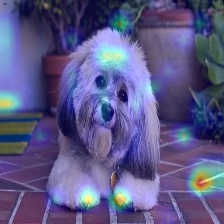} &
\img{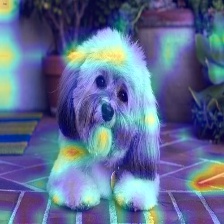} &
\img{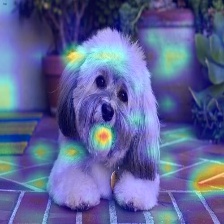} &
\img{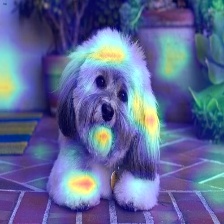} &
\img{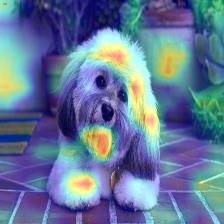} &
\img{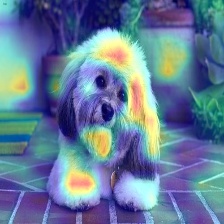} &
\img{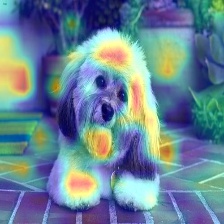} &
\img{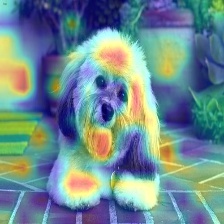} &
\img{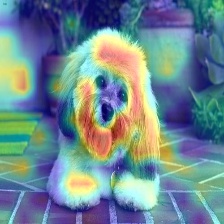} &
\img{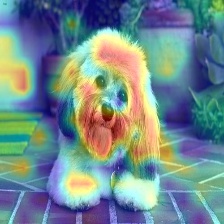} &
\img{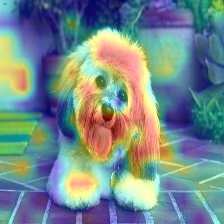} &
\img{img/daam/pet/adaptformer/adaptformer_DAAM_Block12.jpg} \\[-0.6pt]

\rowlab{Ours} &
\img{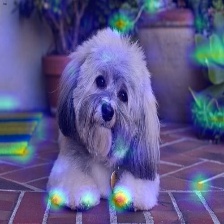} &
\img{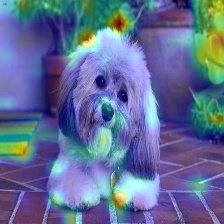} &
\img{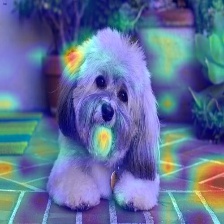} &
\img{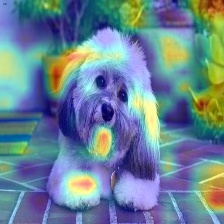} &
\img{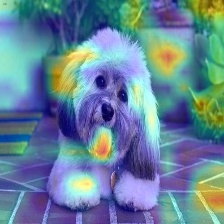} &
\img{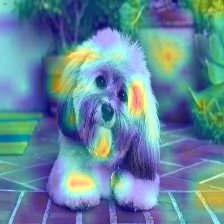} &
\img{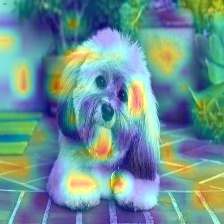} &
\img{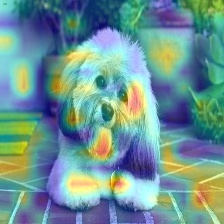} &
\img{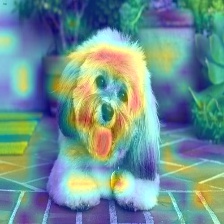} &
\img{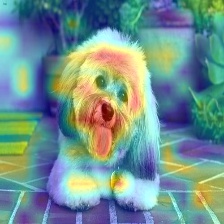} &
\img{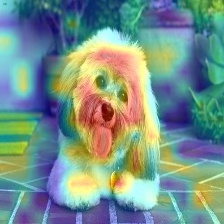} &
\img{img/daam/pet/hyperadapter/hyperadapter_DAAM_Block12.jpg} \\[-0.6pt]

\rowlab{Baseline} &
\img{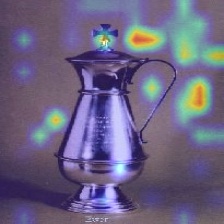} &
\img{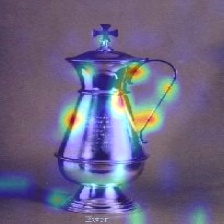} &
\img{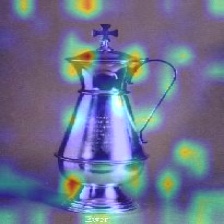} &
\img{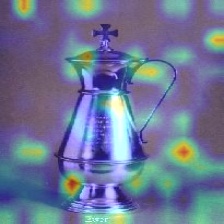} &
\img{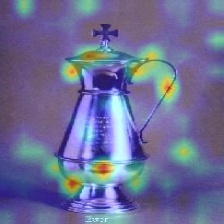} &
\img{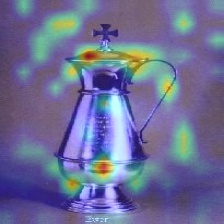} &
\img{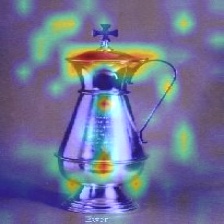} &
\img{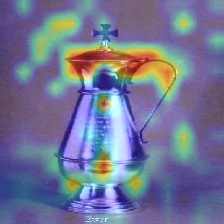} &
\img{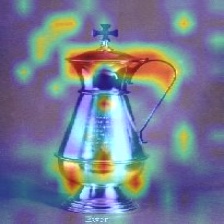} &
\img{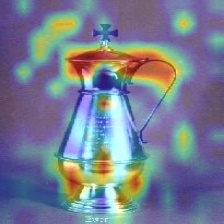} &
\img{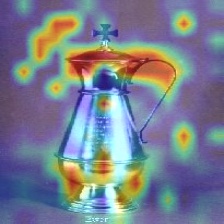} &
\img{img/daam/cup/baseline/baseline_DAAM_Block12.jpg} \\[-0.6pt]

\rowlab{AdaptFormer} &
\img{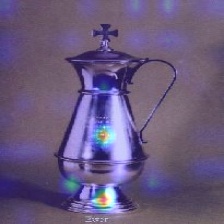} &
\img{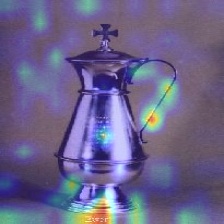} &
\img{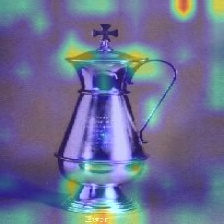} &
\img{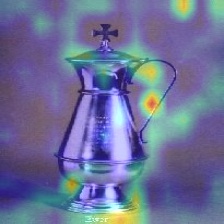} &
\img{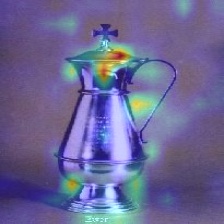} &
\img{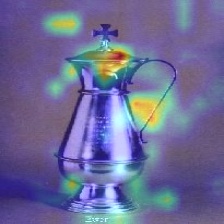} &
\img{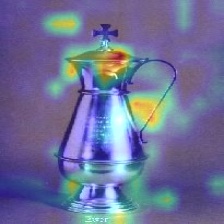} &
\img{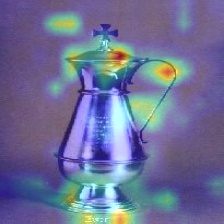} &
\img{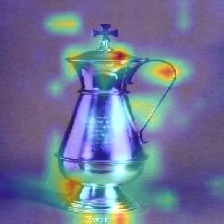} &
\img{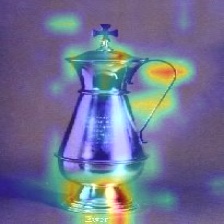} &
\img{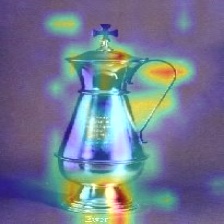} &
\img{img/daam/cup/adaptformer/adaptformer_DAAM_Block12.jpg} \\[-0.6pt]

\rowlab{Ours} &
\img{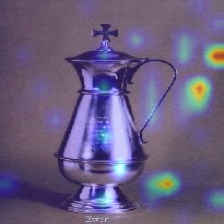} &
\img{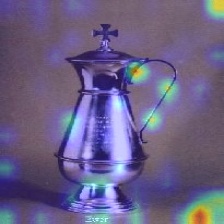} &
\img{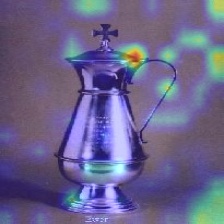} &
\img{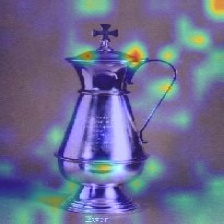} &
\img{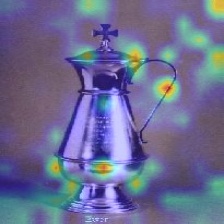} &
\img{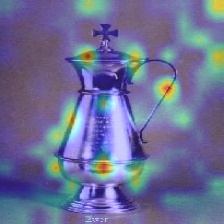} &
\img{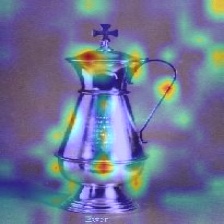} &
\img{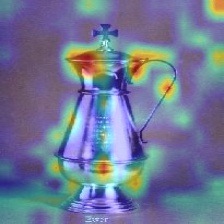} &
\img{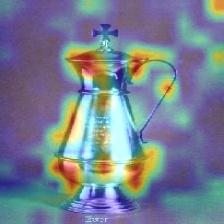} &
\img{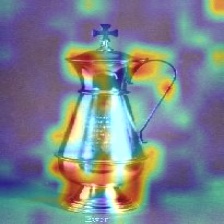} &
\img{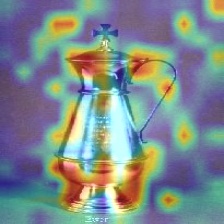} &
\img{img/daam/cup/hyperadapter/hyperadapter_DAAM_Block12.jpg} \\[-0.6pt]

\rowlab{Baseline} &
\img{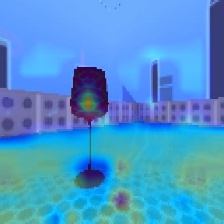} &
\img{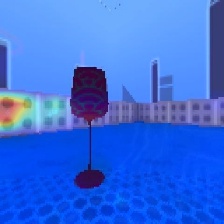} &
\img{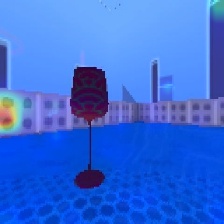} &
\img{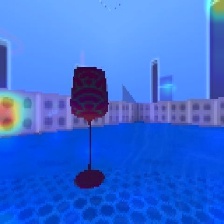} &
\img{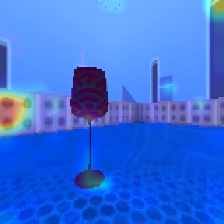} &
\img{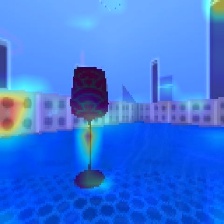} &
\img{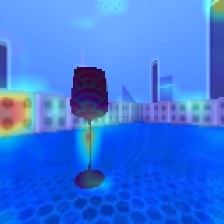} &
\img{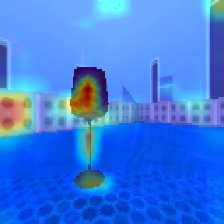} &
\img{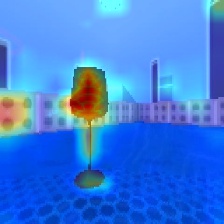} &
\img{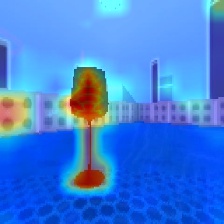} &
\img{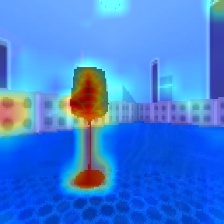} &
\img{img/daam/dmlab/baseline/baseline_DAAM_Block12.jpg} \\[-0.6pt]

\rowlab{AdaptFormer} &
\img{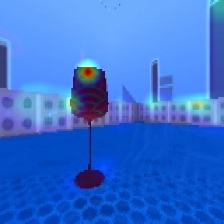} &
\img{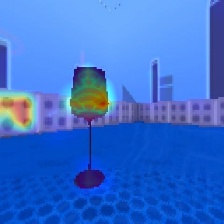} &
\img{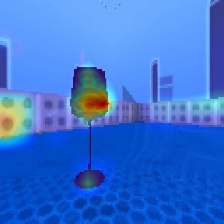} &
\img{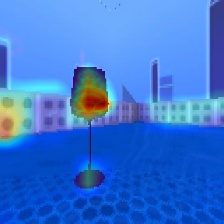} &
\img{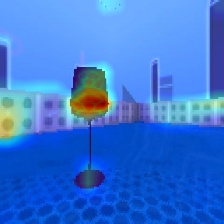} &
\img{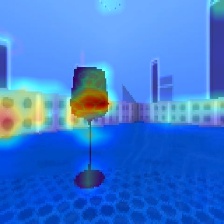} &
\img{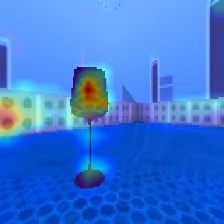} &
\img{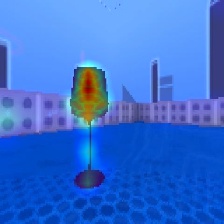} &
\img{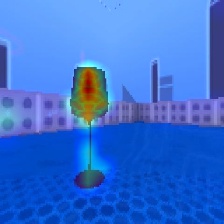} &
\img{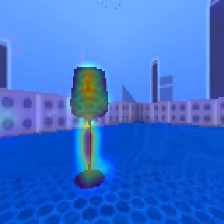} &
\img{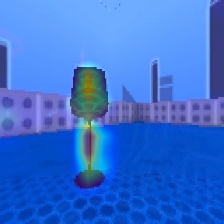} &
\img{img/daam/dmlab/adaptformer/adaptformer_DAAM_Block12.jpg} \\[-0.6pt]

\rowlab{Ours} &
\img{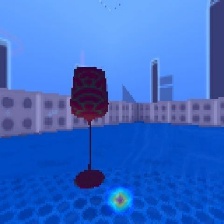} &
\img{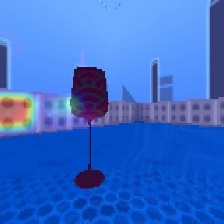} &
\img{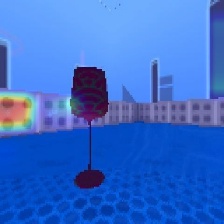} &
\img{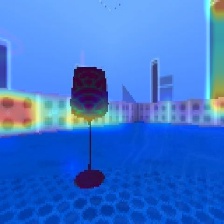} &
\img{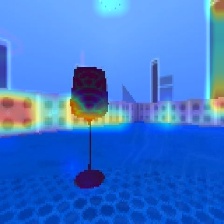} &
\img{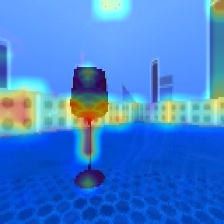} &
\img{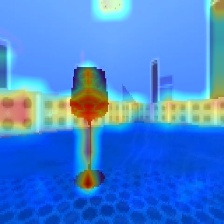} &
\img{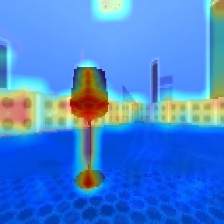} &
\img{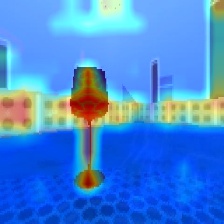} &
\img{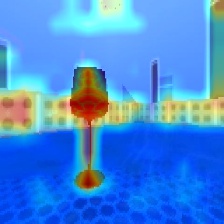} &
\img{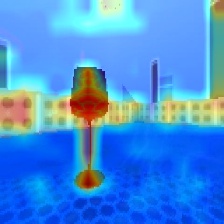} &
\img{img/daam/dmlab/hyperadapter/hyperadapter_DAAM_Block12.jpg} \\[-0.6pt]

\rowlab{Baseline} &
\img{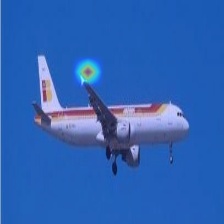} &
\img{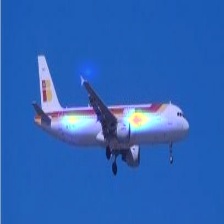} &
\img{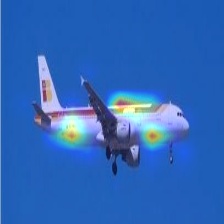} &
\img{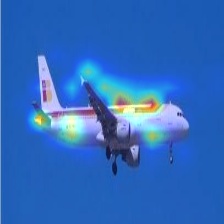} &
\img{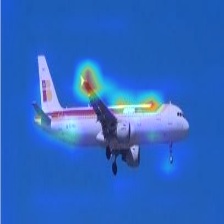} &
\img{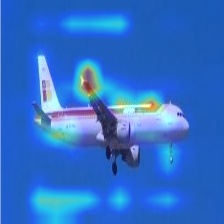} &
\img{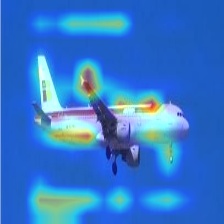} &
\img{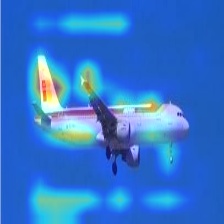} &
\img{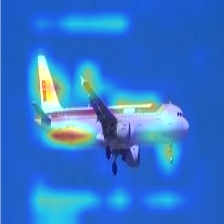} &
\img{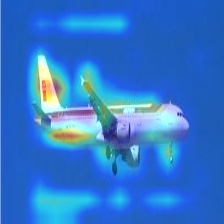} &
\img{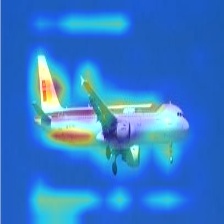} &
\img{img/daam/plane/baseline/baseline_DAAM_Block12.jpg} \\[-0.6pt]

\rowlab{AdaptFormer} &
\img{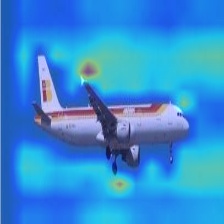} &
\img{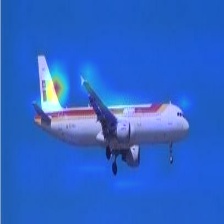} &
\img{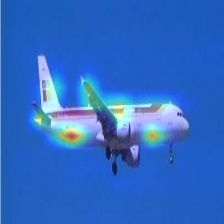} &
\img{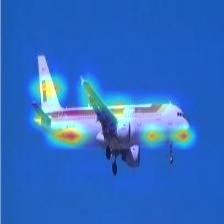} &
\img{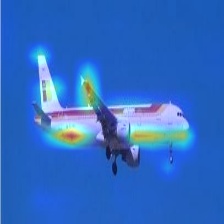} &
\img{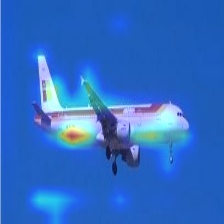} &
\img{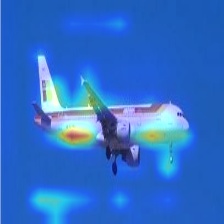} &
\img{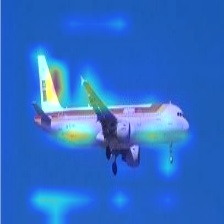} &
\img{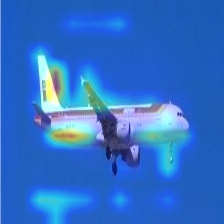} &
\img{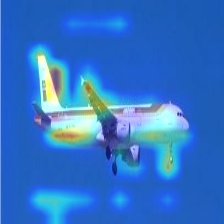} &
\img{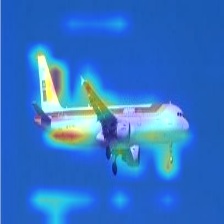} &
\img{img/daam/plane/adaptformer/adaptformer_DAAM_Block12.jpg} \\[-0.6pt]

\rowlab{Ours} &
\img{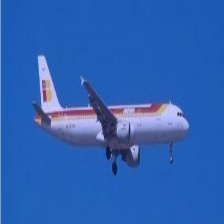} &
\img{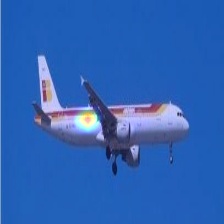} &
\img{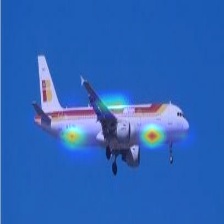} &
\img{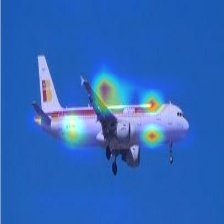} &
\img{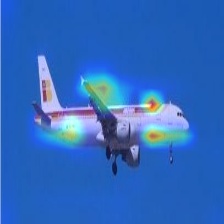} &
\img{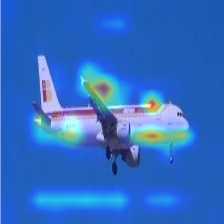} &
\img{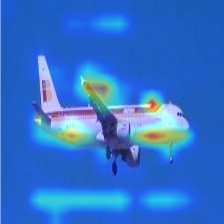} &
\img{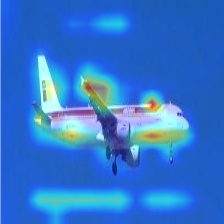} &
\img{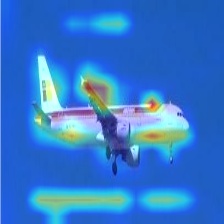} &
\img{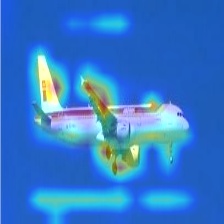} &
\img{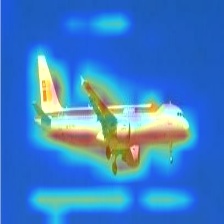} &
\img{img/daam/plane/hyperadapter/hyperadapter_DAAM_Block12.jpg} \\[-1.2pt]
\end{tabular}
\end{adjustbox}
\caption{DAAM \cite{daam} visualizations comparing HyperAdapter model with baseline and AdaptFormer models on VTAB-1K across all 12 transformer blocks.}
\label{fig:daam_layers}
\end{figure}

\subsection{Layer-wise DAAM Visualizations}
Fig.~\ref{fig:daam_layers} presents DAAM visualizations across all transformer blocks, comparing the  baseline, AdaptFormer, and the proposed HyperAdapter. The figure illustrates how spatial attribution evolves throughout the network depth. 

Across early transformer blocks (Blocks 1-4), all methods exhibit relatively diffuse activations distributed across broad image regions. This behavior reflects the low-level feature extraction stage, where representations remain largely shared across tokens. As depth increases, attribution maps become progressively more localized. In the baseline and AdaptFormer, however, activations often remain fragmented or spread across background areas even in deeper layers. In contrast, HyperAdapter produces increasingly concentrated and semantically aligned activations as features propagate through the network. In later blocks (Blocks 9-12), the attribution maps strongly focus on the primary object regions, such as the flower petals, the dog’s face, the teapot body, the street object, and the aircraft fuselage. Background activations are noticeably reduced compared to the other methods. This progressive sharpening of spatial attribution suggests that hyperedge-based routing encourages tokens belonging to related semantic regions to be grouped and updated collectively. As a result, HyperAdapter forms more coherent feature representations across layers, enabling clearer object localization and more interpretable model behavior. 

These observations provide qualitative evidence that hyperedge-level adaptation improves the spatial organization of token representations throughout the transformer hierarchy.

\section{Limitations and Future Work}

Despite its effectiveness, several limitations of HyperAdapter remain. First, the proposed hyperedge routing mechanism introduces additional computation compared to conventional token-wise adapters, as it requires token-to-hyperedge assignment, aggregation, and propagation operations. Although the overhead is modest in practice, further optimization of the routing mechanism could improve scalability for larger backbones and higher-resolution inputs.

Second, the current framework uses a fixed number of hyperedges $K$ across all layers and datasets. While our experiments show that moderate values of $K$ perform well, adaptive or data-dependent hyperedge  construction could potentially capture richer token relationships and 
improve flexibility across tasks.

Finally, our experiments focus primarily on vision classification benchmarks such as VTAB-1K and few-shot fine-grained visual recognition datasets. Extending HyperAdapter to other tasks, including dense prediction, video understanding, and multimodal learning, remains an interesting direction for future work.
\end{document}